  \tikzstyle{commentlink}=[draw opacity=1,draw=gray,thin] 
  \tikzstyle{comment}=[text opacity=1,fill
\colorlet{bijschriftkleur}{rood!80!black}
\newcommand{\afstandknopen}{1.9 cm}
\newcommand{\tssknoop}{3 cm}
\newcommand{\afstandvollelijn}{1 cm}
\tikzstyle{pijl1}=[draw=geel!0!black,line width=.6pt,
\tikzstyle{pijl2}=[draw=groen!50!black,line width=.6pt, postaction={decorate}, decoration={markings,mark=at position 1.0 with {\arrow[draw=groen!50!black,line width=1pt]{>}}}]
\tikzstyle{pijl3}=[draw=geel!0!black,dashed,line width=.6pt]
\tikzstyle{pijl4}=[draw=geel!0!black,line width=.6pt]
\tikzstyle{knoopobservatie}=[circle,draw=black!80,fill=geel!0,minimum size=.65cm,line width=.6pt]
\tikzstyle{knoopstate}=[circle,draw=black!80,fill=grijs!20,minimum size=.65cm,line width=.6pt]
\tikzstyle{knoop1}=[circle,draw=black!80,fill=grijs!20,minimum size=.65cm,line width=.6pt]
\tikzstyle{knoop2}=[circle,draw=black!80,fill=geel!0,minimum size=.65cm,line width=.6pt]
\tikzstyle{omcirkel}=[black!20,rounded corners=.8cm, densely dotted,
\tikzstyle{niets}=[circle,draw=black,line width=1pt,fill=white,minimum size=6mm,inner sep=2pt]
\tikzstyle{ja}=[niets,fill=groen!60]
\tikzstyle{nee}=[niets,fill=red!30]
\tikzstyle{boomlijn}=[draw=black,line width=1pt]
\tikzstyle{boomstippellijn}=[boomlijn,dashed]
\tikzstyle{boomlijngroen}=[boomlijn]
\tikzstyle{kplus}=[fill opacity=.2,fill=grijs!20,text
\tikzstyle{roodkader}=[fill opacity=.2,fill=red!40,text
\newcommand{\reals}{\mathbb{R}}
\newcommand{\signedprod}[2]{\overline{\underline{#1}}\odot#2}
\newcommand{\asa}{\Leftrightarrow}
\newcommand{\dan}{\Rightarrow}
\newcommand{\from}[1]{#1:n}
\newcommand{\fromto}[2]{#1:#2}
\newcommand{\sing}[1]{\{#1\}}
\newcommand{\ind}[1]{\mathbb{I}_{#1}}
\newcommand{\indsing}[1]{\ind{\sing{#1}}}
\newcommand{\set}[2]{\left\{#1\colon#2\right\}}
\newcommand{\biggset}[2]{\bigg\{#1\colon#2\bigg\}}
\newcommand{\statevar}[1]{X_{#1}}
\newcommand{\fromstatevar}[1]{X_{#1:n}}
\newcommand{\fromtostatevar}[2]{X_{#1:#2}}
\newcommand{\outvar}[1]{O_{#1}}
\newcommand{\fromoutvar}[1]{O_{#1:n}}
\newcommand{\fromtooutvar}[2]{O_{#1:#2}}
\newcommand{\states}[1]{\mathcal{X}_{#1}}
\newcommand{\outs}[1]{\mathcal{O}_{#1}}
\newcommand{\othervar}{Y}
\newcommand{\others}{\mathcal{Y}}
\newcommand{\fromstates}[1]{\states{\from{#1}}}
\newcommand{\fromouts}[1]{\outs{\from{#1}}}
\newcommand{\fromtostates}[2]{\states{\fromto{#1}{#2}}}
\newcommand{\fromtoouts}[2]{\outs{\fromto{#1}{#2}}}
\newcommand{\statelpr}[1]{\underline{Q}_{#1}}
\newcommand{\statepr}[1]{Q_{#1}}
\newcommand{\statelprone}[1][\cdot]{\statelpr{1}\left(#1\right)}
\newcommand{\stateprone}[1][\cdot]{\statepr{1}\left(#1\right)}
\newcommand{\stateclpr}[3][\cdot]{\underline{Q}_{#2}(#1\vert\statevar{#3})}
\newcommand{\zinstateclpr}[3][\cdot]{\underline{Q}_{#2}(#1\vert\zstate{#3})}
\newcommand{\biggzinstateclpr}[3][\cdot]{\underline{Q}_{#2}\bigg(#1\Big\vert\zstate{#3}\bigg)}
\newcommand{\xinstateclpr}[3][\cdot]{\underline{Q}_{#2}(#1\vert\xstate{#3})}
\newcommand{\stateupr}[1]{\overline{Q}_{#1}}
\newcommand{\stateuprone}[1][\cdot]{\stateupr{1}\left(#1\right)}
\newcommand{\zinstatecupr}[3][\cdot]{\overline{Q}_{#2}(#1\vert\zstate{#3})}
\newcommand{\xinstatecupr}[3][\cdot]{\overline{Q}_{#2}(#1\vert\xstate{#3})}
\newcommand{\zinstateclupr}[3][\cdot]{\overline{\underline{Q}}_{#2}(#1\vert\zstate{#3})}
\newcommand{\outclpr}[2][\cdot]{\underline{S}_{#2}(#1\vert\statevar{#2})}
\newcommand{\zinoutclpr}[2][\cdot]{\underline{S}_{#2}(#1\vert\zstate{#2})}
\newcommand{\xinoutclpr}[2][\cdot]{\underline{S}_{#2}(#1\vert\xstate{#2})}
\newcommand{\xhatinoutclpr}[2][\cdot]{\underline{S}_{#2}(#1\vert\xhatstate{#2})}
\newcommand{\zinoutcupr}[2][\cdot]{\overline{S}_{#2}(#1\vert\zstate{#2})}
\newcommand{\xinoutcupr}[2][\cdot]{\overline{S}_{#2}(#1\vert\xstate{#2})}
\newcommand{\xhatinoutcupr}[2][\cdot]{\overline{S}_{#2}(#1\vert\xhatstate{#2})}
\newcommand{\zinoutclupr}[2][\cdot]{\overline{\underline{S}}_{#2}(#1\vert\zstate{#2})}
\newcommand{\xinoutclupr}[2][\cdot]{\overline{\underline{S}}_{#2}(#1\vert\xstate{#2})}
\newcommand{\xhatinoutclupr}[2][\cdot]{\overline{\underline{S}}_{#2}(#1\vert\xhatstate{#2})}
\newcommand{\jointlpr}[1]{\underline{P}_{#1}}
\newcommand{\jointupr}[1]{\overline{P}_{#1}}
\newcommand{\jointlupr}[1]{\overline{\underline{P}}_{#1}}
\newcommand{\jointclpr}[3][\cdot]{\underline{P}_{#2}(#1\vert\statevar{#3})}
\newcommand{\zinjointclpr}[3][\cdot]{\underline{P}_{#2}(#1\vert\zstate{#3})}
\newcommand{\xinjointclpr}[3][\cdot]{\underline{P}_{#2}(#1\vert\xstate{#3})}
\newcommand{\xhatinjointclpr}[3][\cdot]{\underline{P}_{#2}(#1\vert\xhatstate{#3})}
\newcommand{\zinjointcupr}[3][\cdot]{\overline{P}_{#2}(#1\vert\zstate{#3})}
\newcommand{\xinjointcupr}[3][\cdot]{\overline{P}_{#2}(#1\vert\xstate{#3})}
\newcommand{\biggxinjointcupr}[3][\cdot]{\overline{P}_{#2}\bigg(#1\Big\vert\xstate{#3}\bigg)}
\newcommand{\xhatinjointcupr}[3][\cdot]{\overline{P}_{#2}(#1\vert\xhatstate{#3})}
\newcommand{\indclpr}[2][\cdot]{\underline{E}_{#2}(#1\vert\statevar{#2})}
\newcommand{\zinindclpr}[2][\cdot]{\underline{E}_{#2}(#1\vert\zstate{#2})}
\newcommand{\xinindclpr}[2][\cdot]{\underline{E}_{#2}(#1\vert\xstate{#2})}
\newcommand{\xhatinindclpr}[2][\cdot]{\underline{E}_{#2}(#1\vert\xhatstate{#2})}
\newcommand{\zinindcupr}[2][\cdot]{\overline{E}_{#2}(#1\vert\zstate{#2})}
\newcommand{\xinindcupr}[2][\cdot]{\overline{E}_{#2}(#1\vert\xstate{#2})}
\newcommand{\biggxinindcupr}[2][\cdot]{\overline{E}_{#2}\bigg(#1\Big\vert\xstate{#2}\bigg)}
\newcommand{\xhatinindcupr}[2][\cdot]{\overline{E}_{#2}(#1\vert\xhatstate{#2})}
\newcommand{\stategambles}[1]{\mathcal{G}(\states{#1})}
\newcommand{\fromstategambles}[1]{\stategambles{\from{#1}}}
\newcommand{\outgambles}[1]{\mathcal{G}(\outs{#1})}
\newcommand{\stateoutgambles}[2]{\mathcal{G}(\states{#1}\times\outs{#2})}
\newcommand{\fromstateoutgambles}[1]{\stateoutgambles{\from{#1}}{\from{#1}}}
\newcommand{\fromtostateoutgambles}[3]{\stateoutgambles{\fromto{#1}{#3}}{\fromto{#2}{#3}}}
\newcommand{\othergambles}{\mathcal{G}(\others)}
\newcommand{\xstate}[1]{x_{#1}}
\newcommand{\fromxstate}[1]{\xstate{\from{#1}}}
\newcommand{\fromtoxstate}[2]{\xstate{\fromto{#1}{#2}}}
\newcommand{\xhatstate}[1]{\hat{x}_{#1}}
\newcommand{\fromxhatstate}[1]{\xhatstate{\from{#1}}}
\newcommand{\fromtoxhatstate}[2]{\xhatstate{\fromto{#1}{#2}}}
\newcommand{\xbarstate}[1]{\overline{x}_{#1}}
\newcommand{\fromxbarstate}[1]{\xbarstate{\from{#1}}}
\newcommand{\zstate}[1]{z_{#1}}
\newcommand{\fromzstate}[1]{\zstate{\from{#1}}}
\newcommand{\fromtozstate}[2]{\xstate{\fromto{#1}{#2}}}
\newcommand{\out}[1]{o_{#1}}
\newcommand{\fromout}[1]{\out{\from{#1}}}
\newcommand{\fromtoout}[2]{\out{\fromto{#1}{#2}}}
\newcommand{\singxstate}[1]{\sing{\xstate{#1}}}
\newcommand{\singfromxstate}[1]{\sing{\fromxstate{#1}}}
\newcommand{\singxhatstate}[1]{\sing{\xhatstate{#1}}}
\newcommand{\singzstate}[1]{\sing{\zstate{#1}}}
\newcommand{\singfromzstate}[1]{\sing{\fromzstate{#1}}}
\newcommand{\singout}[1]{\sing{\out{#1}}}
\newcommand{\singfromout}[1]{\sing{\fromout{#1}}}
\newcommand{\indxstate}[1]{\indsing{\xstate{#1}}}
\newcommand{\indfromxstate}[1]{\indsing{\fromxstate{#1}}}
\newcommand{\indxhatstate}[1]{\indsing{\xhatstate{#1}}}
\newcommand{\indfromxhatstate}[1]{\indsing{\fromxhatstate{#1}}}
\newcommand{\indzstate}[1]{\indsing{\zstate{#1}}}
\newcommand{\indfromzstate}[1]{\indsing{\fromzstate{#1}}}
\newcommand{\indout}[1]{\indsing{\out{#1}}}
\newcommand{\indfromout}[1]{\indsing{\fromout{#1}}}
\newcommand{\globfromopt}[1][\fromstates{1}]{\optim\left(#1\vert\fromout{1}\right)}
\newcommand{\explfromopt}[4][\cdot]{\optim\left(#1\vert{#2}_{#3},\fromout{#4}\right)}
\newcommand{\fromopt}[3][\cdot]{\optim\left(#1\vert{#2}_{#3-1},\fromout{#3}\right)}
\newcommand{\opt}[3][\cdot]{\optim\left(#1\vert{#2}_{#3-1},\out{#3}\right)}
\newcommand{\explopt}[3][\cdot]{\optim\left(#1\vert{#2},\out{#3}\right)}
\newcommand{\frommog}[3][\cdot]{\mog\left(#1\vert{#2}_{#3-1},\fromout{#3}\right)}
\newcommand{\explfrommog}[4][\cdot]{\mog\left(#1\vert{#2}_{#3},\fromout{#4}\right)}
\newcommand{\frommogDoor}[4][\cdot]{\mog_{#4}\left(#1\vert{#2}_{#3-1},\fromout{#3}\right)}
\newcommand{\explfrommogDoor}[4][\cdot]{\mog_{#4}\left(#1\vert{#2},\fromout{#3}\right)}
\newcommand{\target}[1]{\Delta[\xstate{#1},\xhatstate{#1}]}
\newcommand{\fromtarget}[1]{\target{\from{#1}}}
\newcommand{\lmem}{\beta}
\newcommand{\umem}{\alpha}
\newcommand{\xlmem}[1]{\lmem(\xstate{#1})}
\newcommand{\xumem}[1]{\umem(\xstate{#1})}
\newcommand{\xlmemmax}[1]{\lmem_{#1}^{\mathrm{max}}(\xstate{#1})}
\newcommand{\xumemmax}[1]{\umem_{#1}^{\mathrm{max}}(\xstate{#1})}
\newcommand{\xumemmaxstar}[1]{\umem_{#1}^{\mathrm{max}}(\xstate{#1}^*)}
\newcommand{\zlmemmax}[1]{\lmem_{#1}^{\mathrm{max}}(\zstate{#1})}
\newcommand{\zumemmax}[1]{\umem_{#1}^{\mathrm{max}}(\zstate{#1})}
\newcommand{\explumemmax}[2]{\umem_{#1}^{\mathrm{max}}(#2)}
\newcommand{\xhatumem}[1]{\umem(\xhatstate{#1})}
\newcommand{\xhatumemmax}[1]{\umem_{#1}^{\mathrm{max}}(\xhatstate{#1})}
\newcommand{\zlmem}[1]{\lmem(\zstate{#1})}
\newcommand{\zumem}[1]{\umem(\zstate{#1})}
\newcommand{\specialumem}[2]{\umem_{#1}(#2)}
\newcommand{\fromxlmem}[1]{\lmem(\fromxstate{#1})}
\newcommand{\fromxumemstar}[1]{\umem_{#1}(\fromxstate{#1}^*)}
\newcommand{\fromxhatumem}[1]{\umem(\fromxhatstate{#1})}
\newcommand{\fromzlmem}[1]{\lmem(\fromzstate{#1})}
\newcommand{\fromzumem}[1]{\umem(\fromzstate{#1})}
\newcommand{\zinthreshold}[1]{\theta_{#1}(\xhatstate{#1},\xstate{#1}\vert\zstate{#1-1})}
\newcommand{\umemopt}[1]{\umem^{\mathrm{opt}}_{#1}(\xhatstate{#1}\vert\zstate{#1-1})}
\newcommand{\zumemopt}[1]{\umem^{\mathrm{opt}}_{#1}(\zstate{#1}\vert\zstate{#1-1})}
\newcommand{\xumemopt}[1]{\umem^{\mathrm{opt}}_{#1}(\xstate{#1}\vert\zstate{#1-1})}
\newcommand{\explxumemopt}[2]{\umem^{\mathrm{opt}}_{#1}(\xstate{#1}\vert#2)}
\newcommand{\xhatumemopt}[1]{\umem^{\mathrm{opt}}_{#1}(\xhatstate{#1}\vert\zstate{#1-1})}
\newcommand{\xhatinxhatumemopt}[1]{\umem^{\mathrm{opt}}_{#1}(\xhatstate{#1}\vert\xhatstate{#1-1})}
\newcommand{\specialumemopt}[2]{\umem^{\mathrm{opt}}_{#1}(#2\vert\zstate{#1-1})}
\newcommand{\explspecialumemopt}[3]{\umem^{\mathrm{opt}}_{#1}(#2\vert#3)}
\newcommand{\zumemoptfromto}[2]{\umem^{\mathrm{opt}}_{#1}(\zstate{\fromto{#1}{#2}}\vert\zstate{#1-1})}
\newcommand{\credalset}{{\mathcal{M}}}
\newcommand{\jointpr}[1]{{P}_{#1}}
\newtheorem{theorem}{Theorem}
\newtheorem{proposition}[theorem]{Proposition}
\newtheorem{lemma}[theorem]{Lemma}
\theoremstyle{definition}
\newtheorem{example}{Example}
\DeclareMathOperator{\optim}{opt}
\DeclareMathOperator*{\argmax}{argmax}
\DeclareMathOperator{\mog}{cand}
\DeclareMathOperator{\pos}{Pos}
\begin{document}
\title[Estimating state sequences in imprecise hidden Markov
models]{An efficient algorithm\\ for estimating state sequences in\\ imprecise hidden Markov models}
\author{Jasper De Bock}
\author{Gert de Cooman}
\begin{abstract}
  We present an efficient exact algorithm for estimating state sequences from outputs (or observations) in imprecise hidden Markov models (iHMM), where both the uncertainty linking one state to the next, and that linking a state to its output, are represented using coherent lower previsions.
  The notion of independence we associate with the credal network representing the iHMM is that of epistemic irrelevance. 
  We consider as best estimates for state sequences the (Walley--Sen) maximal sequences for the posterior joint state model conditioned on the observed output sequence, associated with a gain function that is the indicator of the state sequence. 
This corresponds to (and generalises) finding the state sequence with the highest posterior probability in HMMs with precise transition and output probabilities (pHMMs).
  We argue that the computational complexity is at worst quadratic in the length of the Markov chain, cubic in the number of states, and essentially linear in the number of maximal state sequences.
  For binary iHMMs, we investigate experimentally how the number of maximal state sequences depends on the model parameters. 
  We also present a simple toy application in optical character recognition, demonstrating that our algorithm can be used to robustify the inferences made by precise probability models.
\end{abstract}
\keywords{Hidden Markov model, state sequence estimation, imprecise
  probabilities, maximality, coherent lower previsions}
\maketitle

\section{Introduction}

\label{sec:introduction}
In Artificial Intelligence, probabilistic graphical models are becoming an increasingly powerful tool. 
Amongst these, hidden Markov models (HMMs) are definitely amongst the simplest, and perhaps also amongst the more popular ones.

An important application for HMMs involves finding the \emph{sequence} of (hidden) states with the highest posterior probability after observing a sequence of outputs \cite{rabiner1989}. 
For HMMs with precise local transition and emission probabilities, there is a quite efficient dynamic programming algorithm, due to Viterbi \cite{rabiner1989,viterbi1967}, for performing this task. 
For imprecise-probabilistic local models, such as coherent lower previsions, we know of no algorithm in the literature for which the computational complexity comes even close to that of Viterbi's.

In this paper, we take the first steps towards remedying this situation. 
We describe imprecise hidden Markov models as special cases of credal trees (a special case of credal networks) under epistemic irrelevance in Section~\ref{sec:basics}. 
We show in particular how we can use the ideas underlying the MePiCTIr\footnote{MePiCTIr: \underline{Me}ssage \underline{P}assing \underline{i}n \underline{C}redal \underline{T}rees under \underline{Ir}relevance.} algorithm \cite{cooman2009}, involving independent natural extension and marginal extension, to construct a most conservative joint model from imprecise local transition and emission models.
We also derive a number of interesting and useful formulas from that construction.

The results in Section~\ref{sec:basics} assume basic knowledge of the theory of coherent lower previsions, a generalisation of classical probability that allows for incomplete specification of probabilities. 
We include a short introduction to this theory in Section~\ref{sec:lpr}.

In Section~\ref{sec:optimal-sequences} we explain how a sequence of observations leads to (a collection of) so-called maximal state sequences.
Finding all of them seems a daunting task at first: it has a search space that grows exponentially in the length of the Markov chain.
However, in Section~\ref{sec:principle-of-optimality} we use the basic formulas found in Section~\ref{sec:basics} to derive an appropriate version of Bellman's \cite{bellman1957} Principle of Optimality, which allows for an exponential reduction of the search space. 
By using a number of additional tricks, we are able in Section~\ref{sec:algorithm} to devise the EstiHMM\footnote{\underline{Est}imation in \underline{i}mprecise \underline{H}idden \underline{M}arkov \underline{M}odels} algorithm, which efficiently constructs all maximal state sequences. 
We prove in Section~\ref{sec:complexity} that this algorithm is essentially linear in the number of maximal sequences, quadratic in the length of the chain, and cubic in the number of states.
We perceive this complexity to be comparable to that of the Viterbi algorithm, especially after realising that the latter makes the simplifying step of resolving ties more or less arbitrarily in order to produce only a single optimal state sequence.
This is something we will not allow our algorithm to do, for reasons that should become clear further on.  

In Section~\ref{sec:experiments}, we consider the special case of binary iHMMs, and investigate experimentally how the number of maximal state sequences depends on the model parameters.
We comment on the very interesting structures that emerge, and give them an heuristic explanation. 

We show off the algorithm's efficiency in Section~\ref{sec:example} by calculating the maximal sequences for a specific iHMM of length $100$. 

We conclude in Section~\ref{sec:app} with a simple toy application in optical character recognition. 
It demonstrates the advantages of our algorithm and gives a clear indication that the EstiHMM algorithm is able to robustify the existing Viterbi algorithm in an intelligent manner.

In order to make our main argumentation as readable as possible, we have relegated all technical proofs to an appendix.

\section{Freshening up on coherent lower previsions}\label{sec:lpr}
We begin with some basic theory of coherent lower previsions. See Ref.~\cite{walley1991} for an in-depth study, and Ref.~\cite{miranda2008a} for a recent survey.

Coherent lower previsions are a special type of imprecise probability model. 
Roughly speaking, whereas classical probability theory assumes that a subject's uncertainty can be represented by a single probability mass function, the theory of imprecise probabilities effectively works with sets of possible probability mass functions, and thereby allows for imprecision as well as indecision to be modelled and represented. 
For people who are unfamiliar with the theory, looking at it as a way of robustifying the classical theory is perhaps the easiest way to understand and interpret it, and we will use this approach here.

Consider a set $\credalset$ of probability mass functions, defined on a discrete set $\states{}$ of possible states. 
With each mass function $p\in\credalset$, we can associate a \emph{linear prevision} (or expectation operator) $\jointpr{p}$, defined on the set $\stategambles{}$ of all real-valued maps on $\states{}$.
Any $f\in\stategambles{}$ is also called a \emph{gamble} on $\states{}$, and $\jointpr{p}(f)\coloneqq\sum_{\xstate{}\in\states{}}p(\xstate{})f(\xstate{})$ is the expected value of $f$, associated with the probability mass function $p$.
We can now define the \emph{lower prevision} $\jointlpr{\credalset}$ that corresponds with the set $\credalset$ as the following \emph{lower envelope} of linear previsions:
\begin{equation}\label{def:onderprevisie}
  \jointlpr{\credalset}(f)\coloneqq\inf\set{\jointpr{p}(f)}{p\in\credalset}
  \text{ for all gambles $f$ in $\states{}$}.
\end{equation}
Similarly, we define the \emph{upper prevision} $\jointupr{\credalset}$ as
\begin{align}
  \jointupr{\credalset}(f)
  &\coloneqq\sup\set{\jointpr{p}(f)}{p\in\credalset}\notag\\
  &=-\inf\set{-\jointpr{p}(f)}{p\in\credalset}
  =-\inf\set{\jointpr{p}(-f)}{p\in\credalset}
  =-\jointlpr{\credalset}(-f) 
  \label{eq:toegevoegdheid}
\end{align}
for all gambles $f$ on $\states{}$. 
We will mostly talk about lower previsions, since it follows from the \emph{conjugacy relation}~\eqref{eq:toegevoegdheid} that the two models are mathematically equivalent.

An \emph{event} $A$ is a subset of the set of possible values $\states{}$: $A\subseteq\states{}$.
With such an event, we can associate an \emph{indicator} $\ind{A}$, which is the gamble on $\states{}$ that assumes the value $1$ on $A$, and $0$ outside $A$.
We call
\begin{equation*}
  \jointlpr{\credalset}(A)\coloneqq\jointlpr{\credalset}(\ind{A})
  =\inf\biggset{\sum_{\xstate{}\in A}p(x)}{p\in\credalset}
\end{equation*}
the \emph{lower probability} of the event $A$, and similarly $\jointupr{\credalset}(A)\coloneqq\jointupr{\credalset}(\ind{A})$ its \emph{upper probability}.

It can be shown \cite{walley1991} that the functional $\jointlpr{\credalset}$ satisfies the following set of interesting mathematical properties, which define a \emph{coherent lower prevision}:
\begin{enumerate}[label=\upshape C\arabic*.,ref=\upshape C\arabic*,leftmargin=*]
\item\label{C1} $\jointlpr{\credalset}(f)\geq\min{f}$ for all $f\in\stategambles{}$, 
\item\label{C2} $\jointlpr{\credalset}(\lambda{f})=\lambda\jointlpr{\credalset}(f)$ for all $f\in\stategambles{}$ and all real $\lambda\geq0$,\hfill[non-negative homogeneity]
\item\label{C3} $\jointlpr{\credalset}(f+g)\geq\jointlpr{\credalset}(f)+\jointlpr{\credalset}(g)$ for all $f,g\in\stategambles{}$.\hfill[superadditivity]
\end{enumerate}
Every set of mass functions $\credalset$ uniquely defines a coherent lower prevision $\jointlpr{\credalset}$, but in general the converse does not hold. 
However, if we limit ourselves to sets of mass functions $\credalset$ that are closed and convex---which makes them \emph{credal sets}---they are in a one-to-one correspondence with coherent lower previsions \cite{walley1991}. 
This implies that we can use the theory of coherent lower previsions as a tool for reasoning with closed convex sets of probability mass functions.
From now on, we will no longer explicitly refer to credal sets $\credalset$, but we will simply talk about coherent lower previsions $\jointlpr{}$.
It is useful to keep in mind that there always is a unique credal set that corresponds with such a coherent lower prevision: $\jointlpr{}=\jointlpr{\credalset}$ for some unique credal set $\credalset$, given by $\credalset=\set{p}{(\forall f\in\stategambles{})\jointpr{p}(f)\geq\jointlpr{}(f)}$.

A special kind of imprecise model on $\states{}$ is the \emph{vacuous} lower prevision. 
It is a model that represents complete ignorance and therefore has the set of all possible mass functions on $\states{}$ as its credal set $\credalset$. 
It can be shown easily that for every $f\in\stategambles{}$, the corresponding lower prevision is given by $\jointlpr{}(f)=\min f$.

Conditional lower and upper previsions, which are extensions of the classical conditional expectation functionals, can be defined in a similar, intuitively obvious way as lower envelopes associated with sets of conditional mass functions.

Consider a variable $\statevar{}$ in $\states{}$ and a variable $\othervar$ in $\others$. 
A \emph{conditional lower prevision} $\jointlpr{}(\cdot\vert Y)$ on the set $\stategambles{}$ of all gambles on $\states{}$ is a two-place real-valued function. 
For any gamble $f$ on $\states{}$, $\jointlpr{}(f\vert Y)$ is a gamble on $\others$, whose value $\jointlpr{}(g\vert y)$ in $y\in\others$ is the lower prevision of $g$, \emph{conditional on the event $Y=y$}. 
If for any $y\in\others$, the lower prevision $\jointlpr{}(\cdot\vert y)$ is coherent---satisfies conditions~\ref{C1}--\ref{C3}---then we call the conditional lower prevision $\jointlpr{}(\cdot\vert Y)$ \emph{separately coherent}. 
It will sometimes be useful to extend the domain of the conditional lower prevision $\jointlpr{}(\cdot\vert y)$ from $\stategambles{}$ to $\mathcal{G}(\states{}\times\others)$ by letting $\jointlpr{}(f\vert y)\coloneqq\jointlpr{}(f(\cdot,y)\vert y)$ for all gambles $f$ on $\states{}\times\others$.

If we have a number of conditional lower previsions involving a number of variables, then each of them must be separately coherent, but we also have to make sure that they satisfy a more stringent \emph{joint coherence} requirement. 
Explaining this in detail would take us too far, but we refer to Ref.~\cite{walley1991} for a detailed discussion, with motivation. 
For our present purposes, it suffices to say that joint coherence is very closely related to making sure that these conditional lower previsions are lower envelopes associated with conditional mass functions that satisfy Bayes's Rule.

For a given lower prevision $\jointlpr{}$ on $\mathcal{G}(\states{}\times\others)$, a corresponding conditional lower prevision $\jointlpr{}(\cdot\vert Y)$ that is jointly coherent with $\jointlpr{}$ is not uniquely defined. 
It is however shown in Ref.~\cite{miranda2009a} that it always lies between the so-called natural and regular extensions. 

Using \emph{natural extension}, the conditional coherent lower prevision $\jointlpr{}(\cdot\vert Y)$ is defined by $\jointlpr{}(f\vert y)\coloneqq\max\set{\mu\in\reals}{\jointlpr{}(\indsing{y}[f-\mu])\geq0}$ if $\jointlpr{}(\{y\})>0$, and it is vacuous and thus given by $\jointlpr{}(f\vert y)\coloneqq\min f$ if $\jointlpr{}(\{y\})=0$. 
This is the smallest (most conservative) way of conditioning a lower prevision. 
If $\jointlpr{}(\{y\})>0$, it corresponds to conditioning every probability mass function in the credal set of $\jointlpr{}$ on the observation that $Y = y$ and taking the lower envelope of all these conditioned mass functions.

Using \emph{regular extension}, the conditional coherent lower prevision $\jointlpr{}(\cdot\vert Y)$ is defined by $\jointlpr{}(f\vert y)\coloneqq\max\set{\mu\in\reals}{\jointlpr{}(\ind{y}[f-\mu])\geq0}$ if $\jointupr{}(\{y\})>0$, and it is vacuous if $\jointupr{}(\{y\})=0$. 
This gives us the greatest (most informative) conditional lower prevision that is jointly coherent with the original unconditional lower prevision. 
It corresponds to taking all mass functions $p$ in the credal set of $\jointlpr{}$ for which $p(y)\neq0$, conditioning them on the observation that $Y=y$ and taking their lower envelope.

Natural and regular extension coincide if $\jointlpr{}(\{y\})>0$ or $\jointupr{}(\{y\})=0$ but are different if $\jointupr{}(\{y\})>\jointlpr{}(\{y\})=0$. 
In the latter case, natural extension is vacuous, but regular extension usually remains more informative.

In this introduction, coherent lower previsions were interpreted as an alternative representation for closed and convex sets of probability mass functions. 
This approach is often adopted by sensitivity analysts and is rather intuitive for people who are used to working in classical probability theory. 
For the sake of completeness, we mention here that coherent lower previsions can also be given a behavioural interpretation, without using the notion of a probability mass function. 
The lower prevision $\jointlpr{}(f)$ of a gamble $f\in\stategambles{}$ can be interpreted as the supremum acceptable buying price that a subject is willing to pay in order to gain the (possibly negative) reward $f(x)$ after the outcome $x\in\states{}$ of the experiment has been determined. 
See Ref.~\cite{walley1991} for more information regarding this interpretation.

\section{Basic notions}
\label{sec:basics}
An imprecise hidden Markov model can be depicted using the following probabilistic graphical model:
\par
\begin{figure}[h]
  \centering
  \begin{tikzpicture}
    \tikzset{node distance=\afstandknopen, auto}
    \node[knoopstate] (1)  at (-1,1) {$\statevar{1}$};
    \node[knoopstate] (2) [right of = 1] {$\statevar{2}$};
    \node[knoopstate] (3)  at ($(2)+(1.5*\afstandknopen , 0)$) {$\statevar{k}$};
    \node[knoopstate] (4) at ($(3)+(1.5*\afstandknopen, 0)$)  {$\statevar{n}$};
    \node[knoopobservatie] (5) [below of = 1]  {$\outvar{1}$};
    \node[knoopobservatie] (6) [below of = 2] {$\outvar{2}$};
    \node[knoopobservatie] (7) [below of = 3]  {$\outvar{k}$};
    \node[knoopobservatie] (8) [below of = 4] {$\outvar{n}$};
    \draw[pijl1] (1) -- (2) ;
    \draw[pijl4] (2) to node[] {} ($(2)+(\afstandvollelijn, 0)$);
    \draw[pijl1] ($(3)+(-\afstandvollelijn,0)$) to node[] {} (3);
    \draw[pijl3] ($(2)+(\afstandvollelijn, 0)$) to node[] {} ($(3)+(-\afstandvollelijn,0)$) ;
    \draw[pijl4] (3) to node[] {} ($(3)+(\afstandvollelijn, 0)$);
    \draw[pijl1] ($(4)+(-\afstandvollelijn,0)$) to node[] {} (4);
    \draw[pijl3] ($(3)+(\afstandvollelijn, 0)$) to node[] {} ($(4)+(-\afstandvollelijn,0)$) ;
    \draw[pijl1] (1) -- (5) ;
    \draw[pijl1] (2) -- (6) ;
    \draw[pijl1] (3) -- (7) ;
    \draw[pijl1] (4) -- (8) ;
    \draw (1) node [above=15pt]{$\statelprone{}$};
    \draw (2) node [above=15pt]{$\stateclpr{2}{1}$};
    \draw (3) node [above=15pt]{$\stateclpr{k}{k-1}$};
    \draw (4) node [above=15pt]{$\stateclpr{n}{n-1}$};
    \draw (5) node [below=17pt]{$\outclpr{1}$};
    \draw (6) node [below=17pt]{$\outclpr{2}$};
    \draw (7) node [below=17pt]{$\outclpr{k}$};
    \draw (8) node [below=17pt]{$\outclpr{n}$};
    
    \draw (1) node [left=20pt]{State sequence:};
    \draw (5) node [left=20pt]{Output sequence:};
    
  \end{tikzpicture}
  \caption{Tree representation of a hidden Markov model}
  \label{fig:hmmtree}
\end{figure}
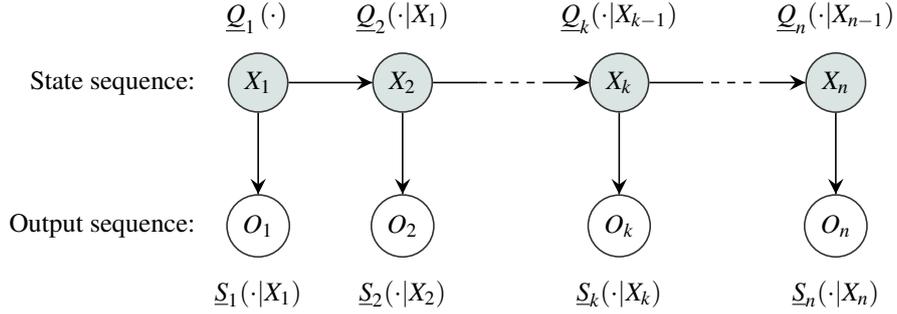
\noindent
Here $n$ is some natural number.
The \emph{state variables} $\statevar{1}$, \dots, $\statevar{n}$ assume values in the respective finite sets $\states{1}$, \dots, $\states{n}$, and the \emph{output variables}  $\outvar{1}$, \dots, $\outvar{n}$ assume values in the respective finite sets $\outs{1}$, \dots, $\outs{n}$.
We denote generic values of $\statevar{k}$ by $\xstate{k}$, $\xhatstate{k}$ or $\zstate{k}$, and generic values of $\outvar{k}$ by $\out{k}$.

\subsection{Local uncertainty models}\label{sec:local}
We assume that we have the following local uncertainty models for these variables.
For $\statevar{1}$, we have a \emph{marginal} lower prevision $\statelpr{1}$, defined on the set $\stategambles{1}$ of all real-valued maps (or \emph{gambles}) on $\states{1}$.
For the subsequent states $\statevar{k}$, with $k\in\{2,\dots,n\}$, we have a conditional lower prevision $\stateclpr{k}{k-1}$ defined on $\stategambles{k}$, called a \emph{transition model}.
In order to maintain uniformity of notation, we will also denote the marginal lower prevision $\statelpr{1}$ as a conditional lower prevision $\stateclpr{1}{0}$, where $\statevar{0}$ denotes a variable that may only assume a single value, and whose value is therefore certain. 
For any gamble $f_k$ in $\stategambles{k}$, $\stateclpr[f_k]{k}{k-1}$ is interpreted as a gamble on $\states{k-1}$, whose value $\zinstateclpr[f_k]{k}{k-1}$ in any $\zstate{k-1}\in\states{k-1}$ is the lower prevision of the gamble $f_k(\statevar{k})$, conditional on $\statevar{k-1}=\zstate{k-1}$.
\par
In addition, for each output $\outvar{k}$, with $k\in\{1,\dots,n\}$, we have a conditional lower prevision $\outclpr{k}$ defined on $\outgambles{k}$, called an \emph{emission model}.  
For any gamble $g_k$ in $\outgambles{k}$, $\outclpr[g_k]{k}$ is interpreted as a gamble on $\states{k}$, whose value $\zinoutclpr[g_k]{k}$ in any $\zstate{k}\in\states{k}$ is the lower prevision of the gamble $g_k(\outvar{k})$, conditional on $\statevar{k}=\zstate{k}$.
\par
We take all these local (marginal, transition and emission) uncertainty models to be \emph{separately coherent}. 
Recall that this simply means that for any $k\in\{1,\dots,n\}$, the lower prevision $\zinstateclpr{k}{k-1}$ should be coherent (as an unconditional lower prevision) for every $\zstate{k-1}\in\states{k-1}$ and $\zinoutclpr{k}$ should be coherent for every $\zstate{k}\in\states{k}$.

\subsection{Interpretation of the graphical structure}\label{sec:interpretation}
We will assume that the graphical representation in Figure~\ref{fig:hmmtree} represents the following irrelevance assessments: \emph{conditional on its mother variable, the non-parent non-descendants of any variable in the tree are epistemically irrelevant to this variable and its descendants.}
We say that a variable $\statevar{}$ is \emph{epistemically irrelevant} to a variable $\othervar$ if observing $\statevar{}$ does not affect our beliefs about $\othervar$. 
Mathematically stated in terms of lower previsions: $\jointlpr{}(f(\othervar))=\xinjointclpr[f(\othervar)]{}{}$ for all $f\in\othergambles$ and all $x\in\states{}$.

Before we go on, it will be useful to introduce some mathematical short-hand notation for describing joint variables in the tree of Figure~\ref{fig:hmmtree}.
For any $1\leq k\leq\ell\leq n$, we denote the tuple $(\statevar{k},\statevar{k+1},\dots,\statevar{\ell})$ by $\fromtostatevar{k}{\ell}$, and 
the tuple $(\outvar{k},\outvar{k+1},\dots,\outvar{\ell})$ by $\fromtooutvar{k}{\ell}$.
$\fromtostatevar{k}{\ell}$ is a (joint) variable that can assume all values in the set $\fromtostates{k}{\ell}\coloneqq\times_{r=k}^\ell\states{r}$, and $\fromtooutvar{k}{\ell}$ is a (joint) variable that can assume all values in the set $\fromtoouts{k}{\ell}\coloneqq\times_{r=k}^\ell\outs{r}$. 
Generic values of $\fromtostatevar{k}{\ell}$ are denoted by $\fromtoxstate{k}{\ell}$ or $\fromtozstate{k}{\ell}$, and generic values of $\fromtooutvar{k}{\ell}$ by $\fromtoout{k}{\ell}$.

\begin{example}
  Consider the variable $\statevar{k}$ with mother variable $\statevar{k-1}$ in Figure~\ref{fig:hmmtree}. 
  The variables $\fromtostatevar{1}{k-2}$ and $\fromtooutvar{1}{k-1}$ are its non-parent non-descendants, and the variables $\fromtostatevar{k+1}{n}$ and $\fromtooutvar{k}{n}$ its descendants. 
  Our interpretation of the graphical structure of Figure~\ref{fig:hmmtree} implies that once we know (conditional on) the value $\xstate{k_1}$ of $\statevar{k-1}$, additionally learning the values of any of the variables $\statevar{1}$, \dots, $\statevar{k-2}$ and $\outvar{1}$, \dots, $\outvar{k-1}$ will not change our beliefs about $\fromtostatevar{k}{n}$ and $\fromtooutvar{k}{n}$.  
  \hfill $\blacklozenge$
\end{example}

Epistemic irrelevance is weaker than the so-called \emph{strong independence} condition that is usually associated with \emph{credal networks} \cite{cozman2000}, which is the name usually given to probabilistic graphical models with coherent lower previsions as local uncertainty models.
Recent work \cite{cooman2009} has shown that using this weaker condition guarantees that an efficient algorithm exists for updating beliefs about a single target node of a credal \emph{tree}, that is essentially linear in the number of nodes in the tree.

\subsection{A joint uncertainty model}\label{sec:jointmodel}
Using the local uncertainty models, we now want to construct a global model: a joint lower prevision $\jointlpr{}$ on $\fromstateoutgambles{1}$ for all the variables $(\fromtostatevar{1}{n},\fromtooutvar{1}{n})$ in the tree. 
This joint lower prevision should (i) be jointly coherent with all the local models; (ii) encode all epistemic irrelevance assessments encoded in the tree; and (iii) be as small, or conservative,\footnote{Recall that point-wise smaller lower previsions correspond to larger credal sets.} as possible.
This is a special case of a more general problem for credal trees, discussed and solved in great detail in Ref.~\cite{cooman2009}.
In this section, we summarise the solution for iHMMs and give an heuristic justification for it, but we refer to Ref.~\cite{cooman2009} for a proof that the joint model we present below is indeed the most conservative lower prevision that is coherent with all the local models and captures all epistemic irrelevance assessments encoded in the tree.

We proceed in a recursive manner, and consider any $k\in\{1,\dots,n\}$. 
For any $\xstate{k-1}\in\states{k-1}$, we consider the smallest coherent joint lower prevision $\xinjointclpr{k}{k-1}$ on $\fromstateoutgambles{k}$ for the variables $(\fromstatevar{k},\fromoutvar{k})$ on the iHMM depicted in Figure~\ref{fig:hmmsubtree}, representing a subtree of the tree represented in Figure~\ref{fig:hmmtree}, with the lower prevision $\xinstateclpr{k}{k-1}$ acting as the marginal model for the `first' state variable $\statevar{k}$.
Note that the global model $\jointlpr{}$ we are looking for can be identified with the conditional lower prevision $\jointclpr{1}{0}$, for the reasons given in Section~\ref{sec:local}. 

\begin{figure}[h]
  \centering
  \begin{tikzpicture}[scale=0.9]
    \tikzset{node distance=\tssknoop, auto}
    \node[knoop1,minimum size=1cm] (1)  at (-1,1) {$\statevar{k}$};
    \node[knoop1,minimum size=1cm] (2)  at ($(1)+(1.5*\tssknoop,0)$) {$\statevar{k+1}$};
    \node[knoop2,minimum size=1cm] (3) [below of = 1]  {$\outvar{k}$};
    \node[knoop2,minimum size=1cm] (4) [below of = 2] {$\outvar{k+1}$};
    \draw[pijl1] (1) -- (2) ;
    \draw[pijl1] (1) -- (3) ;
    \draw[pijl1] (2) -- (4) ;
    \draw[pijl1] ($(1)+(-0.5*\tssknoop,0)$) to node[] {} (1) ;
    \draw[pijl3] ($(1)+(-\tssknoop,0)$) to node[] {} ($(1)+(-0.5*\tssknoop, 0)$);
    \draw[pijl4] (2) to node[] {} ($(2)+(0.5*\tssknoop,0)$) ;
    \draw[pijl3] ($(2)+(0.5*\tssknoop,0)$) to node[] {} ($(2)+(\tssknoop,0)$);   
    \draw (1) node [above=18pt]{$\xinstateclpr{k}{k-1}$};
    \draw (2) node [above=18pt]{$\stateclpr{k+1}{k}$};
    \draw (3) node [below=18pt]{$\outclpr{k}$};
    \draw (4) node [below=18pt]{$\outclpr{k+1}$};
    \draw[black,rounded corners=.8cm,loosely dashed, line width=2pt] ($(2)+(\tssknoop,0.6*\tssknoop)$) -- ($(2)+(-0.6*\tssknoop,0.6*\tssknoop)$) -- ($(4)+(-0.6*\tssknoop,-0.6*\tssknoop)$) --  ($(4)+(\tssknoop,-0.6*\tssknoop)$) ;
    \draw[black,rounded corners=.8cm, loosely dotted, line width=1.5pt]
    ($(2)+(\tssknoop,0.7*\tssknoop)$) --
    ($(2)+(-0.7*\tssknoop,0.7*\tssknoop)$) --
    ($(4)+(-0.7*\tssknoop,0.5*\tssknoop)$) --
    ($(3)+(-0.6*\tssknoop,0.5*\tssknoop)$) --
    ($(3)+(-0.6*\tssknoop,-0.7*\tssknoop)$) --
    ($(4)+(\tssknoop,-0.7*\tssknoop)$) ;
    \draw[black,rounded corners=.8cm, dotted, line width=.6pt]
    ($(2)+(\tssknoop,0.8*\tssknoop)$) --
    ($(1)+(-0.7*\tssknoop,0.8*\tssknoop)$) --
    ($(3)+(-0.7*\tssknoop,-0.8*\tssknoop)$) --
    ($(4)+(\tssknoop,-0.8*\tssknoop)$) ;
    \node[draw=black,loosely dashed, line width=2pt, minimum width=2.57cm,minimum height=1.2cm,
    rounded corners=.5cm] (5) at (8.5,-2) {$\jointclpr{k+1}{k}$};
    \node[draw=black,loosely dotted, line width=1.5pt, minimum width=2.57cm,minimum height=1.2cm,
    rounded corners=.5cm] (6) at ($(5)+(0,1.5cm)$) {$\indclpr{k}$};
    \node[draw=black,dotted, line width=.6pt, minimum width=2.57cm,minimum height=1.2cm,
    rounded corners=.5cm] (7) at ($(6)+(0,1.5cm)$) {$\jointclpr{k}{k-1}$};
  \end{tikzpicture}
  \caption{Subtree of the iHMM involving the variables $(\fromstatevar{k},\fromoutvar{k})$}
  \label{fig:hmmsubtree}
\end{figure}

Our aim is to develop recursive expressions that enable us to construct $\jointclpr{k}{k-1}$ out of $\jointclpr{k+1}{k}$. 
Using these expressions over and over again will eventually yield the global model $\jointlpr{}=\jointclpr{1}{0}$.
 
In a first step, we combine the joint model $\jointclpr{k+1}{k}$ for the variables $(\fromstatevar{k+1},\fromoutvar{k+1})$, defined on $\fromtostateoutgambles{k+1}{k+1}{n}$---see the thick dotted lines in Figure~\ref{fig:hmmsubtree}---,with the local model $\outclpr{k}$ for the variable $\outvar{k}$, defined on $\outgambles{k}$.
This will lead to a joint model $\indclpr{k}$ for the variables $(\fromstatevar{k+1},\fromoutvar{k})$, defined on $\fromtostateoutgambles{k+1}{k}{n}$---see the semi-thick dotted lines in Figure~\ref{fig:hmmsubtree}.
This is trivial for $k=n$, since we must have that $\indclpr{n}=\outclpr{n}$. 

For $k\neq n$, the solution is less obvious.  
A joint model can be constructed in many different ways, so we will have to impose some conditions.  
A first condition is that $\indclpr{k}$ should be a separately coherent conditional lower prevision that is jointly coherent with the `marginal' models $\jointclpr{k+1}{k}$ and $\outclpr{k}$.
A second, rather obvious, condition is that $\indclpr{k}$ should coincide with $\jointclpr{k+1}{k}$ and $\outclpr{k}$ on their respective domains. 
A third condition is that the model should capture the epistemic irrelevance assessments encoded in the tree. 
In particular these state that, conditional on $\statevar{k}$, the two variables $(\fromstatevar{k+1},\fromoutvar{k+1})$ and $\outvar{k}$ should be \emph{epistemically independent}, or in other words, epistemically irrelevant to one another.

Any model that meets all these conditions is called a (conditionally) \emph{independent product} \cite{cooman2011a} of $\jointclpr{k+1}{k}$ and $\outclpr{k}$. 
Generally speaking, such a (conditionally) independent product is not unique. 
We call the point-wise smallest, most conservative, of all possible (conditionally) independent products, which always exists, the (conditionally) \emph{independent natural extension} \cite{walley1991,cooman2011a} of $\jointclpr{k+1}{k}$ and $\outclpr{k}$, and we denote it as $\jointclpr{k+1}{k}\otimes\outclpr{k}$.

Summarising, $\indclpr{k}$ is given by
\begin{equation}\label{eq:indrecurse}
  \indclpr{k}
  \coloneqq
  \begin{cases}
  	\outclpr{n}
    &k=n\\
    \outclpr{k}\otimes\jointclpr{k+1}{k}
    &k=n-1,\dots,1
  \end{cases}
\end{equation}

The (conditionally) independent natural extension and its properties were studied in great detail in Ref.~\cite{cooman2011a}.
For the purposes of this paper, it will suffice to recall from that study that---very much like independent products of precise probability models---such independent natural extensions are \emph{factorising}, which implies in particular that
\begin{align}
  \zinindclpr[fg]{k}
  &=\zinindclpr[g\zinindclpr[f]{k}]{k}
  \notag\\
  &=\zinoutclpr[g{\zinjointclpr[f]{k+1}{k}}]{k}
  \notag\\
  &=
  \begin{cases}
    \zinoutclpr[g]{k}\zinjointclpr[f]{k+1}{k}
    &\text{ if $\zinjointclpr[f]{k+1}{k}\geq0$}\\
    \zinoutcupr[g]{k}\zinjointclpr[f]{k+1}{k}
    &\text{ if $\zinjointclpr[f]{k+1}{k}\leq0$}
  \end{cases}
  \notag\\
  &=\zinoutclupr[g]{k}\odot\zinjointclpr[f]{k+1}{k},
  \label{eq:factorisation}
\end{align}
for all $\zstate{k}\in\states{k}$, all $f\in\fromstateoutgambles{k+1}$ and all \emph{non-negative} $g\in\outgambles{k}$---we call a gamble non-negative if all its values are.
In this expression, the first equality is the actual factorisation property.
The second equality holds because $\indclpr{k}$ coincides with $\jointclpr{k+1}{k}$ and $\outclpr{k}$ on their respective domains. 
The third equality follows from the conjugacy relation~\eqref{eq:toegevoegdheid} and coherence condition \ref{C2}, and for the fourth we have used the shorthand notation $\signedprod{m}{x}\coloneqq\underline{m}\max\{0,x\}+\overline{m}\min\{0,x\}$. 
Further on, we will also use the analogous notation $\overline{\underline{m}}\,\signedprod{n}{x}\coloneqq\underline{m}\,\underline{n}\max\{0,x\}+\overline{m}\,\overline{n}\min\{0,x\}$.

In a second and final step, we combine the joint model $\indclpr{k}$ for the variables $(\fromstatevar{k+1},\fromoutvar{k})$, defined on $\fromtostateoutgambles{k+1}{k}{n}$, with the local model $\xinstateclpr{k}{k-1}$ for the variable $\statevar{k}$, defined on $\stategambles{k}$, into the joint model $\jointclpr{k}{k-1}$ for the variables $(\fromstatevar{k},\fromoutvar{k})$, defined on $\fromtostateoutgambles{k}{k}{n}$.
It has been shown elsewhere \cite{walley1991,miranda2006b} that the most conservative coherent way of doing this, is by means of \emph{marginal extension}, also known as the law ot iterated (lower) expectations.
This leads to $\xinjointclpr{k}{k-1}\coloneqq\xinstateclpr[\indclpr{k}]{k}{k-1}$, or, if we now allow $\xstate{k-1}$ to range over $\states{k-1}$:
\begin{equation}\label{eq:jointrecurse}
  \jointclpr{k}{k-1}\coloneqq\stateclpr[\indclpr{k}]{k}{k-1}.
\end{equation}
For practical purposes, it is useful to see that this is equivalent with
\begin{equation*}
  \jointclpr[f]{k}{k-1}
  =\statelpr{k}\bigg(
  \sum_{\zstate{k}\in\states{k}}\indsing{\zstate{k}}\zinindclpr[{f(\zstate{k},\fromstatevar{k+1},\outvar{k:n})}]{k}
  \Big\vert\statevar{k-1}
  \bigg)
\end{equation*}
for all $f\in\fromstateoutgambles{k}$.
Recall that in this expression, the \emph{indicator} $\indsing{\zstate{k}}$ is a gamble on $\states{k}$ that assumes the value $1$ if $\statevar{k}=\zstate{k}$ and $0$ if $\statevar{k}\neq\zstate{k}$. 

\subsection{Interesting lower and upper probabilities}\label{sec:assumption}
Without too much trouble,\footnote{As an example, we derive Equations~\eqref{eq:lower:state:out:mass} and~\eqref{eq:upper:state:out:mass} in Appendix~\ref{appendix}.}, we can use Equations~\eqref{eq:indrecurse}--\eqref{eq:jointrecurse} to derive the following expressions for a number of interesting lower and upper probabilities:
\begin{align}
  \zinjointclpr[{\singfromout{k}\times\singfromzstate{k}}]{k}{k-1}
  &=\prod_{i=k}^n\zinoutclpr[\singout{i}]{i}\zinstateclpr[\singzstate{i}]{i}{i-1}
  \label{eq:lower:state:out:mass}\\
  \zinjointcupr[{\singfromout{k}\times\singfromzstate{k}}]{k}{k-1}
  &=\prod_{i=k}^n\zinoutcupr[\singout{i}]{i}\zinstatecupr[\singzstate{i}]{i}{i-1}
  \label{eq:upper:state:out:mass}
\end{align}
for all $\zstate{k-1}\in\states{k-1}$, $\fromzstate{k}\in\fromstates{k}$, $\fromout{k}\in\fromouts{k}$ and $k\in\{1,\dots,n\}$, and
\begin{align}
  \zinindclpr[{\singfromout{k}\times\singfromzstate{k+1}}]{k}
  &=\zinoutclpr[\singout{k}]{k}\prod_{i=k+1}^n\zinoutclpr[\singout{i}]{i}\zinstateclpr[\singzstate{i}]{i}{i-1}
  \label{eq:Eonder}\\
  \zinindcupr[{\singfromout{k}\times\singfromzstate{k+1}}]{k}
  &=\zinoutcupr[\singout{k}]{k}\prod_{i=k+1}^n\zinoutcupr[\singout{i}]{i}\zinstatecupr[\singzstate{i}]{i}{i-1}.
  \label{eq:Eboven}
\end{align}
for all $\zstate{k}\in\states{k}$, $\fromzstate{k+1}\in\fromstates{k+1}$, $\fromout{k}\in\fromouts{k}$ and $k\in\{1,\dots,n\}$.
We will assume throughout that 
\begin{equation*}
  \jointupr{}(\singfromzstate{1}\times\singfromout{1})>0
  \text{ for all $\fromzstate{1}\in\fromstates{1}$ and $\fromout{1}\in\fromouts{1}$}
\end{equation*}
or equivalently, that all \emph{local upper previsions are positive}, in the sense that  \cite{cooman2009}:
\begin{multline}\label{eq:assumption}
  \zinstatecupr[\singzstate{k}]{k}{k-1}>0
  \text{ and }
  \zinoutcupr[\singout{k}]{k}>0\\
  \text{ for all $\zstate{k-1}\in\states{k-1}$, $\zstate{k}\in\states{k}$, $\out{k}\in\outs{k}$ and $k\in\{1,\dots,n\}$.}
\end{multline}
This assumption is very weak and not at all restrictive for practical purposes. 
The imprecise-probabilistic local models are usually constructed by adding some margin of error around a precise model, thereby making all upper transition probabilities positive by construction. 
We will however allow lower transition probabilities to be zero, which is something that does happen often in practical problems.

\begin{proposition}\label{prop:PenEpos}
  The assumption~\eqref{eq:assumption} that all local upper previsions are positive implies that $\zinjointcupr[\singfromout{k}]{k}{k-1}>0$ and $\zinindcupr[\singfromout{k}]{k}>0$ for all $k\in\{1,\dots,n\}$, $\zstate{k}\in\states{k}$, $\zstate{k-1}\in\states{k-1}$ and $\fromout{k}\in\fromouts{k}$.  
\end{proposition}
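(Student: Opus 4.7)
The plan is to obtain both inequalities by picking any particular state sequence, comparing the output event to the joint state-output event, and then invoking the explicit product formulas~\eqref{eq:upper:state:out:mass} and~\eqref{eq:Eboven}.

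More precisely, for the first inequality, fix any $\zstate{k-1}\in\states{k-1}$ and pick any $\fromzstate{k}\in\fromstates{k}$ (the state sets are non-empty). Viewing $\singfromout{k}$ as a subset of $\fromstateouts{k}$, we have the set-theoretic inclusion $\singfromout{k}\supseteq\singfromout{k}\times\singfromzstate{k}$, hence the pointwise inequality of indicators $\ind{\singfromout{k}}\geq\ind{\singfromout{k}\times\singfromzstate{k}}$. Separate coherence of the conditional upper prevision $\zinjointcupr{k}{k-1}$ gives monotonicity (this is the standard dual of properties~\ref{C1}--\ref{C3} via conjugacy~\eqref{eq:toegevoegdheid}), so
\begin{equation*}
  \zinjointcupr[\singfromout{k}]{k}{k-1}
  \geq\zinjointcupr[{\singfromout{k}\times\singfromzstate{k}}]{k}{k-1}
  =\prod_{i=k}^n\zinoutcupr[\singout{i}]{i}\zinstatecupr[\singzstate{i}]{i}{i-1},
\end{equation*}
where the equality is Equation~\eqref{eq:upper:state:out:mass}. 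Each factor on the right is strictly positive by the standing assumption~\eqref{eq:assumption}, so the product is strictly positive and the desired strict inequality follows.

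The argument for $\zinindcupr[\singfromout{k}]{k}>0$ is entirely analogous. Fix any $\zstate{k}\in\states{k}$ and pick any $\fromzstate{k+1}\in\fromstates{k+1}$. Again $\ind{\singfromout{k}}\geq\ind{\singfromout{k}\times\singfromzstate{k+1}}$ pointwise on $\fromstateouts{k}\setminus\states{k}$ (or more accurately on $\states{k+1:n}\times\outs{k:n}$, the relevant sample space of $\indclpr{k}$), so by monotonicity of the separately coherent conditional upper prevision $\zinindcupr{k}$ together with Equation~\eqref{eq:Eboven},
\begin{equation*}
  \zinindcupr[\singfromout{k}]{k}
  \geq\zinindcupr[{\singfromout{k}\times\singfromzstate{k+1}}]{k}
  =\zinoutcupr[\singout{k}]{k}\prod_{i=k+1}^n\zinoutcupr[\singout{i}]{i}\zinstatecupr[\singzstate{i}]{i}{i-1}>0,
\end{equation*}
where strict positivity again follows directly from assumption~\eqref{eq:assumption}.

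The only real subtlety is making sure monotonicity is legitimately available for the conditional upper previsions in question; but this is immediate from separate coherence as recalled in Section~\ref{sec:lpr}. Everything else is a one-line chain from the product formulas, and there is no genuine obstacle.
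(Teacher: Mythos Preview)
Your proof is correct and takes essentially the same approach as the paper: pick an arbitrary state sequence, use monotonicity of the conditional upper prevision to bound the output event below by the joint state-output event, and then apply the product formulas~\eqref{eq:upper:state:out:mass} and~\eqref{eq:Eboven} together with assumption~\eqref{eq:assumption}. The paper phrases the monotonicity step via the identity $\indfromout{k}=\indfromout{k}\sum_{\fromzstate{k}}\indfromzstate{k}$ and its Lemma~\ref{lemma:puntsgewijs}\ref{eig:coh:nietstriktpuntsgewijsdalend}, but this is exactly your set-inclusion argument in slightly different notation.
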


\section{Estimating states from outputs}
\label{sec:optimal-sequences}
In a hidden Markov model, the states are not directly observable, but the outputs are, and the general aim is to use the outputs to estimate the states.
We concentrate on the following problem: \emph{Suppose we have observed the output sequence~$\fromout{1}$, estimate the state sequence~$\fromxstate{1}$.}
We will use an essentially Bayesian approach to do so, but need to allow for the fact that we are working with imprecise rather than precise probability models. 

\subsection{Updating the iHMM}\label{sec:updatingtheihmm}
The first step in our approach consists in updating (or conditioning) the joint model $\jointlpr{}\coloneqq\jointclpr{1}{0}$ on the observed outputs $\outvar{1:n}=\fromout{1}$.  
As mentioned in Section~\ref{sec:lpr}, there is no unique coherent way to perform this updating. 
However, for the particular problem we are solving in this paper, it so happens that it makes no difference which updating method is used, as long as it is coherent. 
For the time being, we choose to use the least conservative\footnote{The most conservative coherent way yields a vacuous model.} (most informative) coherent updating method, which is \emph{regular extension}. 
Later on in Section~\ref{sec:maximalstatesequences}, we will show that any other coherent updating method yields the same results.

Since it follows from the positivity assumption~\eqref{eq:assumption} and Proposition~\ref{prop:PenEpos} that $\jointupr{}(\singfromout{1})>0$,
regular extension leads us to consider the updated lower prevision $\jointlpr{}(\cdot\vert\fromout{1})$ on $\fromstategambles{1}$, given by:
\begin{equation}\label{eq:GBR}
  \jointlpr{}(f\vert\fromout{1})
  \coloneqq\max\set{\mu\in\reals}{\jointlpr{}(\indfromout{1}[f-\mu])\geq0}
  \text{ for all gambles $f$ on $\fromstates{1}$.}
\end{equation}
Using the coherence of the joint lower prevision $\jointlpr{}$, it is not hard to prove that when $\jointlpr{}(\singfromout{1})>0$, $\jointlpr{}(\indfromout{1}[f-\mu])$ is a strictly decreasing and continuous function of $\mu$, which therefore has a unique zero (see Lemma~\ref{lemma:rho}\ref{eig:nscc}\&\ref{eig:dalend} in Appendix~\ref{appendix}). 
As a consequence, we have for any $f\in\fromstategambles{1}$ that
\begin{equation}\label{eq:crucial}
  \jointlpr{}(f\vert\fromout{1})\leq0
  \asa
  (\forall\mu>0) \jointlpr{}(\indfromout{1}[f-\mu])<0
  \asa
  \jointlpr{}(\indfromout{1}f)\leq0.
\end{equation}
In fact, it is not hard to infer from the strictly decreasing and continuous character of  $\jointlpr{}(\indfromout{1}[f-\mu])$ that $\jointlpr{}(f\vert\fromout{1})$ and $\jointlpr{}(\indfromout{1}f)$ have the same sign.
They are either both negative, both positive or both equal to zero;
see also the illustration below.
\begin{center}
  \begin{tikzpicture}[domain=-1:1.5]
    \coordinate (a) at (1,0);
    \coordinate (b) at (0,1);
    \draw[->] (-2,0) -- (4,0) node[below right] {$\mu$}; 
    \draw[->] (0,-1) -- (0,2); 
    \draw[semithick] plot (\x,{(9-(\x+2)^2)/5}) node[right] {$\jointlpr{}(\indfromout{1}[f-\mu])$};
    \node[below of=a,node distance=40] (c) {$\jointlpr{}(f\vert\fromout{1})$};
    \path[commentlink] (a) -- (c);
    \node[circle,inner sep=1.5pt,fill=gray] at (a) {};
    \node[left of=b,node distance=50] (d) {$\jointlpr{}(\indfromout{1}f)$};
    \path[commentlink] (b) -- (d);
    \node[circle,inner sep=1.5pt,fill=gray] at (b) {};
  \end{tikzpicture}
\end{center}
Equation~\eqref{eq:crucial} will be of crucial importance further on. 
However, in general, we want to allow $\jointlpr{}(\singfromout{1})$ to be zero (because this may happen if you allow lower transition
probabilities to be zero), while requiring that $\jointupr{}(\singfromout{1})>0$ (because this follows from the positivity assumption~\eqref{eq:assumption} and Proposition~\ref{prop:PenEpos}).
This will, generally speaking, invalidate the second equivalence in Equation~\eqref{eq:crucial}: it turns into an implication only.
But, if we limit ourselves to the specific type of gambles on $\fromstates{1}$ of the form $f=\indfromxhatstate{1}-\indfromxstate{1}$, we can still prove the following important theorem.

\begin{theorem}\label{theorem:crucial} 
  If all local upper previsions are positive, then $\jointlpr{}(\indfromout{1}[\indfromxstate{1}-\indfromxhatstate{1}])$ and $\jointlpr{}(\indfromxstate{1}-\indfromxhatstate{1}\vert\fromout{1})$ have the same sign for all fixed values of $\fromxstate{1},\fromxhatstate{1}\in\fromstates{1}$ and $\fromout{1}\in\fromouts{1}$.
  They are both positive, both negative or both zero.  
\end{theorem}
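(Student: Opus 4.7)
If $\fromxstate{1}=\fromxhatstate{1}$, both quantities in the statement are trivially zero, so assume $\fromxstate{1}\neq\fromxhatstate{1}$ and put $f\coloneqq\indfromxstate{1}-\indfromxhatstate{1}$, $\phi(\mu)\coloneqq\jointlpr{}(\indfromout{1}[f-\mu])$. Coherence forces $\phi$ to be Lipschitz-continuous, and because $\indfromout{1}\geq 0$ it is weakly decreasing in $\mu$. The definition~\eqref{eq:GBR} of regular extension gives $\jointlpr{}(f\vert\fromout{1})=\max\{\mu\in\reals:\phi(\mu)\geq 0\}$ and $\phi(0)=\jointlpr{}(\indfromout{1}f)$, so matching signs amounts to verifying each of the three cases $\phi(0)<0$, $\phi(0)>0$, and $\phi(0)=0$.

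The cases $\phi(0)\neq 0$ are immediate. If $\phi(0)<0$, monotonicity gives $\phi(\mu)\leq\phi(0)<0$ for all $\mu\geq 0$, hence $\jointlpr{}(f\vert\fromout{1})<0$. If $\phi(0)>0$, continuity supplies a right neighbourhood of $0$ on which $\phi>0$, hence $\jointlpr{}(f\vert\fromout{1})>0$. The real difficulty lies with $\phi(0)=0$: the value $\mu=0$ already lies in $\{\mu:\phi(\mu)\geq 0\}$, so $\jointlpr{}(f\vert\fromout{1})\geq 0$ comes for free, and what remains is the reverse inequality, i.e., to show $\phi(\mu)<0$ for every $\mu>0$.

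To tackle $\phi(0)=0$, I would pass to the credal set $\credalset$ underlying $\jointlpr{}$ and write $\phi(\mu)=\inf_{p\in\credalset}\bigl[a_p-\mu b_p\bigr]$, where $a_p\coloneqq p(\fromxstate{1},\fromout{1})-p(\fromxhatstate{1},\fromout{1})$ and $b_p\coloneqq p(\fromout{1})\geq 0$. Compactness of $\credalset$ delivers an infimizer $p^{*}$ with $a_{p^{*}}=0$; whenever some such $p^{*}$ has $b_{p^{*}}>0$, the bound $\phi(\mu)\leq-\mu b_{p^{*}}<0$ for $\mu>0$ closes the argument. The main obstacle is the remaining subcase in which \emph{every} infimizer of $a_{p}$ also satisfies $b_{p^{*}}=0$; this is a configuration that can arise for a generic coherent lower prevision, and here the iHMM structure has to be used. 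Combining the recursive construction~\eqref{eq:indrecurse}--\eqref{eq:jointrecurse} of $\jointlpr{}$, the factorisation~\eqref{eq:factorisation}, and the product formulas~\eqref{eq:lower:state:out:mass}--\eqref{eq:Eboven}, I would argue by induction on $n$ that the blanket positivity assumption~\eqref{eq:assumption} (together with Proposition~\ref{prop:PenEpos}) forces $\phi$ to strictly decrease at $0$. The delicate technical heart of this induction—and the reason the proof is best relegated to an appendix—is the case split at each recursive step on whether $\zstate{k}=\xstate{k}$, $\zstate{k}=\xhatstate{k}$, or $\zstate{k}\notin\{\xstate{k},\xhatstate{k}\}$, since this sign information determines which of $\underline{S}_{k},\overline{S}_{k},\underline{Q}_{k},\overline{Q}_{k}$ appears in the signed products $\signedprod{\cdot}{\cdot}$, an extra layer of bookkeeping imposed by working under epistemic irrelevance rather than strong independence.
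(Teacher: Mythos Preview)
Your setup for the easy cases $\phi(0)\neq 0$ is correct and coincides with the paper's. The divergence is in the hard case $\phi(0)=0$. Your plan there is a two-stage affair: first a credal-set argument that handles the situation where some minimiser of $a_p$ has $b_{p^*}>0$, and then a full induction on the chain length $n$, with a three-way case split on $z_k$ at every level, to dispose of the residual subcase. This is plausible in outline, but the inductive statement is never made precise, and carrying the signed-product bookkeeping through all $n$ levels is exactly the kind of thing that tends to founder on edge cases (e.g.\ when some $\underline{S}_k$ or $\underline{Q}_k$ vanishes).

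The paper avoids both the credal-set detour and the induction. Its key extra ingredient is a concavity lemma for the map $\mu\mapsto\jointlpr{}(\indsing{y}[g-\mu])$ valid for \emph{any} coherent $\jointlpr{}$: concavity plus non-increase yields that (a) if this map is positive anywhere it is strictly decreasing everywhere, and (b) wherever it is negative it is strictly decreasing. With this in hand the paper unfolds the recursion \emph{once}, writing $\rho(\epsilon)=\statelpr{1}\bigl(\sum_{z_1}\indsing{z_1}\,\zinindclpr[\indfromout{1}[\dots-\epsilon]]{1}\bigr)$, and applies the concavity lemma to each inner coefficient (which is itself a $\rho$-type function for the coherent $\zinindclpr{1}$). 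The positivity assumption~\eqref{eq:assumption} and Proposition~\ref{prop:PenEpos} make the ``neutral'' coefficients strictly negative and strictly decreasing in $\epsilon$, and the concavity lemma forces the $x_1$- and $\hat{x}_1$-coefficients to be strictly decreasing in the relevant regime; a pointwise-monotonicity argument then gives $\rho(\epsilon)<\rho(0)=0$ directly. So what your proposal is missing is not correctness but economy: the concavity observation lets one stop after a single unfolding, replacing your induction on $n$ by a short case analysis on $x_1$ versus $\hat{x}_1$.
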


\subsection{Maximal state sequences}\label{sec:maximalstatesequences}
The next step now consists in using the posterior model $\jointlpr{}(\cdot\vert\fromout{1})$ to find best estimates for the state sequence $\fromxstate{1}$.
On the Bayesian approach, this is usually done by solving a decision-making, or optimisation problem: we associate a gain function $\indfromxstate{1}$ with every candidate state sequence $\fromxstate{1}$, and select as best estimates those state sequences $\fromxhatstate{1}$ that maximise the posterior expected gain, resulting in state sequences with maximal posterior probability.
\par
Here we generalise this decision-making approach towards working with imprecise probability models.
The criterion we use to decide which estimates are optimal for the given gain functions is that of (Walley--Sen) \emph{maximality} \cite{troffaes2007,walley1991}. 
Maximality has a number of very desirable properties that make sure it works well in optimisation contexts \cite{cooman2005a,huntley2011}, and it is well-justified from a behavioural point of view, as well as in a robustness approach, as we shall see presently.
\par
We can express a strict preference $\succ$ between two state sequence estimates $\fromxhatstate{1}$ and $\fromxstate{1}$ as follows:
\begin{equation*}
  \fromxhatstate{1}\succ\fromxstate{1}
  \asa\jointlpr{}(\indfromxhatstate{1}-\indfromxstate{1}\vert\fromout{1})>0.
\end{equation*}
On a behavioural interpretation, this expresses that a subject with lower prevision $\jointlpr{}(\cdot\vert\fromout{1})$ is disposed to pay some strictly positive amount of utility to replace the (gain associated with the) estimate  $\fromxstate{1}$ with the (gain associated with the) estimate $\fromxhatstate{1}$; see Ref.~\cite[Section~3.9]{walley1991} for more details.
Alternatively, from a robustness point of view, this expresses that for each conditional mass function $p(\cdot\vert\fromout{1})$ in the credal set associated with the updated lower prevision $\jointlpr{}(\cdot\vert\fromout{1})$, the state sequence $\fromxhatstate{1}$ has a posterior probability $p(\fromxhatstate{1}\vert\fromout{1})$ that is \emph{strictly higher} than the posterior probability $p(\fromxstate{1}\vert\fromout{1})$ of the state sequence $\fromxstate{1}$.

The binary relation $\succ$ thus defined is a strict partial order [an irreflexive and transitive binary relation] on the set of state sequences $\fromstates{1}$, and we consider an estimate $\fromxhatstate{1}$ to be \emph{optimal} when it is \emph{undominated}, or \emph{maximal}, in this strict partial order: 
\begin{align}\label{eq:globoptimals}
  \fromxhatstate{1}\in\globfromopt
  &\asa(\forall\fromxstate{1}\in\fromstates{1})
  \fromxstate{1}\not\succ\fromxhatstate{1}\notag\\
  &\asa(\forall\fromxstate{1}\in\fromstates{1})
  \jointlpr{}(\indfromxstate{1}-\indfromxhatstate{1}\vert\fromout{1})\leq0\notag\\
  &\asa(\forall\fromxstate{1}\in\fromstates{1})
  \jointlpr{}(\indfromout{1}[\indfromxstate{1}-\indfromxhatstate{1}])\leq0,
\end{align}
where the very useful last equivalence follows from Theorem~\ref{theorem:crucial}.
\emph{In summary then, the aim of this paper is to develop an efficient algorithm for finding the set of maximal estimates $\globfromopt$}.

Our statement in Section~\ref{sec:updatingtheihmm}, that any coherent updating method would yield the same results as regular extension, can now be justified. 
Since coherent updating is unique if $\jointlpr{}(\singfromout{1})>0$, we only need to motivate our statement in the special case that $\jointlpr{}(\singfromout{1})=0$ and $\jointupr{}(\singfromout{1})>0$. 

If we use regular extension to update our model, the optimal estimates are given by Eq.~\eqref{eq:globoptimals}. 
For the special case $\jointlpr{}(\singfromout{1})=0$ however, we find for all $\fromxstate{1}\in\fromstates{1}$ and $\fromxhatstate{1}\in\fromstates{1}$ that
\begin{equation*}
\jointlpr{}(\indfromout{1}[\indfromxstate{1}-\indfromxhatstate{1}])\leq\jointlpr{}(\indfromout{1})=\jointlpr{}(\singfromout{1})=0,
\end{equation*}
where the first inequality follows from the monotonicity of coherent lower previsions (as a consequence of \ref{C1} and \ref{C2}). 
Therefore, we find that if $\jointlpr{}(\singfromout{1})=0$, all sequences are optimal, resulting in $\globfromopt=\fromstates{1}$.

If we use natural extension to update our joint model, the optimal state sequences are still given by Eq.~\eqref{eq:globoptimals}, but the final equivalence would no longer hold because it uses Theorem~\ref{theorem:crucial}, which assumes the use of regular extension to perform updating of the joint model. 
However, for the special case of $\jointlpr{}(\singfromout{1})=0$, natural extension by definition leads to the updated model being equal to the vacuous one. 
Therefore, we find for all $\fromxstate{1}\in\fromstates{1}$ and $\fromxhatstate{1}\in\fromstates{1}$ that
\begin{equation*}
  \jointlpr{}(\indfromxstate{1}-\indfromxhatstate{1}\vert\fromout{1})
  =\min(\indfromxstate{1}-\indfromxhatstate{1})
  \leq0.
\end{equation*}
This implies that for the special case of $\jointlpr{}(\singfromout{1})=0$ and $\jointupr{}(\singfromout{1})>0$---identical to what we found for regular extension---natural extension also results in all sequences being optimal, meaning that $\globfromopt=\fromstates{1}$.

We have thus shown that, in the special case when $\jointlpr{}(\singfromout{1})=0$ and $\jointupr{}(\singfromout{1})>0$, the set of optimal sequences is the same, regardless of whether we use natural or regular extension to update our joint model. 
Since every other coherent updating method lies in between those two methods, $\globfromopt$ does not depend on the updating method, as long as it is coherent. 
If $\jointlpr{}(\singfromout{1})>0$, coherent updating is unique and thus equal to regular extension, thereby making this result trivial in that case. 
We can therefore conclude that the results in this paper do not depend on the particular updating method that is chosen, as long as it is coherent.

Instead of looking for the maximal state sequences, one could also use other decision criteria. 
A first approach that we will not consider here, could consist in trying to find the so-called $\Gamma$-\emph{maximin} state sequences $\fromxbarstate{1}$, which maximise the posterior lower probability:
\begin{equation*}
  \fromxbarstate{1}
  \in\argmax_{\fromxstate{1}\in\fromstates{1}}\jointlpr{}(\singfromxstate{1}\vert\fromout{1})
\end{equation*}
While it is well known that any such $\Gamma$-maximin sequence is in particular guaranteed to also be a maximal sequence, finding such $\Gamma$-maximin sequences seems to be a much more complicated affair.\footnote{Private communication from Cassio de Campos.} 
Of course, once we know all maximal solutions, we could determine which of them are the $\Gamma$-maximin solutions by comparing their posterior lower probabilities. 
As far as we can see, however, calculating these seems no trivial task from a computational point of view.

We expect similar computational difficulties with yet another approach, also not considered here, which consists in finding the so-called \emph{E-admissable} sequences. 
They are those sequences that maximise the expected gain for at least one conditional mass function $p(\cdot\vert\fromout{1})$ in the credal set associated with the updated lower prevision $\jointlpr{}(\cdot\vert\fromout{1})$. 
Similarly to the $\Gamma$-maximin solutions, the E-admissable ones are also known to be contained within the set of maximal ones that we will be constructing.

The main reason why our approach is so efficient compared to the other ones, is that we do not have to explicitly calculate the value of lower previsions, but only need to know their sign, thereby allowing us to work directly with the joint model, instead of the updated model.

\subsection{Maximal subsequences}
We shall see below that in order to find the set of maximal estimates, it is useful to consider more general sets of so-called maximal subsequences: for any $k\in\{1,\dots,n\}$ and $\zstate{k-1}\in\states{k-1}$, we define $\fromopt[\fromstates{k}]{z}{k}$:
\begin{equation}\label{eq:optimals}
  \fromxhatstate{k}\in\fromopt[\fromstates{k}]{z}{k}
  \asa(\forall\fromxstate{k}\in \fromstates{k})~
  \zinjointclpr[{\indfromout{k}[\indfromxstate{k}-\indfromxhatstate{k}]}]{k}{k-1}\leq0.
\end{equation}
The interpretation of these sets is immediate: consider the following part of the original iHMM, where we take $\zinstateclpr{k}{k-1}$ as the marginal model for the first state $\statevar{k}$:
\begin{center}
  \begin{tikzpicture}
    \tikzset{node distance=\afstandknopen, auto}
    \node[knoopstate] (1)  at (-1,1) {$\statevar{k}$};
    \node[knoopstate] (3)  at ($(1)+(1.5*\afstandknopen , 0)$) {$\statevar{r}$};
    \node[knoopstate] (4) at ($(3)+(1.5*\afstandknopen, 0)$)  {$\statevar{n}$};
    \node[knoopobservatie] (5) [below of = 1]  {$\outvar{k}$};
    \node[knoopobservatie] (7) [below of = 3]  {$\outvar{r}$};
    \node[knoopobservatie] (8) [below of = 4] {$\outvar{n}$};
    \draw[pijl4] (1) to node[] {} ($(1)+(\afstandvollelijn, 0)$);
    \draw[pijl1] ($(3)+(-\afstandvollelijn,0)$) to node[] {} (3);
    \draw[pijl3] ($(1)+(\afstandvollelijn, 0)$) to node[] {} ($(3)+(-\afstandvollelijn,0)$) ;
    \draw[pijl4] (3) to node[] {} ($(3)+(\afstandvollelijn, 0)$);
    \draw[pijl1] ($(4)+(-\afstandvollelijn,0)$) to node[] {} (4);
    \draw[pijl3] ($(3)+(\afstandvollelijn, 0)$) to node[] {} ($(4)+(-\afstandvollelijn,0)$) ;
    \draw[pijl1] (1) -- (5) ;
    \draw[pijl1] (3) -- (7) ;
    \draw[pijl1] (4) -- (8) ;
    \draw (1) node [above=15pt]{$\zinstateclpr{k}{k-1}$};
    \draw (3) node [above=15pt]{$\stateclpr{r}{r-1}$};
    \draw (4) node [above=15pt]{$\stateclpr{n}{n-1}$};
    \draw (5) node [below=17pt]{$\outclpr{k}$};
    \draw (7) node [below=17pt]{$\outclpr{r}$};
    \draw (8) node [below=17pt]{$\outclpr{n}$};
    \draw (1) node [left=20pt]{State subsequence:};
    \draw (5) node [left=20pt]{Output subsequence:};
  \end{tikzpicture}
\end{center}
Then, as we have argued in Section~\ref{sec:jointmodel}, the corresponding joint lower prevision on $\fromstateoutgambles{k}$ is precisely $\zinjointclpr{k}{k-1}$, and if we have a sequence of outputs $\fromout{k}$, then $\fromopt[\fromstates{k}]{z}{k}$ is the set of state sequence estimates that are undominated by any other estimate in $\fromstates{k}$.  
It should be clear that the set $\globfromopt$ we are eventually looking for, can also be written as $\optim\left(\fromstates{1}\vert\zstate{0},\fromout{1}\right)$.

\subsection{Useful recursion equations}
Fix any $k$ in $\{1,\dots,n\}$.  If we look at Equation~\eqref{eq:optimals}, we see that it will be useful to derive a manageable expression for the lower prevision $\zinjointclpr[{\indfromout{k}[\indfromxstate{k}-\indfromxhatstate{k}]}]{k}{k-1}$.
This can be easily done (see Appendix~\ref{appendix}) using Equations~\eqref{eq:indrecurse}--\eqref{eq:upper:state:out:mass} together with a few algebraic manipulations. We consider three different cases.
If $\xhatstate{k}=\xstate{k}$ and $k\in\{1,\dots,n-1\}$ then, using the notation introduced in Section~\ref{sec:jointmodel}:
\begin{multline}
  \label{eq:target:equal}
  \zinjointclpr[\indfromout{k}[\indfromxstate{k}-\indfromxhatstate{k}]{k}{k-1}\\=
    \zinstateclupr[\singxhatstate{k}]{k}{k-1}\xhatinoutclupr[\singout{k}]{k}
    \odot\xhatinjointclpr[\indfromout{k+1}[\indfromxstate{k+1}-\indfromxhatstate{k+1}]{k+1}{k}.
\end{multline}
If $\xhatstate{n}=\xstate{n}$ then
\begin{equation}\label{eq:target:equal:final}
  \zinjointclpr[\indout{n}[\indxstate{n}-\indxhatstate{n}]{n}{n-1}=0.
\end{equation}
If $\xhatstate{k}\neq\xstate{k}$ and $k\in\{1,\dots,n\}$ then
\begin{equation}
  \label{eq:target:different}
  \zinjointclpr[\indfromout{k}[\indfromxstate{k}-\indfromxhatstate{k}]{k}{k-1}
  =\zinstateclpr[{\indxstate{k}\fromxlmem{k}-\indxhatstate{k}\fromxhatumem{k}}]{k}{k-1},
\end{equation}
where we define, for any $\fromzstate{k}\in\fromstates{k}$:
\begin{align}
  \fromzlmem{k}
  &\coloneqq\zinindclpr[{\indfromout{k}\indfromzstate{k+1}}]{k}=\zinoutclpr[\singout{k}]{k}
  \prod_{i=k+1}^n\zinoutclpr[\singout{i}]{i}\zinstateclpr[\singzstate{i}]{i}{i-1}\label{eq:beta}\\
  \fromzumem{k}
  &\coloneqq\zinindcupr[{\indfromout{k}\indfromzstate{k+1}}]{k}=\zinoutcupr[\singout{k}]{k}
  \prod_{i=k+1}^n\zinoutcupr[\singout{i}]{i}\zinstatecupr[\singzstate{i}]{i}{i-1}.\label{eq:alpha}
\end{align}
For any given sequence of states $\fromzstate{k}\in\fromstates{k}$, the $\fromzumem{k}$ and $\fromzlmem{k}$ can be found by simple backward recursion:
\begin{align}
  \fromzumem{k}
  &\coloneqq\fromzumem{k+1}\zinoutcupr[\singout{k}]{k}\zinstatecupr[\singzstate{k+1}]{k+1}{k}
  \label{eq:alpharecurs}\\
  \fromzlmem{k}
  &\coloneqq\fromzlmem{k+1}\zinoutclpr[\singout{k}]{k}\zinstateclpr[\singzstate{k+1}]{k+1}{k},
  \label{eq:betarecurs}
\end{align}
for $k\in\{1,\dots,n-1\}$, and starting from:
\begin{equation*}
  \fromzumem{n}=\zumem{n}\coloneqq\zinoutcupr[\singout{n}]{n}
  \text{ and }
  \fromzlmem{n}=\zlmem{n}\coloneqq\zinoutclpr[\singout{n}]{n}.
\end{equation*}

\section{The Principle of Optimality}
\label{sec:principle-of-optimality}
Determining the state sequences in $\globfromopt$ directly using Equation~\eqref{eq:globoptimals} clearly has exponential complexity (in the length of the chain).
We are now going to take a dynamic programming approach
\cite{bellman1957} to reducing this complexity by deriving a recursion
equation for the sets of optimal (sub)sequences $\fromopt[\fromstates{k}]{z}{k}$.

\begin{theorem}[Principle of Optimality]\label{theorem:optimality}
  For $k\in\{1,\dots,n-1\}$, all $\zstate{k-1}\in\states{k-1}$ and all $\fromxhatstate{k}\in\fromstates{k}$: if   $\zinstateclpr[\singxhatstate{k}]{k}{k-1}>0$ and $\xhatinoutclpr[\singout{k}]{k}>0$, then
  \begin{equation*}
    \fromxhatstate{k}\in\fromopt[\fromstates{k}]{z}{k}
    \dan\fromxhatstate{k+1}\in\explfromopt[\fromstates{k+1}]{\hat{x}}{k}{k+1}.
  \end{equation*}
\end{theorem}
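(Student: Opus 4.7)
The plan is to prove the contrapositive. Suppose $\fromxhatstate{k+1} \notin \explfromopt[\fromstates{k+1}]{\hat{x}}{k}{k+1}$; by Equation~\eqref{eq:optimals} applied at stage $k+1$ with conditioning state $\xhatstate{k}$, there is a witness $\fromxstate{k+1}^* \in \fromstates{k+1}$ such that
\[
\xhatinjointclpr[{\indfromout{k+1}[\indfromxstate{k+1}^*-\indfromxhatstate{k+1}]}]{k+1}{k} > 0.
\]
I will turn this witness into a sequence in $\fromstates{k}$ that strictly dominates $\fromxhatstate{k}$, thereby contradicting $\fromxhatstate{k}\in\fromopt[\fromstates{k}]{z}{k}$.

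The natural construction is to extend the witness backwards by copying the first coordinate of $\fromxhatstate{k}$: set $\fromxstate{k}^* \coloneqq (\xhatstate{k},\fromxstate{k+1}^*)$. The whole point of keeping $\xstate{k}^* = \xhatstate{k}$ is that it unlocks the ``equal first coordinate'' form of the recursion, Equation~\eqref{eq:target:equal}, which gives
\[
\zinjointclpr[{\indfromout{k}[\indfromxstate{k}^*-\indfromxhatstate{k}]}]{k}{k-1}
= \zinstateclupr[\singxhatstate{k}]{k}{k-1}\,\xhatinoutclupr[\singout{k}]{k} \odot \xhatinjointclpr[{\indfromout{k+1}[\indfromxstate{k+1}^*-\indfromxhatstate{k+1}]}]{k+1}{k}.
\]
It then remains to unwind the shorthand $\overline{\underline{m}}\,\signedprod{n}{x}$: because the right-hand factor $x$ is strictly positive by choice of $\fromxstate{k+1}^*$, the definition selects the lower--lower branch $\underline{m}\,\underline{n}\,x$. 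The theorem's hypotheses $\zinstateclpr[\singxhatstate{k}]{k}{k-1}>0$ and $\xhatinoutclpr[\singout{k}]{k}>0$ tell us precisely that these two factors are positive, so the whole product is strictly positive. Hence $\fromxstate{k}^*$ is a witness that violates the defining inequality of $\fromopt[\fromstates{k}]{z}{k}$ in Equation~\eqref{eq:optimals}, which is the desired contradiction.

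I do not expect any deep obstacle here. The argument is really just the observation that the recursion \eqref{eq:target:equal} is \emph{sign-preserving} once the relevant local lower previsions are positive, so local strict dominance at stage $k+1$ lifts verbatim to stage $k$. The only fiddly step is keeping track of which branch of the $\signedprod{}{}$ operator is active and verifying that it is indeed the lower previsions (rather than the uppers) whose positivity matters; this is immediate from the definition but worth spelling out to make the use of the hypotheses transparent. Note that the restriction $k\in\{1,\dots,n-1\}$ is exactly what guarantees we may invoke \eqref{eq:target:equal} rather than the degenerate terminal case \eqref{eq:target:equal:final}.
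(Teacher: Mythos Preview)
Your proof is correct and follows essentially the same approach as the paper: prove the contrapositive by taking a witness $\fromxstate{k+1}^*$ at stage $k+1$, prefix it with $\xhatstate{k}$, and apply Equation~\eqref{eq:target:equal} together with the positivity hypotheses to obtain a strictly positive value at stage $k$. Your explicit unwinding of the $\odot$ operator to the lower--lower branch is exactly what the paper does implicitly when it writes the product $\zinstateclpr[\singxhatstate{k}]{k}{k-1}\,\xhatinoutclpr[\singout{k}]{k}\,\xhatinjointclpr[\ldots]{k+1}{k}$ directly.
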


As an immediate consequence, we find that
\begin{equation}\label{eq:pop1imprecies}
  \fromopt[\fromstates{k}]{z}{k}
  \subseteq\frommog[\fromstates{k}]{z}{k},
\end{equation}
with $\frommog[\fromstates{k}]{z}{k}$ being the set of sequences in $\fromstates{k}$ that can still be an element of $\fromopt[\fromstates{k}]{z}{k}$ according the the theorem above:
\begin{multline}\label{eq:pop2imprecies}
  \frommog[\fromstates{k}]{z}{k}\\
  \coloneqq
  \bigg(
  \bigcup_{\zstate{k}\in \pos_k(\zstate{k-1})}\zstate{k}\oplus\explfromopt[\fromstates{k+1}]{z}{k}{k+1}
  \bigg)
  \cup
  \bigg(
  \bigcup_{\zstate{k}\notin \pos_k(\zstate{k-1})}\zstate{k}\oplus\states{k+1:n}
  \bigg).
\end{multline}
Here $\oplus$ denotes concatenation of state sequences and the set of states $\pos_k(\zstate{k-1})\subseteq\states{k}$ is defined as
\begin{equation}\label{eq:pop3imprecies}
  \zstate{k}\in \pos_k(\zstate{k-1})
  \asa
  \zinstateclpr[\singzstate{k}]{k}{k-1}>0\text{ en }\zinoutclpr[\singout{k}]{k}>0.
\end{equation}
Equation~\eqref{eq:pop2imprecies} simplifies to
\begin{equation}\label{eq:pop4imprecies}
  \frommog[\fromstates{k}]{z}{k}
  =\bigcup_{\zstate{k}\in\states{k}}\zstate{k}\oplus\explfromopt[\fromstates{k+1}]{z}{k}{k+1}
\end{equation}
if all local lower previsions are positive, but this is not generally true in the more general case we are considering here, where only the upper previsions are required to be positive.

We also introduce the following notation:
\begin{equation}\label{def:doorsnede}
  \frommogDoor[\fromstates{k}]{z}{k}{\xstate{k:s}}
  \coloneqq\set{\fromzstate{k}\in\frommog[\fromstates{k}]{z}{k}}{\zstate{k:s}=\xstate{k:s}}
\end{equation}
for all $k\in\{1,\dots,n\}$, $s\in\{k,\dots,n\}$, $\zstate{k-1}\in\states{k-1}$, $\xstate{k:s}\in\states{k:s}$ and $\fromout{k}\in\fromouts{k}$.

\section{An algorithm for finding maximal state sequences}
\label{sec:algorithm}
We now use Equation~\eqref{eq:pop1imprecies} to devise an algorithm for constructing the set $\globfromopt$ of maximal state sequences in a recursive manner.

\subsection{Initial set-up using backward recursion}
We begin by defining a few auxiliary notions.
First of all, we consider the thresholds:
\begin{equation}\label{eq:threshold}
  \zinthreshold{k}
  \coloneqq\min\set{a\geq 0}{\zinstateclpr[\indxstate{k}-a\indxhatstate{k}]{k}{k-1}\leq0}
\end{equation}
for all $k\in\{1,\dots,n\}$, $\zstate{k-1}\in\states{k-1}$ and $\xstate{k},\xhatstate{k}\in\states{k}$.
\par
Next, we define
\begin{equation}\label{eq:alphabetamax}
  \xumemmax{k}
  \coloneqq\max_{\substack{\fromzstate{k}\in\fromstates{k}\\\zstate{k}=\xstate{k}}}\fromzumem{k}
  \text{ and }
  \xlmemmax{k}
  \coloneqq\max_{\substack{\fromzstate{k}\in\fromstates{k}\\\zstate{k}=\xstate{k}}}\fromzlmem{k}
\end{equation}
for all $k\in\{1,\dots,n\}$ and $\xstate{k}\in\states{k}$. 
Using Equations~\eqref{eq:alpharecurs}--\eqref{eq:betarecurs}, these can be calculated efficiently using the following backward recursive (dynamic programming) procedure:
\vspace{2mm}
\begin{align}\label{eq:alphamaxrecurs}
  \xumemmax{k}
    &=\max_{\zstate{k+1}\in\states{k+1}}\zumemmax{k+1}
    \xinoutcupr[\singout{k}]{k}\xinstatecupr[\singzstate{k+1}]{k+1}{k}\notag\\
    &=\xinoutcupr[\singout{k}]{k}
    \max_{\zstate{k+1}\in\states{k+1}}\zumemmax{k+1}\xinstatecupr[\singzstate{k+1}]{k+1}{k},
\end{align}
and
\begin{align}\label{eq:betamaxrecurs}
  \xlmemmax{k}
    &=\max_{\zstate{k+1}\in\states{k+1}}\zlmemmax{k+1}
    \xinoutclpr[\singout{k}]{k}\xinstateclpr[\singzstate{k+1}]{k+1}{k}\notag\\
    &=\xinoutclpr[\singout{k}]{k}
    \max_{\zstate{k+1}\in\states{k+1}}\zlmemmax{k+1}\xinstateclpr[\singzstate{k+1}]{k+1}{k},
\end{align}
for $k\in\{1,\dots,n-1\}$, starting from
\vspace{2mm}
\begin{equation}\label{eq:alphabetamaxn}
  \xumemmax{n}=\xumem{n}=\xinoutcupr[\singout{n}]{n}
  \text{ and }
  \xlmemmax{n}=\xlmem{n}=\xinoutclpr[\singout{n}]{n}.
\end{equation}
Finally, we let
\begin{equation}\label{eq:alphaopt}
  \umemopt{k}
  \coloneqq\max_{\substack{\xstate{k}\in\states{k}\\\xstate{k}\neq\xhatstate{k}}}\xlmemmax{k}\zinthreshold{k},
\end{equation}
for all $k\in\{1,\dots,n\}$, $\zstate{k-1}\in\states{k-1}$ and $\xhatstate{k}\in\states{k}$.

\subsection{Reformulation of the optimality condition}
It turns out that the $\umemopt{k}$, calculated by Equation~\eqref{eq:alphaopt}, are extremely useful. 
As proved in Appendix~\ref{appendix}, they allow us to significantly simplify Equation~\eqref{eq:optimals} as follows:
\begin{equation}\label{eq:criterion-at-k}
  \fromopt[\fromstates{k}]{z}{k}
  =\set{\fromxhatstate{k}\in\frommog[\fromstates{k}]{z}{k}}{\fromxhatumem{k}\geq\umemopt{k}},
\end{equation}
which, for $k=n$, reduces to
\begin{equation}\label{eq:criterion-at-n}
  \opt[\states{n}]{z}{n}
  =\set{\xhatstate{n}\in\states{n}}{\xhatumem{n}\geq\umemopt{n}}.
\end{equation}

\subsection{A recursive solution method}
The aim of the algorithm is to determine the set $\globfromopt$ efficiently. 
We will do so recursively.

For $k=n$, $\opt[\states{n}]{z}{n}$ can be determined in a straightforward manner for every $\zstate{n-1}\in\states{n-1}$ using Criterion~\eqref{eq:criterion-at-n}.

\begin{example}\label{exam:running}
  We consider a simple binary HMM with $\states{}=\{0,1\}$.  
  For $k=n$, the maximal elements are simply states, which are trivially represented. 
  We could for example find that $\explopt[\states{n}]{0}{n}=\{0,1\}$ for $\zstate{n-1}=0$, and $\explopt[\states{n}]{1}{n}=\{0\}$ for $\zstate{n-1}=1$.  
  \hfill $\blacklozenge$
\end{example}

Next, we let $k$ run \emph{backward} from $n-1$ to $1$. 
For each $k<n$ and all $\zstate{k-1}\in\states{k-1}$, we first build up the set $\frommog[\fromstates{k}]{z}{k}$, using its definition in Equation~\eqref{eq:pop2imprecies} and the results of the previous recursion step. 
This set is then used to determine $\fromopt[\states{k}]{z}{k}$ with Criterion~\eqref{eq:criterion-at-k}.

\begin{example} 
  We continue the discussion of Example~\ref{exam:running}.
  For $k=n-1$ and $\zstate{n-2}=0$, the set $\explfrommog[\fromstates{n-1}]{0}{}{n-1}$ is constructed using Equation~\eqref{eq:pop2imprecies}.
  If, for instance $\pos_{n-1}(0)=\{0,1\}$, this reduces to Equation~\eqref{eq:pop4imprecies} and we find that
  \begin{align*}
    \explfrommog[\fromstates{n-1}]{0}{}{n-1}
    &=\bigcup_{\zstate{n-1}\in\{0,1\}}\zstate{n-1}\oplus\opt[\states{n}]{z}{n}\\
    &=0\oplus \{0,1\} \cup 1\oplus \{0\}
    =\{00,01\}\cup\{10\}
    =\{00,01,10\}.
  \end{align*}
  Applying Criterion~\eqref{eq:criterion-at-k} to every element of this set, we find the set $\explfromopt[\fromstates{n-1}]{0}{}{n-1}$, which for instance could be equal to $\{00,10\}$. 
  For $\zstate{n-2}=0$, an analoguous method can be used.
  \hfill
  $\blacklozenge$
\end{example}

Continuing in this way, we eventually reach $k=1$, which yields the desired set of maximal sequences $\globfromopt=\optim\left(\fromstates{1}\vert\zstate{0},\fromout{1}\right)$.

The possible bottleneck in this solution lies in the use of Criterion~\eqref{eq:criterion-at-k}. 
While this criterion is already much more efficient than the original
one, it can still lead to an exponential complexity if the set
$\frommog[\fromstates{k}]{z}{k}$ has a number of elements that is
exponential in the length of the considered sequences. We therefore present a method that avoids checking the inequality in Criterion~\eqref{eq:criterion-at-k} for all elements of $\frommog[\fromstates{k}]{z}{k}$.

The first trick consists in using an efficient data structure to store the sets of optimal sequences. 
For $k=n$, this is simply a list of the elements. 
For $k<n$, we could also just list the optimal sequences, but this would imply storing the same information multiple times, since parts of those sequences will be the same. 
We therefore choose to represent this list of optimal sequences as a collection of tree structures.
The way these trees are constructed should be obvious from the following example.

\begin{example}\label{exam:structuur}
  Consider the following set of sequences:
  \begin{equation*}
    \{00001000, 00001010, 00001110, 00011110, 10001010, 10001110\}
  \end{equation*}
  By representing this set in this way, useful information gets lost and memory space is waisted.
  For example, some of these sequences all start out the same way.
  It would be much more efficient to store such common subsequences only once.
  \begin{center}
    \begin{tikzpicture}[scale=.9]
      \def\boomhor{12mm};
      \def\boomver{9mm};
      \def\boomverhoofd{60mm};
      \node (nul) at (10.6cm,0) {};
      \node (b0) at (0,0) {};
      \node[niets] (b00) at ($(b0)+(\boomhor,0.5*\boomver)$) {0};
      \node[niets] (b000) at ($(b00)+(\boomhor,0)$) {0};
      \node[niets] (b0000) at ($(b000)+(\boomhor,0)$) {0};
      \node[niets] (b00001) at ($(b0000)+(\boomhor,0)$) {1};
      \node[niets] (b000011) at ($(b00001)+(\boomhor,0)$) {1};
      \node[niets] (b0000111) at ($(b000011)+(\boomhor,0)$) {1};
      \node[niets] (b00001111) at ($(b0000111)+(\boomhor,0)$) {1};
      \node[niets] (b000011110) at ($(b00001111)+(\boomhor,0)$) {0};
      \draw[boomlijngroen] (b00) -- (b000);
      \draw[boomlijngroen] (b000) -- (b0000);
      \draw[boomlijngroen] (b0000) -- (b00001);
      \draw[boomlijngroen] (b00001) -- (b000011);
      \draw[boomlijngroen] (b000011) -- (b0000111);
      \draw[boomlijngroen] (b0000111) -- (b00001111);
      \draw[boomlijngroen] (b00001111) -- (b000011110);
      \node[niets] (b00000) at ($(b00001)+(0,\boomver)$) {0};
      \node[niets] (b000001) at ($(b00000)+(\boomhor,0)$) {1};
      \node[niets] (b0000011) at ($(b000001)+(\boomhor,0)$) {1};
      \node[niets] (b00000111) at ($(b0000011)+(\boomhor,0)$) {1};
      \node[niets] (b000001110) at ($(b00000111)+(\boomhor,0)$) {0};
      \draw[boomlijngroen] (b0000) -- (b00000);
      \draw[boomlijngroen] (b00000) -- (b000001);
      \draw[boomlijngroen] (b000001) -- (b0000011);
      \draw[boomlijngroen] (b0000011) -- (b00000111);
      \draw[boomlijngroen] (b00000111) -- (b000001110);
      \node[niets] (b0000010) at ($(b0000011)+(0,\boomver)$) {0};
      \node[niets] (b00000101) at ($(b0000010)+(\boomhor,0)$) {1};
      \node[niets] (b000001010) at ($(b00000101)+(\boomhor,0)$) {0};
      \draw[boomlijngroen] (b000001) -- (b0000010);
      \draw[boomlijngroen] (b0000010) -- (b00000101);
      \draw[boomlijngroen] (b00000101) -- (b000001010);
      \node[niets] (b00000100) at ($(b00000101)+(0,\boomver)$) {0};
      \node[niets] (b000001000) at ($(b00000100)+(\boomhor,0)$) {0};
      \draw[boomlijngroen] (b0000010) -- (b00000100);
      \draw[boomlijngroen] (b00000100) -- (b000001000);
      \node[niets] (b01) at ($(b0)+(\boomhor,-0.5*\boomver)$) {1};
      \node[niets] (b010) at ($(b01)+(\boomhor,0)$) {0};
      \node[niets] (b0100) at ($(b010)+(\boomhor,0)$) {0};
      \node[niets] (b01000) at ($(b0100)+(\boomhor,0)$) {0};
      \node[niets] (b010001) at ($(b01000)+(\boomhor,0)$) {1};
      \node[niets] (b0100010) at ($(b010001)+(\boomhor,0)$) {0};
      \node[niets] (b01000101) at ($(b0100010)+(\boomhor,0)$) {1};
      \node[niets] (b010001010) at ($(b01000101)+(\boomhor,0)$) {0};
      \draw[boomlijngroen] (b01) -- (b010);
      \draw[boomlijngroen] (b010) -- (b0100);
      \draw[boomlijngroen] (b0100) -- (b01000);
      \draw[boomlijngroen] (b01000) -- (b010001);
      \draw[boomlijngroen] (b010001) -- (b0100010);
      \draw[boomlijngroen] (b0100010) -- (b01000101);
      \draw[boomlijngroen] (b01000101) -- (b010001010);
      \node[niets] (b0100011) at ($(b0100010)+(0,-\boomver)$) {1};
      \node[niets] (b01000111) at ($(b0100011)+(\boomhor,0)$) {1};
      \node[niets] (b010001110) at ($(b01000111)+(\boomhor,0)$) {0};
      \draw[boomlijn] (b010001) -- (b0100011);
      \draw[boomlijn] (b0100011) -- (b01000111);
      \draw[boomlijn] (b01000111) -- (b010001110);
    \end{tikzpicture}
  \end{center}
  We therefore prefer to represent the above set as the collection of trees depicted above.
  \hfill
  $\blacklozenge$
\end{example}

The next step is now to exploit this data structure in order to apply Criterion~\eqref{eq:criterion-at-k} efficiently. 
We start by constructing the set $\frommog[\fromstates{k}]{z}{k}$ and representing it in the same type of data structure.

\begin{example} \label{exam:running2}
  We consider the set of sequences in Example~\ref{exam:structuur} to be $\explfromopt[\fromstates{k+1}]{\hspace{1mm}0}{}{k+1}$, where $k=n-8$, since the length of the sequences is $8$. 
  Suppose we have already constructed this set in the previous recursion step. 
  Furthermore, for the sake of this example, lets assume that $0\in\pos_{k-1}(0)$ and $1\notin\pos_{k-1}(0)$. 
  We will now use Equation~\eqref{eq:pop2imprecies} to construct the set $\explfrommog[\fromstates{k}]{0}{}{k}$: 
  \begin{equation*}
    \explfrommog[\fromstates{k}]{0}{}{k}
    =0\oplus\explfromopt[\fromstates{k+1}]{0}{}{k+1}
    \cup1\oplus\states{k+1:n}.
  \end{equation*}
  The set $\explfrommog[\fromstates{k}]{0}{}{k}$ consist of two subsets, which we will construct separately. 
  The subset $0\oplus\explfromopt[\fromstates{k+1}]{0}{}{k+1}$ would normally take quite some effort to compose, since we have to concatenate $0$ with each individual element of $\explfromopt[\fromstates{k+1}]{0}{}{k+1}$. 
  However, using our representation, this comes down to adding one node and two links to the already existing data structure for $\explfromopt[\fromstates{k+1}]{0}{}{k+1}$:
  \begin{center}
    \begin{tikzpicture}[scale=.9]
      \def\boomhor{12mm};
      \def\boomver{9mm};
      \def\boomverhoofd{20mm};
      \def\eenbegin{12mm};
      \def\eeneind{41mm};
      \def\eenrand{6mm};
      
      \path[kplus] ($(\eenbegin,0)+(0,11mm)+(-\eenrand,\eenrand)$)
      rectangle ($(\eeneind,0) +(4mm,-5mm)+(\eenrand,-\eenrand)$);
      \path[kplus] ($(\eenbegin,0)+(0,11mm)+(-\eenrand,\eenrand)+(-50mm,4mm)$) rectangle ($(\eeneind,0) +(0,-5mm)+(\eenrand,-\eenrand)+(8mm,-4mm)$);
      
      \node[niets] (b0) at (0,0) {0};
      \node[niets] (b00) at ($(b0)+(\boomhor,0.5*\boomver)$) {0};
      \node[niets] (b000) at ($(b00)+(\boomhor,0)$) {0};
      \node (b000-) at ($(b000)+(\boomhor,0)$) {};
      
      \draw[boomlijngroen] (b0) -- (b00);
      \draw[boomlijngroen] (b00) -- (b000);
      \draw[boomstippellijn] (b000) -- (b000-);
      
      \node[niets] (b01) at ($(b0)+(\boomhor,-0.5*\boomver)$) {1};
      \node[niets] (b010) at ($(b01)+(\boomhor,0)$) {0};
      \node (b010-) at ($(b010)+(\boomhor,0)$) {};
      
      \draw[boomlijngroen] (b0) -- (b01);
      \draw[boomlijngroen] (b01) -- (b010);
      \draw[boomstippellijn] (b010) -- (b010-);
      
      \node (kplusnul) at ($(b0)+(\eenbegin+15mm,12mm)$) {$\explfromopt[\fromstates{k+1}]{\hspace{1mm}0}{}{k+1}$};
      \node (mog) at ($(b0)+(\eenbegin-30mm,12mm)$) {$0\oplus\explfromopt[\fromstates{k+1}]{0}{}{k+1}$};
    \end{tikzpicture}
  \end{center}
  Conceptually, we want to represent the set $1\oplus\states{k+1:n}$ as a tree, which would look like the figure below on the left.
  \begin{center}
    \begin{tikzpicture}[scale=.9]
      \def\boomhor{12mm};
      \def\boomver{9mm};
      \def\boomverhoofd{20mm};
      
      \def\eenbegin{12mm};
      \def\eeneind{33mm};
      \def\eenrand{6mm};
      
      \path[kplus] ($(\eenbegin,0)+(0,11mm)+(-\eenrand,\eenrand)$)
      rectangle ($(\eeneind,0) +(0,-23mm)+(\eenrand,-\eenrand)$);
      \path[kplus] ($(\eenbegin,0)+(0,11mm)+(-\eenrand,\eenrand)+(-25mm,4mm)$) rectangle ($(\eeneind,0) +(0,-23mm)+(\eenrand,-\eenrand)+(4mm,-4mm)$);
      
      \node[niets] (b1) at (0,-9mm) {1};
      \node[niets] (b10) at ($(b1)+(\boomhor,\boomver)$) {0};
      \node[niets] (b100) at ($(b10)+(\boomhor,0.5*\boomver)$) {0};
      \node (b1000) at ($(b100)+(\boomhor,0.25*\boomver)$) {};
      \node (b1001) at ($(b100)+(\boomhor,-0.25*\boomver)$) {};
      
      \draw[boomlijngroen] (b1) -- (b10);
      \draw[boomlijngroen] (b10) -- (b100);
      \draw[boomstippellijn] (b100) -- (b1000);
      \draw[boomstippellijn] (b100) -- (b1001);
      
      \node[niets] (b101) at ($(b10)+(\boomhor,-0.5*\boomver)$) {1};
      \node (b1010) at ($(b101)+(\boomhor,0.25*\boomver)$) {};
      \node (b1011) at ($(b101)+(\boomhor,-0.25*\boomver)$) {};
      
      \draw[boomlijngroen] (b10) -- (b101);
      \draw[boomstippellijn] (b101) -- (b1010);
      \draw[boomstippellijn] (b101) -- (b1011);
      
      \node[niets] (b11) at ($(b1)+(\boomhor,-\boomver)$) {1};
      \node[niets] (b110) at ($(b11)+(\boomhor,0.5*\boomver)$) {0};
      \node (b1100) at ($(b110)+(\boomhor,0.25*\boomver)$) {};
      \node (b1101) at ($(b110)+(\boomhor,-0.25*\boomver)$) {};
      
      \draw[boomlijngroen] (b1) -- (b11);
      \draw[boomlijngroen] (b11) -- (b110);
      \draw[boomstippellijn] (b110) -- (b1100);
      \draw[boomstippellijn] (b110) -- (b1101);
      
      \node[niets] (b111) at ($(b11)+(\boomhor,-0.5*\boomver)$) {1};
      \node (b1110) at ($(b111)+(\boomhor,0.25*\boomver)$) {};
      \node (b1111) at ($(b111)+(\boomhor,-0.25*\boomver)$) {};
      
      \draw[boomlijngroen] (b11) -- (b111);
      \draw[boomstippellijn] (b111) -- (b1110);
      \draw[boomstippellijn] (b111) -- (b1111);
      
      \node (kplusnul) at ($(b0)+(\eenbegin+3mm,12mm)$) {$\states{k+1:n}$};
      \node (mog) at ($(b0)+(\eenbegin-18mm,12mm)$) {$1\oplus\states{k+1:n}$};
    \end{tikzpicture}
    \quad
    \begin{tikzpicture}[scale=.9]
      \def\boomhor{12mm};
      \def\boomver{9mm};
      \def\boomverhoofd{23mm};
      
      \def\eenbegin{12mm};
      \def\eeneind{20mm};
      \def\eenrand{6mm};
      
      \path[kplus] ($((\eenbegin,0)+(0,1mm)+(-\eenrand,\eenrand)$) rectangle
      ($(0,-1mm)+(\eeneind,0)+(\eenrand,-\eenrand)$);
      \path[kplus] ($(\eenbegin,0)+(0,1mm)+(-\eenrand,\eenrand)+(-40mm,4mm)$) rectangle ($(0,-1mm)+(\eeneind,0) +(\eenrand,-\eenrand)+(4mm,-4mm)$);
      
      \node[niets] (b1) at ($(b0)+(0,0)$) {1};
      \node (b1-) at ($(b1)+(7.1mm,0)$) {};
      
      \draw[boomlijngroen] (b1) -- (b1-);
      
      \node (kpluseen) at ($(b1)+(\eenbegin+3mm,0)$) {$\fromstates{k+1}$};
      \node (mog) at ($(b1)+(\eenbegin-29mm,0)$) {$1\oplus\states{k+1:n}$};
    \end{tikzpicture}
  \end{center}
  However, storing it this way in a computer is a bad idea, as this would mean constructing a complete binary tree, which is exponential in the depth of this tree. 
  We therefore remember that the set of sequences can be represented as a tree, without actually constructing it, as is depicted above on the right.
  \begin{center}
    \begin{tikzpicture}[scale=.9]
      \def\boomhor{12mm};
      \def\boomver{9mm};
      \def\boomverhoofd{23mm};
      
      \def\eenbegin{12mm};
      \def\eeneind{41mm};
      \def\eenrand{6mm};

      \path[kplus] ($(0,-\boomverhoofd)+(\eenbegin,0)+(0,1mm)+(-\eenrand,\eenrand)$) rectangle
      ($(0,-\boomverhoofd)+(2mm,-1mm)+(\eeneind,0)+(\eenrand,-\eenrand)$);
      \path[kplus] ($(\eenbegin,0)+(0,11mm)+(-\eenrand,\eenrand)$)
      rectangle ($(\eeneind,0) +(2mm,-5mm)+(\eenrand,-\eenrand)$);
      \path[kplus] ($(\eenbegin,0)+(0,11mm)+(-\eenrand,\eenrand)+(-40mm,4mm)$) rectangle ($(0,-\boomverhoofd)+(2mm,-1mm)+(\eeneind,0) +(\eenrand,-\eenrand)+(4mm,-4mm)$);
      
      \node (nul) at (5cm,1cm) {};
      \node[niets] (b0) at (0,0) {0};
      \node[niets] (b00) at ($(b0)+(\boomhor,0.5*\boomver)$) {0};
      \node[niets] (b000) at ($(b00)+(\boomhor,0)$) {0};
      \node (b000-) at ($(b000)+(\boomhor,0)$) {};
      
      \draw[boomlijngroen] (b0) -- (b00);
      \draw[boomlijngroen] (b00) -- (b000);
      \draw[boomstippellijn] (b000) -- (b000-);
      
      \node[niets] (b01) at ($(b0)+(\boomhor,-0.5*\boomver)$) {1};
      \node[niets] (b010) at ($(b01)+(\boomhor,0)$) {0};
      \node (b010-) at ($(b010)+(\boomhor,0)$) {};
      
      \draw[boomlijngroen] (b0) -- (b01);
      \draw[boomlijngroen] (b01) -- (b010);
      \draw[boomstippellijn] (b010) -- (b010-);
      
      \node[niets] (b1) at ($(b0)+(0,-\boomverhoofd)$) {1};
      \node (b1-) at ($(b1)+(7.1mm,0)$) {};
      
      \draw[boomlijngroen] (b1) -- (b1-);
      
      \node (kpluseen) at ($(b1)+(\eenbegin+3mm,0)$) {$\fromstates{k+1}$};
      \node (kplusnul) at ($(b0)+(\eenbegin+15mm,12mm)$) {$\explfromopt[\fromstates{k+1}]{\hspace{1mm}0}{}{k+1}$};
      \node (mog) at ($(b0)+(\eenbegin-25mm,12mm)$) {$\explfrommog[\fromstates{k}]{0}{}{k}$};
    \end{tikzpicture}
  \end{center}
  The set $\explfrommog[\fromstates{k}]{0}{}{k}$ we are looking for is then trivially constructed by joining the two subsets $0\oplus\explfromopt[\fromstates{k+1}]{0}{}{k+1}$ and $1\oplus\states{k+1:n}$, as depicted above.
  \hfill
  $\blacklozenge$
\end{example}

It follows from Equation~\eqref{eq:criterion-at-k} that the data structure representing $\fromopt[\fromstates{k}]{z}{k}$ is contained in the data structure representing $\frommog[\fromstates{k}]{z}{k}$. 
All that is now left to do is find this subset in an efficient manner.
We present a method that constructs a subset of $\frommog[\fromstates{k}]{z}{k}$, and will prove that this subset is indeed $\fromopt[\fromstates{k}]{z}{k}$.

We first define $\zumemoptfromto{k}{s}$ for every $k\in\{1,\dots,n\}$, $s\in\{k,\dots,n\}$, $\zstate{k-1}\in\states{k-1}$ and $\zstate{k:s}\in\states{k:s}$. 
If $s=k$, we let $\zumemoptfromto{k}{k}\coloneqq\zumemopt{k}$, defined by Equation~\eqref{eq:alphaopt}. 
$\zumemoptfromto{k}{s}$ is then recursively defined by
\begin{equation}\label{eq:optalgemeen}
  \zumemoptfromto{k}{s}
  =\frac{\zumemoptfromto{k}{s-1}}
  {\zinoutcupr[\singout{s-1}]{s-1}\zinstatecupr[\singzstate{s}]{s}{s-1}}
  \text{ for every $s\in\{k+1,\dots,n\}$.}
\end{equation}

\subsection*{Optimal tree construction}
The following method will select a subset out of a given set $\frommog[\fromstates{k}]{z}{k}$ constructed using Equation~\eqref{eq:pop2imprecies}.

First, for every $\xstate{k}\in\states{k}$, check whether
\begin{equation}\label{eq:immediatecondition}
  \xumemmax{k}\geq\xumemopt{k}.
\end{equation}
From now on, we will use the generic notation $\xhatstate{k}$ for those $\xstate{k}\in\states{k}$ for which this condition is satisfied.

Next, choose an arbitrary $\xhatstate{k}$ and check, for every $\xstate{k+1}\in\states{k+1}$ that has a non-empty set $\frommogDoor[\fromstates{k}]{z}{k}{\xhatstate{k}\oplus\xstate{k+1}}$, if the following condition is satisfied:
\begin{equation}\label{eq:nextimmediatecondition}
  \xumemmax{k+1}\geq\specialumemopt{k}{\xhatstate{k}\oplus\xstate{k+1}}.
\end{equation}
Notice that $\specialumemopt{k}{\xhatstate{k}\oplus\xstate{k+1}}$ can be easily calculated using Equation~\eqref{eq:optalgemeen}, because $\xhatumemopt{k}$ is already known from the previous recursion step. 
Denote those $\xstate{k+1}\in\states{k+1}$ for which the inequality \eqref{eq:nextimmediatecondition} is true generically by $\xhatstate{k+1}$ and concatenate them with the state $\xhatstate{k}$, creating a set of state sequences $\xhatstate{k:k+1}$. 
Do this for every $\xhatstate{k}$ of the previous step and bundle the sets, obtaining a larger set of state sequences $\xhatstate{k:k+1}$.

In a next step, consider an arbitrary $\xhatstate{k:k+1}$ and check, for every $\xstate{k+2}\in\states{k+2}$ that has a non-empty set $\frommogDoor[\fromstates{k}]{z}{k}{\xhatstate{k:k+1}\oplus\xstate{k+2}}$, if the following condition is satisfied:
\begin{equation}\label{eq:nextnextimmediatecondition}
  \xumemmax{k+2}\geq\specialumemopt{k}{\xhatstate{k:k+1}\oplus\xstate{k+2}}.
\end{equation}
As before, $\specialumemopt{k}{\xhatstate{k:k+1}\oplus\xstate{k+2}}$ can be calculated easily using Equation~\eqref{eq:optalgemeen}, since $\specialumemopt{k}{\xhatstate{k+1}}$ has already been calculated in the previous step.
Denote those $\xstate{k+2}\in\states{k+2}$ for which the inequality~\eqref{eq:nextnextimmediatecondition} holds generically by $\xhatstate{k+2}$ and concatenate them with $\xhatstate{k:k+1}$, creating a set of state sequences~$\xhatstate{k:k+2}$. 
Do this for every $\xhatstate{k:k+1}$ from the previous step and bundle the sets to obtain a larger set of state sequences $\xhatstate{k:k+2}$.

It should be clear that  we can go on this way, to eventually end up with a set of sequences $\xhatstate{k:n-1}$. 
Now consider an arbitrary $\xhatstate{k:n-1}$ and check, for every $\xstate{n}\in\states{n}$ that has a non-empty set $\frommogDoor[\fromstates{k}]{z}{k}{\xhatstate{k:n-1}\oplus\xstate{n}}$, if the following condition holds:
\begin{equation}\label{eq:lastimmediatecondition}
  \xumemmax{n}\geq\specialumemopt{k}{\xhatstate{k:n-1}\oplus\xstate{n}}.
\end{equation}
Denote those $\xstate{n}\in\states{n}$ for which this is the case as $\xhatstate{n}$, and concatenate them with $\xhatstate{k:n-1}$, creating a set of state sequences $\xhatstate{k:n}$. 
Do this for every $\xhatstate{k:n-1}$ from the previous step and bundle the sets to finally obtain a set of state sequences $\xhatstate{k:n}$, which is a subset of the set $\frommog[\fromstates{k}]{z}{k}$ we started out from.

\begin{theorem}\label{theorem:construction}
The subset of $\frommog[\fromstates{k}]{z}{k}$ that is obtained by
using the optimal tree construction is equal to $\fromopt[\fromstates{k}]{z}{k}$.
\end{theorem}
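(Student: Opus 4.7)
The plan is to establish the theorem by realising the nested checks~\eqref{eq:immediatecondition}--\eqref{eq:lastimmediatecondition} of the optimal tree construction as a cumulative reformulation of the single characterisation~\eqref{eq:criterion-at-k} of $\fromopt[\fromstates{k}]{z}{k}$. Since the construction confines itself to elements of $\frommog[\fromstates{k}]{z}{k}$ through the non-emptiness tests on $\frommogDoor[\fromstates{k}]{z}{k}{\xhatstate{k:s}}$, it suffices to show that a full sequence $\xhatstate{k:n}$ survives every stage if and only if $\fromxhatumem{k}\geq\umemopt{k}$.

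The bridge between the per-stage and the global inequality is the factorisation of $\fromxhatumem{k}$ at index $s$. From~\eqref{eq:alpha}, for every $s\in\{k,\dots,n\}$,
\begin{equation*}
  \fromxhatumem{k}
  =\Big(\prod_{i=k}^{s-1}\xhatinoutcupr[\singout{i}]{i}\Big)\Big(\prod_{i=k+1}^{s}\xhatinstatecupr[\singxhatstate{i}]{i}{i-1}\Big)\,\fromxhatumem{s}.
\end{equation*}
Unrolling the recursion~\eqref{eq:optalgemeen} from $\zumemoptfromto{k}{k}=\zumemopt{k}$, and using the strict positivity granted by~\eqref{eq:assumption} to legalise each division, yields the matching closed form
\begin{equation*}
  \specialumemopt{k}{\xhatstate{k:s}}
  =\frac{\umemopt{k}}{\prod_{i=k}^{s-1}\xhatinoutcupr[\singout{i}]{i}\prod_{i=k+1}^{s}\xhatinstatecupr[\singxhatstate{i}]{i}{i-1}}.
\end{equation*}
Dividing both sides of the factorisation by this common prefix factor exhibits the per-stage test $\xhatumemmax{s}\geq\specialumemopt{k}{\xhatstate{k:s}}$ as equivalent, via the definition~\eqref{eq:alphabetamax} of $\xhatumemmax{s}$, to the existence of some tail $\xhatstate{s+1:n}$ for which $\fromxhatumem{k}\geq\umemopt{k}$.

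Soundness then follows because at the terminal stage $s=n$ there is no tail left to choose: $\xhatumemmax{n}=\xhatumem{n}$ by~\eqref{eq:alphabetamaxn}, so condition~\eqref{eq:lastimmediatecondition} collapses to $\fromxhatumem{k}\geq\umemopt{k}$, and the cumulative non-emptiness checks ensure $\xhatstate{k:n}\in\frommog[\fromstates{k}]{z}{k}$, placing it in $\fromopt[\fromstates{k}]{z}{k}$ by~\eqref{eq:criterion-at-k}. For completeness, given $\fromxhatstate{k}\in\fromopt[\fromstates{k}]{z}{k}$, I would combine $\fromxhatumem{k}\geq\umemopt{k}$ with the inequality $\xhatumemmax{s}\geq\fromxhatumem{s}$ and the factorisation above to obtain $\xhatumemmax{s}\geq\specialumemopt{k}{\xhatstate{k:s}}$ at every stage, while~\eqref{eq:pop1imprecies} guarantees that every intermediate $\frommogDoor[\fromstates{k}]{z}{k}{\xhatstate{k:s}}$ is non-empty, so $\fromxhatstate{k}$ is actually produced by the construction.

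The main obstacle I anticipate is the careful index bookkeeping in the two displayed formulas: the product ranges in the closed form for $\specialumemopt{k}{\xhatstate{k:s}}$ must line up exactly with those in the factor of $\fromxhatumem{k}$ that multiplies $\fromxhatumem{s}$, so that their ratio is exactly $\umemopt{k}$ rather than an off-by-one variant. Once this alignment and the positivity justifications for the divisions are in place, both inclusions fall out of the equivalence derived above.
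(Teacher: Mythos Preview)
Your proposal is correct and follows essentially the same route as the paper's own proof: both directions hinge on the equivalence between the per-stage test $\xhatumemmax{s}\geq\specialumemopt{k}{\xhatstate{k:s}}$ and the global inequality $\fromxhatumem{k}\geq\umemopt{k}$, obtained by unrolling the recursions~\eqref{eq:alpharecurs} and~\eqref{eq:optalgemeen}. The paper simply cites ``repeated use'' of these recursions where you write out the closed-form prefix products explicitly, and it phrases completeness as the contrapositive (a sequence not produced fails either the non-emptiness test or some stage inequality, hence lies outside $\fromopt[\fromstates{k}]{z}{k}$), but the substance is identical.
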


\begin{example}
We continue with Example~\ref{exam:running2}.
  Following the optimal tree construction, we start by checking for every $\xstate{k}\in\{0,1\}$ whether $\xumemmax{k}\geq\explxumemopt{k}{0}$. 
  Suppose this is the case. 
  We will symbolise this by giving the corresponding nodes in our representation a green colour, as in the leftmost part of the figure below.
  It then follows by Theorem~\ref{theorem:construction} that every sequence in $\explfromopt[\fromstates{k}]{0}{}{k}$ will either start with $0$ or $1$, since the set of $\xhatstate{k}$ is $\{0,1\}$. 
  In this example, this is of course trivial, but if the set of $\xhatstate{k}$ would have been $\{0\}$, we would have obtained the non-trivial result that every sequence in $\explfromopt[\fromstates{k}]{0}{}{k}$ starts with $0$. 
  We can represent this partial information about the set $\explfromopt[\fromstates{k}]{0}{}{k}$ in a trivial way, as in the rightmost part of the figure below.
  \begin{center}
    \begin{tikzpicture}[scale=.9]
      \def\boomhor{12mm};
      \def\boomver{9mm};
      \def\boomverhoofd{23mm};
      
      \def\eenbegin{12mm};
      \def\eeneind{41mm};
      \def\eenrand{6mm};
      
      \path[kplus] ($(0,-\boomverhoofd)+(\eenbegin,0)+(0,1mm)+(-\eenrand,\eenrand)$) rectangle
      ($(0,-\boomverhoofd)+(2mm,-1mm)+(\eeneind,0)+(\eenrand,-\eenrand)$);
      \path[kplus] ($(\eenbegin,0)+(0,11mm)+(-\eenrand,\eenrand)$)
      rectangle ($(\eeneind,0) +(2mm,-5mm)+(\eenrand,-\eenrand)$);
      \path[kplus] ($(\eenbegin,0)+(0,11mm)+(-\eenrand,\eenrand)+(-40mm,4mm)$) rectangle ($(0,-\boomverhoofd)+(2mm,-1mm)+(\eeneind,0) +(\eenrand,-\eenrand)+(4mm,-4mm)$);
      
      \node (nul) at (5cm,1cm) {};
      \node[ja] (b0) at (0,0) {0};
      \node[niets] (b00) at ($(b0)+(\boomhor,0.5*\boomver)$) {0};
      \node[niets] (b000) at ($(b00)+(\boomhor,0)$) {0};
      \node (b000-) at ($(b000)+(\boomhor,0)$) {};
      
      \draw[boomlijngroen] (b0) -- (b00);
      \draw[boomlijngroen] (b00) -- (b000);
      \draw[boomstippellijn] (b000) -- (b000-);
      
      \node[niets] (b01) at ($(b0)+(\boomhor,-0.5*\boomver)$) {1};
      \node[niets] (b010) at ($(b01)+(\boomhor,0)$) {0};
      \node (b010-) at ($(b010)+(\boomhor,0)$) {};
      
      \draw[boomlijngroen] (b0) -- (b01);
      \draw[boomlijngroen] (b01) -- (b010);
      \draw[boomstippellijn] (b010) -- (b010-);
      
      \node[ja] (b1) at ($(b0)+(0,-\boomverhoofd)$) {1};
      \node (b1-) at ($(b1)+(7.1mm,0)$) {};
      
      \draw[boomlijngroen] (b1) -- (b1-);
      
      
      \node (kpluseen) at ($(b1)+(\eenbegin+3mm,0)$) {$\fromstates{k+1}$};
      \node (kplusnul) at ($(b0)+(\eenbegin+15mm,12mm)$) {$\explfromopt[\fromstates{k+1}]{\hspace{1mm}0}{}{k+1}$};
      \node (mog) at ($(b0)+(\eenbegin-25mm,12mm)$) {$\explfrommog[\fromstates{k}]{0}{}{k}$};
    \end{tikzpicture}
    \quad
    \begin{tikzpicture}[scale=.9]
      \def\boomverhoofd{10mm};
      \def\eenbegin{12mm};
      \def\eeneind{0mm};
      \def\eenrand{6mm};
      
      \path[kplus] ($(\eenbegin,0)+(0,0mm)+(-\eenrand,\eenrand)+(-13mm,1mm)$) rectangle ($(0,-\boomverhoofd)+(0,-0mm)+(\eeneind,0) +(\eenrand,-\eenrand)+(1mm,-1mm)$);
      
      \node[ja] (b0) at (0,0) {0};
      \node[ja] (b1) at ($(b0)+(0,-\boomverhoofd)$) {1};  
    \end{tikzpicture}
  \end{center}

  In the next step, we need to check some criteria for every $\xhatstate{k}$ we have found in the previous step. 
  We begin with $\xhatstate{k}=0$ and start by looking at $\xstate{k+1}=0$. 
  The set $\explfrommogDoor[\fromstates{k}]{0}{k}{\xhatstate{k}\oplus\xstate{k+1}}$ is then $\explfrommogDoor[\fromstates{k}]{0}{k}{00}$, which is simply the subset of sequences in $\explfrommog[\fromstates{k}]{0}{}{k}$ that start with $00$. In our tree representation of $\explfrommog[\fromstates{k}]{0}{}{k}$, checking whether this set is non-empty comes down to checking if the node $\xhatstate{k}=0$ has a daughter with value $0$. Since this is indeed the case, we need to check whether $\explumemmax{k+1}{0}\geq\explspecialumemopt{k}{\xhatstate{k}\oplus\xstate{k+1}}{0}=\explspecialumemopt{k}{00}{0}$. 
  Suppose this criterion is met, then we have found our first subsequence $\fromtoxhatstate{k}{k+1}$, namely $00$. 
  We symbolise this in the figure below by giving the child $\xstate{k+1}=0$ of the node $\xhatstate{k}=0$ a green colour. 

  The node $\xhatstate{k}=0$ also has a daughter $\xstate{k+1}=1$. 
  If $\explumemmax{k+1}{1}<\explspecialumemopt{k}{01}{0}$, this daughter gets coloured red and $01$ is not part of the set of sequences $\fromtoxhatstate{k}{k+1}$ we are constructing in this step. 
  By Theorem~\ref{theorem:construction}, this also means that none of the elements of $\explfromopt[\fromstates{k}]{0}{}{k}$ will start with the subsequence $01$.

  For $\xhatstate{k}=1$, we know that the tree representing the sequences in $\explfrommog[\fromstates{k}]{0}{}{k}$ that start with $1$ is a complete tree, which we have not explicitly constructed. 
  This does not create a problem, since we only need that tree to check whether $\frommogDoor[\fromstates{k}]{z}{k}{1\oplus\xstate{k+1}}$ is a non-empty set, which is a condition that is trivially met for all $\xstate{k+1}\in\states{k+1}$ because of the completeness of the set $\frommogDoor[\fromstates{k}]{z}{k}{1}$. 
  We are therefore left to check Criterion~\eqref{eq:nextimmediatecondition} for $\xhatstate{k}=1$ and every $\xstate{k+1}\in\{0,1\}$. 
  For $\xstate{k+1}=0$, we might for instance find that $\explumemmax{k+1}{0}<\explspecialumemopt{k}{10}{0}$ and for $\xstate{k+1}=1$ we might find that $\explumemmax{k+1}{1}\geq\explspecialumemopt{k}{11}{0}$.

  The results of these checks are summarised in the leftmost part of the figure below.
  The corresponding sequences $\fromtoxhatstate{k}{k+1}$, which by Theorem~\ref{theorem:construction} are the possible starting sequences for the elements of $\explfromopt[\fromstates{k}]{0}{}{k}$, can be easily stored and depicted in our tree representation; see the rightmost part of the following figure.
  \begin{center}
    \begin{tikzpicture}[scale=.9]
      \def\boomhor{12mm};
      \def\boomver{9mm};
      \def\boomverhoofd{32mm};
      
      \def\eenbegin{12mm};
      \def\eeneind{41mm};
      \def\eenrand{6mm};
      
      \path[kplus] ($(0,-\boomverhoofd)+(\eenbegin,0)+(0,11mm)+(-\eenrand,\eenrand)$) rectangle
      ($(0,-\boomverhoofd)+(2mm,-5mm)+(\eeneind,0)+(\eenrand,-\eenrand)$);
      \path[kplus] ($(\eenbegin,0)+(0,11mm)+(-\eenrand,\eenrand)$)
      rectangle ($(\eeneind,0) +(2mm,-5mm)+(\eenrand,-\eenrand)$);
      \path[kplus] ($(\eenbegin,0)+(0,11mm)+(-\eenrand,\eenrand)+(-40mm,4mm)$) rectangle ($(0,-\boomverhoofd)+(2mm,-5mm)+(\eeneind,0) +(\eenrand,-\eenrand)+(4mm,-4mm)$);
      
      \node (nul) at (5cm,1cm) {};
      \node[ja] (b0) at (0,0) {0};
      \node[ja] (b00) at ($(b0)+(\boomhor,0.5*\boomver)$) {0};
      \node[niets] (b000) at ($(b00)+(\boomhor,0)$) {0};
      \node (b000-) at ($(b000)+(\boomhor,0)$) {};
      
      \draw[boomlijngroen] (b0) -- (b00);
      \draw[boomlijngroen] (b00) -- (b000);
      \draw[boomstippellijn] (b000) -- (b000-);
      
      \node[nee] (b01) at ($(b0)+(\boomhor,-0.5*\boomver)$) {1};
      \node[niets] (b010) at ($(b01)+(\boomhor,0)$) {0};
      \node (b010-) at ($(b010)+(\boomhor,0)$) {};
      
      \draw[boomlijngroen] (b0) -- (b01);
      \draw[boomlijngroen] (b01) -- (b010);
      \draw[boomstippellijn] (b010) -- (b010-);
      
      \node[ja] (b1) at ($(b0)+(0,-\boomverhoofd)$) {1};
      \node[nee] (b10) at ($(b1)+(\boomhor,0.5*\boomver)$) {0};
      \node[ja] (b11) at ($(b1)+(\boomhor,-0.5*\boomver)$) {1};
      
      \draw[boomlijngroen] (b1) -- (b10);
      \draw[boomlijngroen] (b1) -- (b11);
      
      \node (kpluseen) at ($(b1)+(\eenbegin+3mm,12mm)$) {$\fromstates{k+1}$};
      \node (kplusnul) at ($(b0)+(\eenbegin+15mm,12mm)$) {$\explfromopt[\fromstates{k+1}]{0}{}{k+1}$};
      \node (mog) at ($(b0)+(\eenbegin-25mm,12mm)$) {$\explfrommog[\fromstates{k}]{0}{}{k}$};
    \end{tikzpicture}
    \quad
    \begin{tikzpicture}[scale=.9]
      \def\boomverhoofd{10mm};
      \def\boomhor{12mm};
      \def\boomver{9mm};
      
      \def\eenbegin{12mm};
      \def\eeneind{12mm};
      \def\eenrand{6mm};
      
      \path[kplus] ($(\eenbegin,0)+(0,0mm)+(-\eenrand,\eenrand)+(-13mm,1mm)$) rectangle ($(0,-\boomverhoofd)+(0,-0mm)+(\eeneind,0) +(\eenrand,-\eenrand)+(1mm,-1mm)$);
      
      \node[ja] (b0) at (0,0) {0};
      \node[ja] (b00) at ($(b0)+(\boomhor,0)$) {0};
      
      \draw[boomlijngroen] (b0) -- (b00);
      
      \node[ja] (b1) at ($(b0)+(0,-\boomverhoofd)$) {1};
      \node[ja] (b11) at ($(b1)+(\boomhor,0)$) {1};
      
      \draw[boomlijngroen] (b1) -- (b11);
    \end{tikzpicture}
  \end{center}
  If we keep performing the steps of optimal tree construction in this way, Theorem~\ref{theorem:construction} states that the data structure that is built up while checking all these criteria represents the set $\explfromopt[\fromstates{k}]{0}{}{k}$. 
  This set might look like this:
  \begin{center}
    \begin{tikzpicture}[scale=.9]
      \def\boomverhoofd{9mm};
      \def\boomhor{12mm};
      \def\boomver{9mm};
      
      \def\eenbegin{12mm};
      \def\eeneind{96mm};
      \def\eenrand{6mm};
      
      \path[kplus]
      ($(\eenbegin,0)+(0,\boomver)+(-\eenrand,\eenrand)+(-13mm,1mm)$) rectangle
      ($(0,-\boomverhoofd)+(0,-2*\boomver)+(\eeneind,0)
      +(\eenrand,-\eenrand)+(1mm,-1mm)$);
      
      \node[ja] (b0) at (0,0) {0};
      \node[ja] (b00) at ($(b0)+(\boomhor,0)$) {0};
      \node[ja] (b000) at ($(b00)+(\boomhor,0)$) {0};
      \node[ja] (b0000) at ($(b000)+(\boomhor,0)$) {0};
      
      \draw[boomlijngroen] (b0) -- (b00);
      \draw[boomlijngroen] (b00) -- (b000);
      \draw[boomlijngroen] (b000) -- (b0000);
      
      \node[ja] (b00000) at ($(b0000)+(\boomhor,0)$) {0};
      \node[ja] (b000001) at ($(b00000)+(\boomhor,0)$) {1};
      \node[ja] (b0000011) at ($(b000001)+(\boomhor,0)$) {1};
      \node[ja] (b00000111) at ($(b0000011)+(\boomhor,0)$) {1};
      \node[ja] (b000001110) at ($(b00000111)+(\boomhor,0)$) {0};
      
      \draw[boomlijngroen] (b0000) -- (b00000);
      \draw[boomlijngroen] (b00000) -- (b000001);
      \draw[boomlijngroen] (b000001) -- (b0000011);
      \draw[boomlijngroen] (b0000011) -- (b00000111);
      \draw[boomlijngroen] (b00000111) -- (b000001110);
      
      \node[ja] (b0000010) at ($(b0000011)+(0,\boomver)$) {0};
      \node[ja] (b00000101) at ($(b0000010)+(\boomhor,0)$) {1};
      \node[ja] (b000001010) at ($(b00000101)+(\boomhor,0)$) {0};
      
      \draw[boomlijngroen] (b000001) -- (b0000010);
      \draw[boomlijngroen] (b0000010) -- (b00000101);
      \draw[boomlijngroen] (b00000101) -- (b000001010);
      
      \node[ja] (b1) at ($(b0)+(0,-\boomverhoofd)$) {1};
      \node[ja] (b11) at ($(b1)+(\boomhor,0)$) {1};
      
      \draw[boomlijngroen] (b1) -- (b11);
      
      \node[ja] (b110) at ($(b11)+(\boomhor,0)$) {0};
      \node[ja] (b1100) at ($(b110)+(\boomhor,0)$) {0};
      
      \draw[boomlijngroen] (b11) -- (b110);
      \draw[boomlijngroen] (b110) -- (b1100);
      
      \node[ja] (b11001) at ($(b1100)+(\boomhor,0)$) {1};
      \node[ja] (b110011) at ($(b11001)+(\boomhor,0)$) {1};
      \node[ja] (b1100111) at ($(b110011)+(\boomhor,0)$) {1};
      \node[ja] (b11001111) at ($(b1100111)+(\boomhor,0)$) {1};
      \node[ja] (b110011110) at ($(b11001111)+(\boomhor,0)$) {0};
      
      \draw[boomlijngroen] (b1100) -- (b11001);
      \draw[boomlijngroen] (b11001) -- (b110011);
      \draw[boomlijngroen] (b110011) -- (b1100111);
      \draw[boomlijngroen] (b1100111) -- (b11001111);
      \draw[boomlijngroen] (b11001111) -- (b110011110);
      
      \node[ja] (b1101) at ($(b110)+(\boomhor,-\boomver)$) {1};
      \node[ja] (b11011) at ($(b1101)+(\boomhor,0)$) {1};
      \node[ja] (b110111) at ($(b11011)+(\boomhor,0)$) {1};
      \node[ja] (b1101111) at ($(b110111)+(\boomhor,0)$) {1};
      \node[ja] (b11011111) at ($(b1101111)+(\boomhor,0)$) {1};
      \node[ja] (b110111110) at ($(b11011111)+(\boomhor,0)$) {0};
      
      \draw[boomlijngroen] (b110) -- (b1101);
      \draw[boomlijngroen] (b1101) -- (b11011);
      \draw[boomlijngroen] (b11011) -- (b110111);
      \draw[boomlijngroen] (b110111) -- (b1101111);
      \draw[boomlijngroen] (b1101111) -- (b11011111);
      \draw[boomlijngroen] (b11011111) -- (b110111110);
      
      \node[ja] (b110111111) at ($(b11011111)+(\boomhor,-\boomver)$) {1};
      
      \draw[boomlijngroen] (b11011111) -- (b110111111);
      
      \node (mog) at ($(b0)+(\eenbegin-0mm,9mm)$) {$\explfromopt[\fromstates{k}]{0}{}{k}$};
    \end{tikzpicture}
  \end{center}
  Figure~\ref{fig:constructionofopt} should clarify how this set was constructed. 
  Notice that we have indeed never explicitly constructed the set $\fromstates{k+1}$ in the tree representation, since every time we reached a red node, the descendants of this node were not constructed.
  \hfill
  $\blacklozenge$
\end{example}

\begin{figure}[h]
  \centering
  \begin{tikzpicture}[scale=.9]
    \def\boomhor{12mm};
    \def\boomver{9mm};
    \def\boomverhoofd{35mm};
    
    \def\eenbegin{12mm};
    \def\eeneind{96mm};
    \def\eenrand{6mm};
    
    \path[kplus] ($(0,-\boomverhoofd)+(\eenbegin,0)+(0,5mm)+(-\eenrand,\eenrand)$) rectangle
    ($(0,-\boomverhoofd)+(0,-32mm)+(\eeneind,0)+(\eenrand,-\eenrand)$);
    \path[kplus] ($(\eenbegin,0)+(0,32mm)+(-\eenrand,\eenrand)$)
    rectangle ($(\eeneind,0) +(0,-14mm)+(\eenrand,-\eenrand)$);
    \path[kplus] ($(\eenbegin,0)+(0,40mm)+(-\eenrand,\eenrand)+(-13mm,4mm)$) rectangle ($(0,-\boomverhoofd)+(0,-32mm)+(\eeneind,0) +(\eenrand,-\eenrand)+(4mm,-4mm)$);
    
    \node (nul) at (5cm,1cm) {};
    \node[ja] (b0) at (0,0) {0};
    \node[ja] (b00) at ($(b0)+(\boomhor,0.5*\boomver)$) {0};
    
    \node[ja] (b000) at ($(b00)+(\boomhor,0)$) {0};
    \node[ja] (b0000) at ($(b000)+(\boomhor,0)$) {0};
    \node[nee] (b00001) at ($(b0000)+(\boomhor,0)$) {1};
    \node[niets] (b000011) at ($(b00001)+(\boomhor,0)$) {1};
    \node[niets] (b0000111) at ($(b000011)+(\boomhor,0)$) {1};
    \node[niets] (b00001111) at ($(b0000111)+(\boomhor,0)$) {1};
    \node[niets] (b000011110) at ($(b00001111)+(\boomhor,0)$) {0};
    
    \draw[boomlijngroen] (b0) -- (b00);
    \draw[boomlijngroen] (b00) -- (b000);
    \draw[boomlijngroen] (b000) -- (b0000);
    \draw[boomlijngroen] (b0000) -- (b00001);
    \draw[boomlijngroen] (b00001) -- (b000011);
    \draw[boomlijngroen] (b000011) -- (b0000111);
    \draw[boomlijngroen] (b0000111) -- (b00001111);
    \draw[boomlijngroen] (b00001111) -- (b000011110);
    
    \node[ja] (b00000) at ($(b00001)+(0,\boomver)$) {0};
    \node[ja] (b000001) at ($(b00000)+(\boomhor,0)$) {1};
    \node[ja] (b0000011) at ($(b000001)+(\boomhor,0)$) {1};
    \node[ja] (b00000111) at ($(b0000011)+(\boomhor,0)$) {1};
    \node[ja] (b000001110) at ($(b00000111)+(\boomhor,0)$) {0};
    
    \draw[boomlijngroen] (b0000) -- (b00000);
    \draw[boomlijngroen] (b00000) -- (b000001);
    \draw[boomlijngroen] (b000001) -- (b0000011);
    \draw[boomlijngroen] (b0000011) -- (b00000111);
    \draw[boomlijngroen] (b00000111) -- (b000001110);
    
    \node[ja] (b0000010) at ($(b0000011)+(0,\boomver)$) {0};
    \node[ja] (b00000101) at ($(b0000010)+(\boomhor,0)$) {1};
    \node[ja] (b000001010) at ($(b00000101)+(\boomhor,0)$) {0};
    
    \draw[boomlijngroen] (b000001) -- (b0000010);
    \draw[boomlijngroen] (b0000010) -- (b00000101);
    \draw[boomlijngroen] (b00000101) -- (b000001010);
    
    \node[nee] (b00000100) at ($(b00000101)+(0,\boomver)$) {0};
    \node[niets] (b000001000) at ($(b00000100)+(\boomhor,0)$) {0};
    
    \draw[boomlijngroen] (b0000010) -- (b00000100);
    \draw[boomlijngroen] (b00000100) -- (b000001000);
    
    \node[nee] (b01) at ($(b0)+(\boomhor,-0.5*\boomver)$) {1};
    \node[niets] (b010) at ($(b01)+(\boomhor,0)$) {0};
    \node[niets] (b0100) at ($(b010)+(\boomhor,0)$) {0};
    \node[niets] (b01000) at ($(b0100)+(\boomhor,0)$) {0};
    \node[niets] (b010001) at ($(b01000)+(\boomhor,0)$) {1};
    \node[niets] (b0100010) at ($(b010001)+(\boomhor,0)$) {0};
    \node[niets] (b01000101) at ($(b0100010)+(\boomhor,0)$) {1};
    \node[niets] (b010001010) at ($(b01000101)+(\boomhor,0)$) {0};

    \draw[boomlijngroen] (b0) -- (b01);
    \draw[boomlijngroen] (b01) -- (b010);
    \draw[boomlijngroen] (b010) -- (b0100);
    \draw[boomlijngroen] (b0100) -- (b01000);
    \draw[boomlijngroen] (b01000) -- (b010001);
    \draw[boomlijngroen] (b010001) -- (b0100010);
    \draw[boomlijngroen] (b0100010) -- (b01000101);
    \draw[boomlijngroen] (b01000101) -- (b010001010);
    
    \node[niets] (b0100011) at ($(b0100010)+(0,-\boomver)$) {1};
    \node[niets] (b01000111) at ($(b0100011)+(\boomhor,0)$) {1};
    \node[niets] (b010001110) at ($(b01000111)+(\boomhor,0)$) {0};
    
    \draw[boomlijn] (b010001) -- (b0100011);
    \draw[boomlijn] (b0100011) -- (b01000111);
    \draw[boomlijn] (b01000111) -- (b010001110);
    
    \node[ja] (b1) at ($(b0)+(0,-\boomverhoofd)$) {1};
    \node[nee] (b10) at ($(b1)+(\boomhor,0.5*\boomver)$) {0};
    \node[ja] (b11) at ($(b1)+(\boomhor,-0.5*\boomver)$) {1};
    
    \draw[boomlijngroen] (b1) -- (b10);
    \draw[boomlijngroen] (b1) -- (b11);
    
    \node[ja] (b110) at ($(b11)+(\boomhor,0)$) {0};
    \node[ja] (b1100) at ($(b110)+(\boomhor,0)$) {0};
    
    \draw[boomlijngroen] (b11) -- (b110);
    \draw[boomlijngroen] (b110) -- (b1100);
    
    \node[nee] (b111) at ($(b11)+(\boomhor,-\boomver)$) {1};
    
    \draw[boomlijngroen] (b11) -- (b111);
    
    \node[ja] (b11001) at ($(b1100)+(\boomhor,0)$) {1};
    \node[ja] (b110011) at ($(b11001)+(\boomhor,0)$) {1};
    \node[ja] (b1100111) at ($(b110011)+(\boomhor,0)$) {1};
    \node[ja] (b11001111) at ($(b1100111)+(\boomhor,0)$) {1};
    \node[ja] (b110011110) at ($(b11001111)+(\boomhor,0)$) {0};
    
    \draw[boomlijngroen] (b1100) -- (b11001);
    \draw[boomlijngroen] (b11001) -- (b110011);
    \draw[boomlijngroen] (b110011) -- (b1100111);
    \draw[boomlijngroen] (b1100111) -- (b11001111);
    \draw[boomlijngroen] (b11001111) -- (b110011110);
    
    \node[nee] (b11000) at ($(b1100)+(\boomhor,\boomver)$) {0};
    \node[nee] (b110010) at ($(b11001)+(\boomhor,\boomver)$) {0};
    \node[nee] (b1100110) at ($(b110011)+(\boomhor,\boomver)$) {0};
    \node[nee] (b11001110) at ($(b1100111)+(\boomhor,\boomver)$) {0};
    \node[nee] (b110011111) at ($(b11001111)+(\boomhor,-\boomver)$) {1};
    
    \draw[boomlijngroen] (b1100) -- (b11000);
    \draw[boomlijngroen] (b11001) -- (b110010);
    \draw[boomlijngroen] (b110011) -- (b1100110);
    \draw[boomlijngroen] (b1100111) -- (b11001110);
    \draw[boomlijngroen] (b11001111) -- (b110011111);
    
    \node[ja] (b1101) at ($(b110)+(\boomhor,-\boomver)$) {1};
    \node[ja] (b11011) at ($(b1101)+(\boomhor,-\boomver)$) {1};
    \node[ja] (b110111) at ($(b11011)+(\boomhor,0)$) {1};
    \node[ja] (b1101111) at ($(b110111)+(\boomhor,0)$) {1};
    \node[ja] (b11011111) at ($(b1101111)+(\boomhor,0)$) {1};
    \node[ja] (b110111110) at ($(b11011111)+(\boomhor,0)$) {0};
    
    \draw[boomlijngroen] (b110) -- (b1101);
    \draw[boomlijngroen] (b1101) -- (b11011);
    \draw[boomlijngroen] (b11011) -- (b110111);
    \draw[boomlijngroen] (b110111) -- (b1101111);
    \draw[boomlijngroen] (b1101111) -- (b11011111);
    \draw[boomlijngroen] (b11011111) -- (b110111110);
    
    \node[nee] (b11010) at ($(b1101)+(\boomhor,0)$) {0};
    \node[nee] (b110110) at ($(b11011)+(\boomhor,\boomver)$) {0};
    \node[nee] (b1101110) at ($(b110111)+(\boomhor,\boomver)$) {0};
    \node[nee] (b11011110) at ($(b1101111)+(\boomhor,\boomver)$) {0};
    
    \draw[boomlijngroen] (b1101) -- (b11010);
    \draw[boomlijngroen] (b11011) -- (b110110);
    \draw[boomlijngroen] (b110111) -- (b1101110);
    \draw[boomlijngroen] (b1101111) -- (b11011110);
    
    \node[ja] (b110111111) at ($(b11011111)+(\boomhor,-\boomver)$) {1};
    
    \draw[boomlijngroen] (b11011111) -- (b110111111);
    
    
    \node (kpluseen) at ($(b1)+(\eenbegin+4mm,-32mm)$) {$\fromstates{k+1}$};
    \node (kplusnul) at ($(b0)+(\eenbegin+16mm,32mm)$) {$\explfromopt[\fromstates{k+1}]{\hspace{1mm}0}{}{k+1}$};
    \node (mog) at ($(b0)+(\eenbegin+1mm,44mm)$) {$\explfrommog[\fromstates{k}]{0}{}{k}$};
  \end{tikzpicture}
  \caption{Clarification of the construction of $\explfrommog[\fromstates{k}]{0}{}{k}$}
  \label{fig:constructionofopt}
\end{figure}

\subsection{Additional comments}\label{subsec:comments}
All that is needed in order to produce the $\umem$- and $\lmem$-functions are assessments for the lower and upper transition and emission probabilities:
\begin{center}
  $\zinstateclpr[\singzstate{k}]{k}{k-1}$, $\zinstatecupr[\singzstate{k}]{k}{k-1}$, $\zinoutclpr[\singout{k}]{k}$ and $\zinoutcupr[\singout{k}]{k}$
\end{center}
for all $k\in\{1,\dots,n\}$, $\zstate{k-1}\in\states{k-1}$, $\zstate{k}\in\states{k}$ and $\out{k}\in\outs{k}$. 
The most conservative coherent models $\stateclpr{k}{k-1}$ that correspond to such assessments are $2$-monotone \cite{campos1994,cooman2005e}.
Due to their comonotone additivity \cite{cooman2005e}, this implies that:
\begin{equation*}
  \zinstateclpr[\indxstate{k}-a\indxhatstate{k}]{k}{k-1}
  =\zinstateclpr[\singxstate{k}]{k}{k-1}-a\zinstatecupr[\singxhatstate{k}]{k}{k-1}
\end{equation*}
for all $a\geq0$, and therefore Equation~\eqref{eq:threshold} leads to
\begin{equation}\label{eq:minimal-threshold}
  \zinthreshold{k}
  =\frac{\zinstateclpr[\singxstate{k}]{k}{k-1}}{\zinstatecupr[\singxhatstate{k}]{k}{k-1}}.
\end{equation}
The right-hand side is the smallest possible value of the threshold
$\zinthreshold{k}$ corresponding to the assessments
$\zinstateclpr[\singxstate{k}]{k}{k-1}$ and
$\zinstatecupr[\singxhatstate{k}]{k}{k-1}$, leading to the most
conservative inferences, and therefore the largest possible sets of
maximal sequences, that correspond to these assessments.

\section{Discussion of the algorithm's complexity}\label{sec:complexity}
\subsection{Preparatory calculations}
We begin with the preparatory calculations of the quantities in Equations~\eqref{eq:threshold}--\eqref{eq:alphaopt}.
For the thresholds $\zinthreshold{k}$ in Equation~\eqref{eq:threshold}, the computational complexity is clearly cubic in the number of states, and linear in the number of nodes. 
Calculating the $\xumemmax{k}$ and $\xlmemmax{k}$ in Equations~\eqref{eq:alphamaxrecurs} and~\eqref{eq:betamaxrecurs} is  linear in the number of nodes, and quadratic in the number of states.
The complexity of finding the $\umemopt{k}$ in Equation~\eqref{eq:alphaopt}
is linear in the number of nodes, and cubic in the number of
states.

\subsection{Complexity of the optimal tree construction}
The computational complexity of the optimal tree construction is less trivial. 
Let us start by noting that this construction essentially consists in repeating the same small step over and over again, namely adding a state $\xhatstate{s}$ to an already constructed $\xhatstate{k:s-1}$.\footnote{If $s=k$, we identify $\xhatstate{k:s-1}=\xhatstate{k:k-1}$ with a sequence of length zero.}

To perform such a step for a sequence $\xhatstate{k:s-1}$, we first have to check for all $\xstate{s}\in\states{s}$ whether $\frommogDoor[\fromstates{k}]{z}{k}{\xhatstate{k:s-1}\oplus\xstate{s}}$ is non-empty. 
This can be done in constant time, since our representation reduces this step to checking whether the node $\xstate{s}$ is a daughter of $\xhatstate{s-1}$ in the data structure of $\frommogDoor[\fromstates{k}]{z}{k}{\xhatstate{k:s-1}}$. 
Next, for those $\xstate{s}\in\states{s}$ for which this is indeed the case, we need to check if $\xumemmax{s}\geq\specialumemopt{k}{\xhatstate{k:s-1}\oplus\xstate{s}}$. 
Checking those two criteria for every $\xstate{s}\in\states{s}$ will from now on be called \emph{performing a search step}, and its complexity is linear in the number of states. 
Those $\xstate{s}\in\states{s}$ that meet both criteria will be noted as $\xhatstate{s}$ and concatenated with $\xhatstate{k:s-1}$.

We will now prove that performing such a search step will always yield at least one $\xhatstate{s}$ that can be concatenated with $\xhatstate{k:s-1}$. 

\begin{theorem}\label{theorem:essentialstep}
  Consider an arbitrary sequence $\xhatstate{k:s-1}$ that is created while performing the optimal tree construction, with $k\in\{1,...,n\}$ and $s\in\{k,...,n\}$. 
  Then there is always at least one $\xstate{s}\in\states{s}$ for which both $\frommogDoor[\fromstates{k}]{z}{k}{\xhatstate{k:s-1}\oplus\xstate{s}}$ is non-empty and the inequality $\xumemmax{s}\geq\specialumemopt{k}{\xhatstate{k:s-1}\oplus\xstate{s}}$ holds.
\end{theorem}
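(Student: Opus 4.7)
The plan is to reduce the theorem to showing that $\fromopt[\fromstates{k}]{z}{k}$ contains at least one sequence $\fromxhatstate{k}$ extending $\xhatstate{k:s-1}$. Given such an $\fromxhatstate{k}$, taking $\xstate{s}\coloneqq\xhatstate{s}$ satisfies (a) trivially, because $\fromxhatstate{k}\in\fromopt[\fromstates{k}]{z}{k}\subseteq\frommog[\fromstates{k}]{z}{k}$ witnesses $\frommogDoor[\fromstates{k}]{z}{k}{\xhatstate{k:s-1}\oplus\xstate{s}}$ as non-empty. For (b), splitting the product in Equation~\eqref{eq:alpha} at index $s$ gives the identity $\fromxhatumem{k}=D(\xhatstate{k:s})\,\fromxhatumem{s}$, where $D(\xhatstate{k:s})\coloneqq\prod_{i=k+1}^{s}\xhatinoutcupr[\singout{i-1}]{i-1}\xhatinstatecupr[\singxhatstate{i}]{i}{i-1}$ coincides exactly with the cumulative denominator obtained by unrolling Equation~\eqref{eq:optalgemeen}. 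Combining the optimality bound $\fromxhatumem{k}\geq\xhatumemopt{k}$ (from Criterion~\eqref{eq:criterion-at-k} applied to $\fromxhatstate{k}\in\fromopt[\fromstates{k}]{z}{k}$) with $\fromxhatumem{s}\leq\xhatumemmax{s}$ then yields $\xhatumemmax{s}\geq\xhatumemopt{k}/D(\xhatstate{k:s})=\specialumemopt{k}{\xhatstate{k:s-1}\oplus\xstate{s}}$, which is (b).

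The main obstacle is therefore to show that the extension $\fromxhatstate{k}$ exists, and I would argue this by induction on $n-k$. The base case $k=n$ (with $s=n$ and empty $\xhatstate{k:s-1}$) reduces to $\opt[\states{n}]{z}{n}\neq\emptyset$, which holds because it is the set of maximal elements of a strict partial order on a finite non-empty set. For the inductive step, I split on whether $\xhatstate{k}\in\pos_{k}(\zstate{k-1})$. If $\xhatstate{k}\notin\pos_{k}(\zstate{k-1})$, the second union in Equation~\eqref{eq:pop2imprecies} gives $\xhatstate{k}\oplus\states{k+1:n}\subseteq\frommog[\fromstates{k}]{z}{k}$, so every extension of $\xhatstate{k:s-1}$ lies in $\frommog$; unrolling the algorithm's previous-step invariant $\xhatumemmax{s-1}\geq\specialumemopt{k}{\xhatstate{k:s-1}}$ via Equations~\eqref{eq:alphamaxrecurs} and~\eqref{eq:optalgemeen} (using positivity~\eqref{eq:assumption} to divide by strictly positive factors) shows this invariant is equivalent to $\max_{\xhatstate{s:n}}\fromxhatumem{k}\geq\xhatumemopt{k}$, so there exists an extension $\xhatstate{s:n}$ whose resulting $\fromxhatstate{k}$ satisfies $\fromxhatumem{k}\geq\xhatumemopt{k}$ and therefore lies in $\fromopt$.

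If instead $\xhatstate{k}\in\pos_{k}(\zstate{k-1})$, then the relevant part of $\frommog$ equals $\xhatstate{k}\oplus\explfromopt[\fromstates{k+1}]{\hat{x}}{k}{k+1}$, and the hard point is that a candidate extension obtained by the argmax strategy above may fall outside $\explfromopt[\fromstates{k+1}]{\hat{x}}{k}{k+1}$. Here the prior-step non-emptiness check $\frommogDoor[\fromstates{k}]{z}{k}{\xhatstate{k:s-1}}\neq\emptyset$ ensures that $\xhatstate{k+1:s-1}$ is a prefix of some element of $\explfromopt[\fromstates{k+1}]{\hat{x}}{k}{k+1}$; by Theorem~\ref{theorem:construction} applied at level $k+1$, this prefix is exactly a partial sequence produced by the level-$(k+1)$ run of the algorithm on the sub-iHMM, and the inductive hypothesis supplies an extension within $\explfromopt[\fromstates{k+1}]{\hat{x}}{k}{k+1}$, yielding the desired $\fromxhatstate{k}\in\fromopt[\fromstates{k}]{z}{k}$.
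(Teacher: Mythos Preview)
Your reduction in the first paragraph is fine, and the $\hat{x}_k\notin\pos_k(\zstate{k-1})$ branch of your inductive step works. The gap is in the branch $\hat{x}_k\in\pos_k(\zstate{k-1})$. There your inductive hypothesis at level $k+1$ produces an extension $\fromxhatstate{k+1}\in\explfromopt[\fromstates{k+1}]{\hat{x}}{k}{k+1}$, and this indeed gives $\xhatstate{k}\oplus\fromxhatstate{k+1}\in\frommog[\fromstates{k}]{z}{k}$. But membership in $\fromopt[\fromstates{k}]{z}{k}$ also requires, by Equation~\eqref{eq:criterion-at-k}, the bound $\fromxhatumem{k}\geq\xhatumemopt{k}$, and your inductive hypothesis gives you no control whatsoever over $\fromxhatumem{k+1}$: an element of $\explfromopt[\fromstates{k+1}]{\hat{x}}{k}{k+1}$ only satisfies the level-$(k+1)$ threshold $\alpha_{k+1}^{\mathrm{opt}}(\xhatstate{k+1}\vert\xhatstate{k})$, which is unrelated to the level-$k$ threshold. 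Concretely, two sequences $A,B\in\explfromopt[\fromstates{k+1}]{\hat{x}}{k}{k+1}$ can both extend $\xhatstate{k+1:s-1}$ with $\alpha(A)>\alpha(B)$, and only $\xhatstate{k}\oplus A$ survives at level $k$; if the inductive hypothesis hands you $B$, the sentence ``yielding the desired $\fromxhatstate{k}\in\fromopt[\fromstates{k}]{z}{k}$'' is simply false.

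The paper closes exactly this gap by working the other way around: it fixes the $\alpha$-maximising tail $\fromxstate{s}^*$ (so the level-$k$ threshold is satisfied automatically by the ``previous-step invariant'' you already use in the $\notin\pos$ branch) and then proves, via Lemmas~\ref{stel:complexiteitbegrensd} and~\ref{stel:argmaxalpha}, that $\xhatstate{k:s-1}\oplus\fromxstate{s}^*$ lies in $\frommog[\fromstates{k}]{z}{k}$. The key point in Lemma~\ref{stel:complexiteitbegrensd} is that the witness $\xhatstate{k:s-1}\oplus\fromzstate{s}$ already in $\frommog$ passed the $\alpha$-test at every intermediate level $q\in\{k,\dots,s-1\}$, and since $\fromxstate{s}^*$ dominates $\fromzstate{s}$ in $\alpha$ at every such level (by Equation~\eqref{eq:alpharecurs}), it passes those same tests; Lemma~\ref{stel:argmaxalpha} handles the entry into $\explfromopt[\fromstates{s}]{\hat{x}}{s-1}{s}$. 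Your inductive scheme could be repaired along these lines, but not without essentially reproducing this argument. (A minor additional point: your case split on $\hat{x}_k$ presupposes $s\geq k+1$; for $s=k$ with $k<n$ you still need the separate ``finite posets have maximal elements'' observation, not only at $k=n$.)
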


\begin{example}
  In our visual representations, this means that every green node will alway have at least one green child, which implies that all green sequences will have length $n-k+1$. 
  \begin{center}
    \begin{tikzpicture}[scale=.9]
      \def\boomhor{12mm};
      \def\boomver{9mm};
      \def\boomverhoofd{35mm};
      
      \def\eenbegin{12mm};
      \def\eeneind{96mm};
      \def\eenrand{6mm};
      
      \path[roodkader] ($(\eenbegin,0)+(0,32mm)+(-\eenrand,\eenrand)+(5*\boomhor+1mm,-2*\boomver-2mm)$)
      rectangle ($(\eeneind,0)
      +(0,-14mm)+(\eenrand,-\eenrand)+(-1mm,3*\boomver+2mm)$);
      \path[roodkader] ($(\eenbegin,0)+(0,32mm)+(-\eenrand,\eenrand)+(1mm,-4*\boomver-2mm)$)
      rectangle ($(\eeneind,0) +(0,-14mm)+(\eenrand,-\eenrand)+(-1mm,2mm)$);
      
      \node (nul) at (5cm,1cm) {};
      \node[ja] (b0) at (0,0) {0};
      \node[ja] (b00) at ($(b0)+(\boomhor,0.5*\boomver)$) {0};
      
      \node[ja] (b000) at ($(b00)+(\boomhor,0)$) {0};
      \node[ja] (b0000) at ($(b000)+(\boomhor,0)$) {0};
      \node[nee] (b00001) at ($(b0000)+(\boomhor,0)$) {1};
      \node[niets] (b000011) at ($(b00001)+(\boomhor,0)$) {1};
      \node[niets] (b0000111) at ($(b000011)+(\boomhor,0)$) {1};
      \node[niets] (b00001111) at ($(b0000111)+(\boomhor,0)$) {1};
      \node[niets] (b000011110) at ($(b00001111)+(\boomhor,0)$) {0};
      
      \draw[boomlijngroen] (b0) -- (b00);
      \draw[boomlijngroen] (b00) -- (b000);
      \draw[boomlijngroen] (b000) -- (b0000);
      \draw[boomlijngroen] (b0000) -- (b00001);
      \draw[boomlijngroen] (b00001) -- (b000011);
      \draw[boomlijngroen] (b000011) -- (b0000111);
      \draw[boomlijngroen] (b0000111) -- (b00001111);
      \draw[boomlijngroen] (b00001111) -- (b000011110);
      
      \node[ja] (b00000) at ($(b00001)+(0,\boomver)$) {0};
      \node[ja] (b000001) at ($(b00000)+(\boomhor,0)$) {1};
      \node[ja] (b0000011) at ($(b000001)+(\boomhor,0)$) {1};
      \node[nee] (b00000111) at ($(b0000011)+(\boomhor,0)$) {1};
      \node[niets] (b000001110) at ($(b00000111)+(\boomhor,0)$) {0};
      
      \draw[boomlijngroen] (b0000) -- (b00000);
      \draw[boomlijngroen] (b00000) -- (b000001);
      \draw[boomlijngroen] (b000001) -- (b0000011);
      \draw[boomlijngroen] (b0000011) -- (b00000111);
      \draw[boomlijngroen] (b00000111) -- (b000001110);
      
      \node[ja] (b0000010) at ($(b0000011)+(0,\boomver)$) {0};
      \node[ja] (b00000101) at ($(b0000010)+(\boomhor,0)$) {1};
      \node[ja] (b000001010) at ($(b00000101)+(\boomhor,0)$) {0};
      
      \draw[boomlijngroen] (b000001) -- (b0000010);
      \draw[boomlijngroen] (b0000010) -- (b00000101);
      \draw[boomlijngroen] (b00000101) -- (b000001010);
      
      \node[nee] (b00000100) at ($(b00000101)+(0,\boomver)$) {0};
      \node[niets] (b000001000) at ($(b00000100)+(\boomhor,0)$) {0};
      
      \draw[boomlijngroen] (b0000010) -- (b00000100);
      \draw[boomlijngroen] (b00000100) -- (b000001000);
      
      \node[ja] (b01) at ($(b0)+(\boomhor,-0.5*\boomver)$) {1};
      \node[ja] (b010) at ($(b01)+(\boomhor,0)$) {0};
      \node[ja] (b0100) at ($(b010)+(\boomhor,0)$) {0};
      \node[ja] (b01000) at ($(b0100)+(\boomhor,0)$) {0};
      \node[ja] (b010001) at ($(b01000)+(\boomhor,0)$) {1};
      \node[nee] (b0100010) at ($(b010001)+(\boomhor,0)$) {0};
      \node[niets] (b01000101) at ($(b0100010)+(\boomhor,0)$) {1};
      \node[niets] (b010001010) at ($(b01000101)+(\boomhor,0)$) {0};
      
      \draw[boomlijngroen] (b0) -- (b01);
      \draw[boomlijngroen] (b01) -- (b010);
      \draw[boomlijngroen] (b010) -- (b0100);
      \draw[boomlijngroen] (b0100) -- (b01000);
      \draw[boomlijngroen] (b01000) -- (b010001);
      \draw[boomlijngroen] (b010001) -- (b0100010);
      \draw[boomlijngroen] (b0100010) -- (b01000101);
      \draw[boomlijngroen] (b01000101) -- (b010001010);
      
      \node[nee] (b0100011) at ($(b0100010)+(0,-\boomver)$) {1};
      \node[niets] (b01000111) at ($(b0100011)+(\boomhor,0)$) {1};
      \node[niets] (b010001110) at ($(b01000111)+(\boomhor,0)$) {0};
      
      \draw[boomlijn] (b010001) -- (b0100011);
      \draw[boomlijn] (b0100011) -- (b01000111);
      \draw[boomlijn] (b01000111) -- (b010001110);   
    \end{tikzpicture}
  \end{center}  
  The situation depicted above is therefore impossible.
  \hfill
  $\blacklozenge$
\end{example}

Next, notice that every optimal sequence $\xhatstate{k:n}$ yielded by the optimal tree construction is built up by adding extra states $\xhatstate{s}$ to an already constructed sequence $\xhatstate{k:s-1}$, repeating this for $s$ going from $k$ to $n$. 
Adding such a state means performing one search step, but Theorem~\ref{theorem:essentialstep} implies that performing a search step also means adding at least one state. 
Therefore, constructing one maximal sequence $\xhatstate{k:n}$ will never take more search steps than the length of this sequence. 
Since performing one search step is linear in the number of states, constructing one maximal sequence is linear in the length of the sequence and the number of states. 
Determining the set $\fromopt[\fromstates{k}]{z}{k}$ of all maximal sequences will thus be linear in the number of sequences, in the length of the sequences and in the number of states.

\subsection{The recursive construction of the solutions}
To construct $\globfromopt$ recursively, we let $k$ run from $n$ to $1$. 
For a fixed $k$, we construct the set $\fromopt[\fromstates{k}]{z}{k}$ for every $\zstate{k-1}\in\states{k-1}$, by means of the optimal tree construction. 
We have already shown that constructing such a set is linear in the number of sequences, the length of the sequences and the number of states. 
This means that performing this recursive construction is quadratic in the length of the sequences, quadratic in number of states and roughly speaking\footnote{For every $k$, constructing the set $\fromopt[\fromstates{k}]{z}{k}$ has linear complexity in the number of maximal elements at that stage.} linear in the number of maximal sequences.

\subsection{General complexity}
The complete algorithm consist of the preparatory calculations and the recursive construction of the solutions. 
We conclude that it is quadratic in the number of nodes, cubic in the number of states, and roughly speaking linear in the number of maximal sequences.

\subsection{Comparison with Viterbi's algorithm}\label{sec:compare-viterbi}
For precise HMMs, the state sequence estimation problem can be solved very efficiently by the Viterbi algorithm \cite{rabiner1989,viterbi1967}, whose complexity is linear in the number of nodes, and quadratic in the number of states. 
However, this algorithm only emits a single optimal (most probable) state sequence, even in cases where there are multiple (equally probable) optimal solutions: this of course simplifies the problem.
If we would content ourselves with giving only a single maximal solution, the ensuing version of our algorithm would have a complexity that is similar to Viterbi's.

So, to allow for a fair comparison between Viterbi's algorithm and ours, we would need to alter Viterbi's algorithm in such a way that it no longer resolves ties arbitrarily, and emits all (equally probable) optimal state sequences. 
This new version will remain linear in the number of nodes, and quadratic in the number of states, but will also have added complexity. 
This can easily be seen by noting that emitting the optimal sequences will be linear in the number of them and thus possibly exponential, if all possible solutions would for example be equally probable.

For the complexity for the most time-consuming part of our algorithm (the recursive construction of the solutions), the only difference is this: Viterbi's approach is linear and ours is quadratic in the number of nodes.  
Where does this difference come from? In iHMMs we have mutually incomparable solutions, whereas in pHMMs the optimal solutions are indifferent, or equally probable. 
This makes sure that the algorithm for pHMMs requires no forward loops, as is the case in the EstiHMM algorithm, when we perform the optimal tree construction. 
We believe that this added complexity is a reasonable price to pay for the robustness that working with imprecise-probabilistic models offers.

\section{Some experiments}
\label{sec:experiments}
While a linear complexity in the number of maximal sequences is probably the best we can hope for, we also see that we will only be able to find all maximal sequences efficiently provided their number is reasonably small.
Should it, say, tend to increase exponentially with the length of the chain, then no algorithm, however cleverly designed, could overcome this hurdle. 
Because this number of maximal sequences is so important, we study its behaviour in more detail. 
In order to do so, we take a closer look at how this number of maximal sequences depends on the transition probabilities of the model, and how it evolves when we let the imprecision of the local models grow. 
We shall see that this number displays very interesting behaviour that can be explained, and even predicted to some extent. 
To allow for easy visualisation, we limit this discussion to binary iHMMs, where both the state and output variables can assume only two possible values, say $0$ and $1$. 

\subsection{Describing a binary stationary iHMM}
We first consider a binary stationary HMM. 
The (precise) transition probabilities for going from one state to the next are completely determined by numbers in the unit interval: the probability $p$ to go from state $0$ to state $0$, and the probability $q$ to go from state $1$ to state $0$.
To further pin down the HMM we also need to specify the (marginal) probability $m$ for the first state to be $0$, and the two emission probabilities: the probability $r$ of emitting output $0$ from state $0$ and the probability $s$ of emitting output $0$ from state $1$.
\par
In this binary case, all coherent imprecise-probabilistic models can be found by contamination: taking convex mixtures of precise models, with mixture coefficient $1-\epsilon$, and the vacuous model, with mixture coefficient $\epsilon$, leading to a so-called linear-vacuous model \cite{walley1991}.
To simplify the analysis, we let the emission model remain precise, and use the same mixture coefficient $\epsilon$ for the marginal and the transition models.
As $\epsilon$ ranges from zero to one, we then evolve from a precise HMM towards an iHMM with vacuous marginal and transition models (and precise emission models).

\subsection{Explaining the basic ideas using a chain of length two}
We now examine the behaviour of an iHMM of length two, with the following (precise) probabilities fixed:\footnote{This choice is of course arbitrary. Different values would yield comparable results.}
\begin{equation*}
  \text{$m=0.1$, $r=0.8$ and $s=0.3$}.
\end{equation*}
Fixing an output sequence and a value for $\epsilon$, we can use our algorithm to calculate the corresponding numbers of maximal state sequences as $p$ and $q$ range over the unit interval. 
The results can be represented conveniently in the form of a heat plot. 
The plots below correspond to the output sequence $\fromtoout{1}{2}=01$.
\par
\begin{wrapfigure}[20]{r}{0pt}
  \begin{minipage}[r]{229pt}
    \footnotesize
    \begin{tikzpicture}
      \node[anchor=south west,inner sep=0] (plot) at (0,0) {\includegraphics[width=33mm]{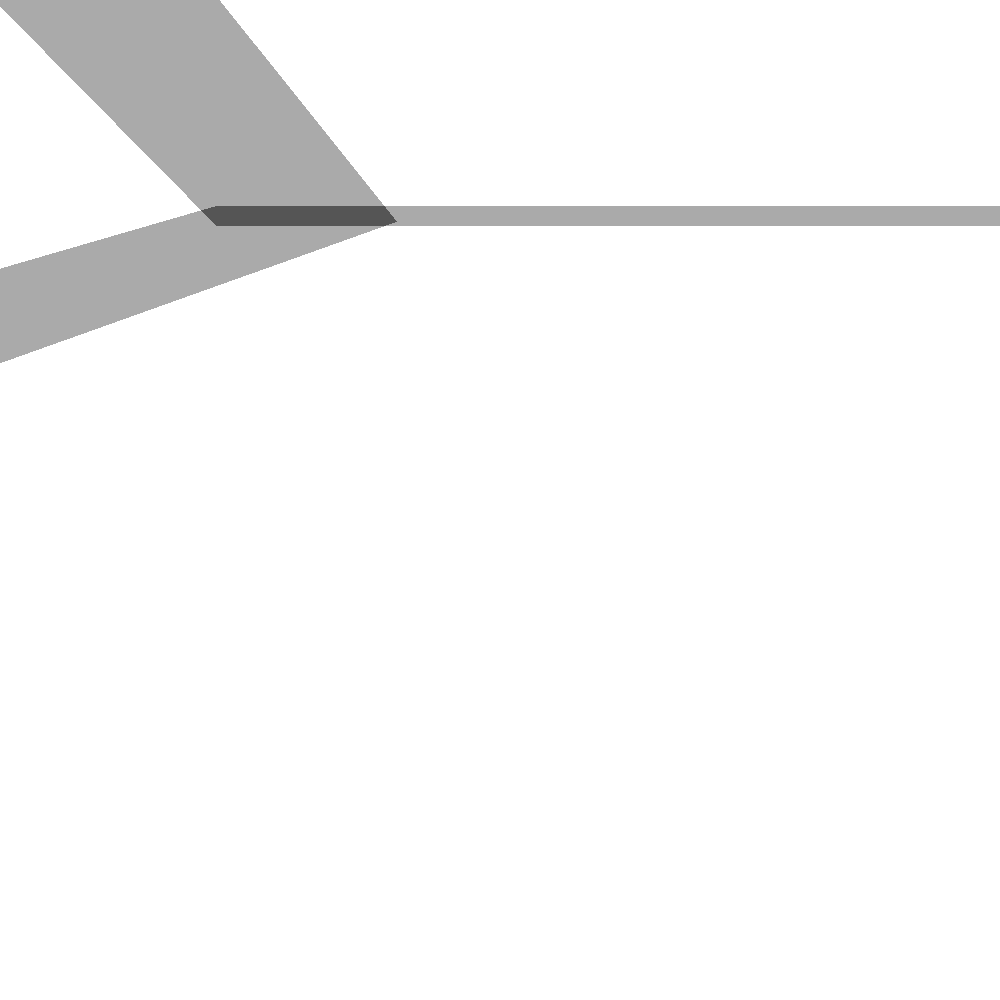}};
      \begin{scope}[x={(plot.south east)},y={(plot.north west)}]
        \draw (0,0) rectangle (1,1) ;
        \node[below] at (0,0) {$0$}; 
        \node[below] at (1,0) {$1$};
        \node[below=5pt] at (0.5,0) {$p$}; 
        \node[left] at (0,0) {$0$}; 
        \node[left] at (0,1) {$1$}; 
        \node[left=5pt] at (0,0.5) {$q$}; 
        \node[comment] at (.75,.25) {$\epsilon=2\%$};
      \end{scope}
    \end{tikzpicture}
    \begin{tikzpicture}
      \node[anchor=south west,inner sep=0] (plot) at (0,0) {\includegraphics[width=33mm]{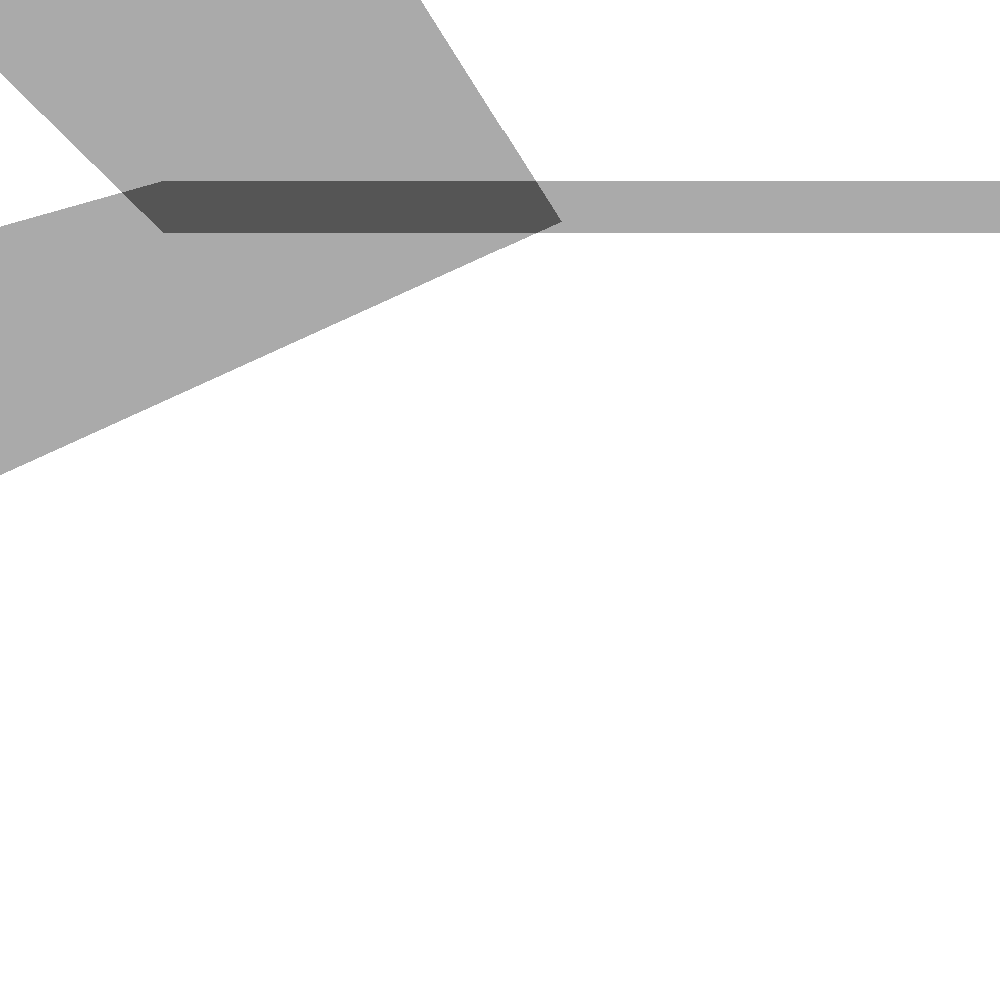}};
      \begin{scope}[x={(plot.south east)},y={(plot.north west)}]
        \draw (0,0) rectangle (1,1) ;
        \node[below] at (0,0) {$0$}; 
        \node[below] at (1,0) {$1$};
        \node[below=5pt] at (0.5,0) {$p$}; 
        \node[left] at (0,0) {$0$}; 
        \node[left] at (0,1) {$1$}; 
        \node[left=5pt] at (0,0.5) {$q$}; 
        \node[comment] at (.75,.25) {$\epsilon=5\%$};
      \end{scope}
    \end{tikzpicture}\\
    \begin{tikzpicture}
      \node[anchor=south west,inner sep=0] (plot) at (0,0) {\includegraphics[width=33mm]{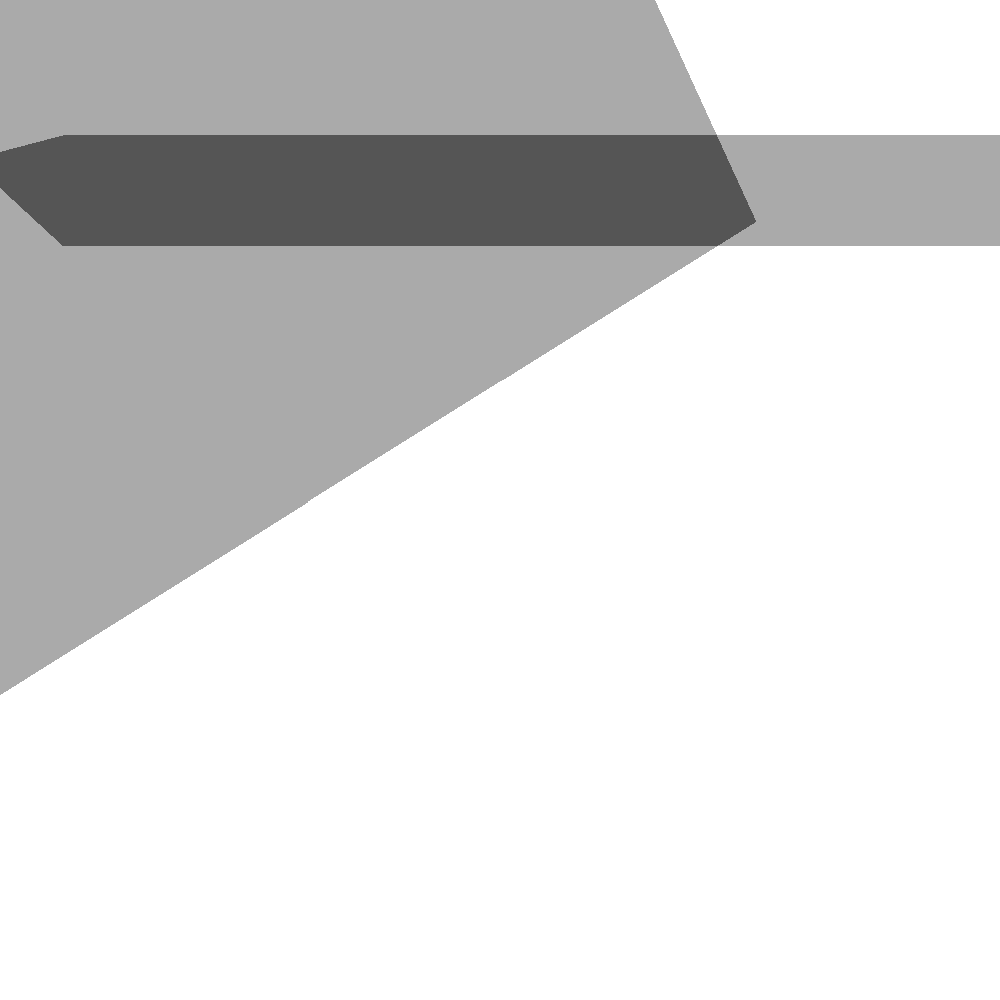}};
      \begin{scope}[x={(plot.south east)},y={(plot.north west)}]
        \draw (0,0) rectangle (1,1) ;
        \node[below] at (0,0) {$0$}; 
        \node[below] at (1,0) {$1$};
        \node[below=5pt] at (0.5,0) {$p$}; 
        \node[left] at (0,0) {$0$}; 
        \node[left] at (0,1) {$1$}; 
        \node[left=5pt] at (0,0.5) {$q$}; 
        \node[comment] at (.75,.25) {$\epsilon=10\%$};
      \end{scope}
    \end{tikzpicture}
    \begin{tikzpicture}
      \node[anchor=south west,inner sep=0] (plot) at (0,0) {\includegraphics[width=33mm]{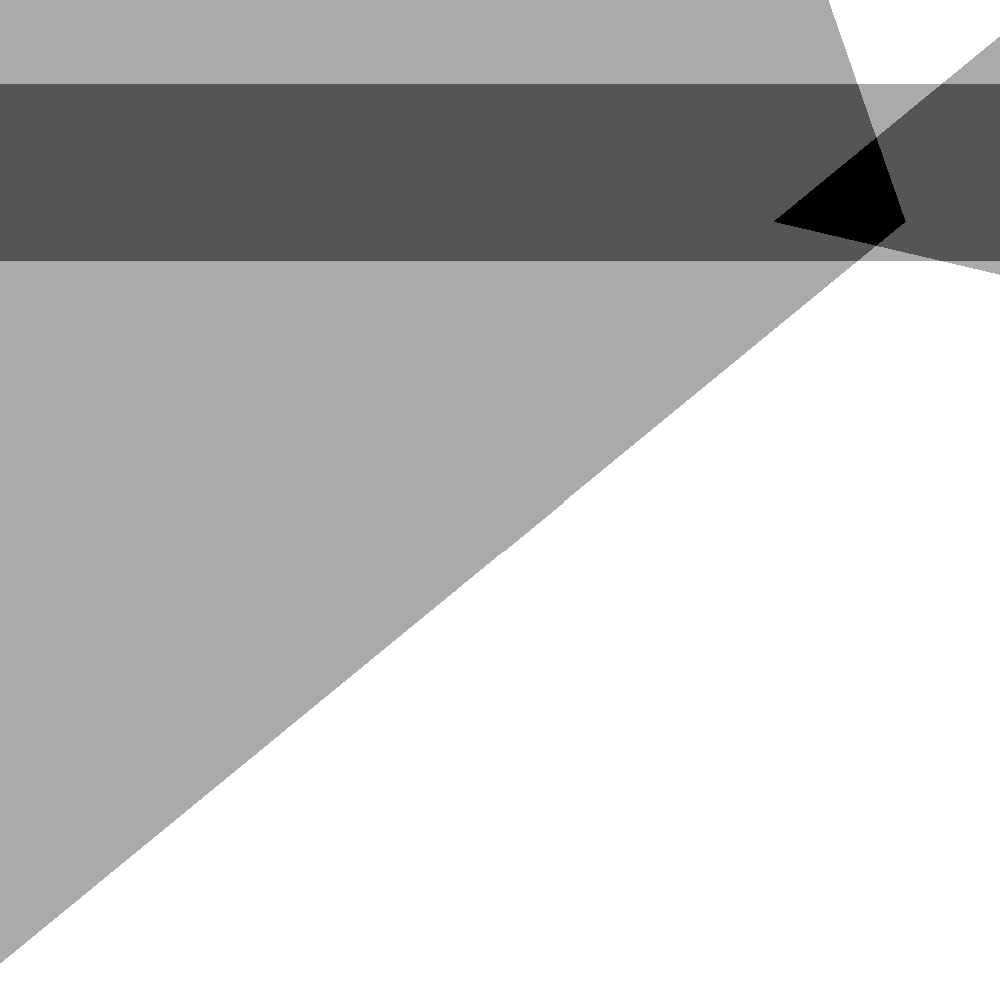}};
      \begin{scope}[x={(plot.south east)},y={(plot.north west)}]
        \draw (0,0) rectangle (1,1) ;
        \node[below] at (0,0) {$0$}; 
        \node[below] at (1,0) {$1$};
        \node[below=5pt] at (0.5,0) {$p$}; 
        \node[left] at (0,0) {$0$}; 
        \node[left] at (0,1) {$1$}; 
        \node[left=5pt] at (0,0.5) {$q$}; 
        \node[comment] at (.75,.25) {$\epsilon=15\%$};
      \end{scope}
    \end{tikzpicture}
  \end{minipage}
\end{wrapfigure}
The number of maximal state sequences clearly depends on the transition probabilities $p$ and $q$. 
In the rather large parts of `probability space' that are coloured white, we get a single maximal sequence---as we would for HMMs---, but there are contiguous regions where we see a higher number appear. 
In the present example (binary chain of length two), the highest possible number of maximal sequences is of course four. 
In the dark grey area, there are three maximal sequences, and two in the light grey regions. 
The plots show what happens when we let $\epsilon$ increase: the grey areas expand and the number of maximal sequences increases. 
For $\epsilon=15\%$, we even find a small area (coloured black) where all four possible state sequences are maximal: locally, due to the relatively high imprecision of our local models, we cannot give any useful robust estimates of the state sequence producing the output sequence $\fromtoout{1}{2}=01$.
\par
For small $\epsilon$, the areas with more than one maximal state sequence are quite small and seem to resemble strips that narrow down to lines as $\epsilon$ tends to zero. 
This suggests that we should be able to explain at least qualitatively where these areas come from by looking at compatible precise models: the regions where an iHMM produces different maximal (mutually incomparable) sequences, are widened versions of loci of indifference for precise HMMs. 
\par
By a \emph{locus of indifference}, we mean the set of $(p,q)$ that correspond to two given state sequences $\fromtoxstate{1}{2}$ and $\fromtoxhatstate{1}{2}$ having equal posterior probability:
\begin{equation*}
  p(\fromtoxstate{1}{2}\vert\fromtoout{1}{2})=p(\fromtoxhatstate{1}{2}\vert\fromtoout{1}{2}),
\end{equation*}
or, provided that $p(\fromtoout{1}{2})>0$,
\begin{equation*}
  p(\fromtoxstate{1}{2},\fromtoout{1}{2})=p(\fromtoxhatstate{1}{2},\fromtoout{1}{2}).
\end{equation*}
In our example where $\fromtoout{1}{2}=01$, we find the following expressions for each of the four possible state sequences:
\begin{align*}
  p(00,01)&=mr(1-r)p\\
  p(01,01)&=mr(1-s)(1-p)\\
  p(10,01)&=(1-m)s(1-r)q\\
  p(11,01)&=(1-m)s(1-s)(1-q)
\end{align*}
By equating any two of these expressions, we express that the corresponding two state sequences have an equal posterior probability. 
Since the resulting equations are a function of $p$ and $q$ only, each of these six possible combinations defines a locus of indifference.
All of them are depicted as lines in the following figure.

\begin{wrapfigure}[20]{r}{0pt}
  \begin{tikzpicture}[scale=6.9]
    \draw (0,0) rectangle (1,1);
    \node (hulp) at (0,1.05) {};
    \node[below] at (0,0) {$0$}; 
    \node[below] at (1,0) {$1$};
    \node[below=10pt] at (0.5,0) {$p$}; 
    \node[left] at (0,0) {$0$}; 
    \node[left] at (0,1) {$1$}; 
    \node[left=10pt] at (0,0.5) {$q$}; 
    \draw[thin] (7/9,0) -- (7/9,1) node[sloped,pos=.5,fill=white,rotate=180,scale=1.25] {\tiny$00-01$};
    \draw[thin] (0,0) -- (1,8/27)  node[sloped,pos=.5,fill=white,scale=1.25] {\tiny$00-10$};
    \draw[thin] (0,1) -- (1,173/189) node[sloped,pos=.5,fill=white,scale=1.25] {\tiny$00-11$};
    \draw[thin] (1,0) -- (1/28,1) node[sloped,pos=.5,fill=white,scale=1.25] {\tiny$01-10$};
    \draw[thin] (1,1) -- (0,19/27) node[sloped,pos=.5,fill=white,scale=1.25] {\tiny$01-11$};
    \draw[thin] (0,7/9) -- (1,7/9) node[sloped,pos=.1,fill=white,scale=1.25] {\tiny$10-11$};
    \draw[blue,very thick] (1/4,7/9) -- (1,7/9);
    \draw[blue,very thick] (1/4,7/9) -- (1/28,1);
    \draw[blue,very thick] (1/4,7/9) -- (0,19/27);
    \node[circle,fill=blue!20,inner sep=2.5pt] at (1/3,1/3) {$11$};
    \node[circle,fill=blue!20,inner sep=2.5pt] at (1/3,8/9) {$10$};
    \node[circle,fill=blue!20,inner sep=2.5pt] at (1/15,13/15) {$01$};
  \end{tikzpicture}
\end{wrapfigure}
Parts of these loci, depicted in blue (darker and bolder in monochrome versions of this paper) demarcate the three regions where the state sequences $01$, $10$ and $11$ are optimal (have the highest posterior probability).
\par
What happens when the transition models become imprecise? 
Roughly speaking, nearby values of the original $p$ and $q$ enter the picture, effectively turning the loci (lines) of indifference into bands of incomparability: the emergence of regions with two and more maximal sequences can be seen to originate from the loci of indifference; compare the figure for these loci with the heat plots given above.

\subsection{Extending the argument to a chain of length three}
For a chain of length three, we can determine the loci of indifference for precise models in a completely analogous manner. 
If we use the same marginal model and emission model as in the previous example, the resulting lines of indifference for the output sequence $000$ look as follows.

\begin{minipage}[l]{230pt}
  \begin{tikzpicture}
    \node[anchor=south west,inner sep=0] (plot) at (0,0) {\includegraphics[width=70mm]{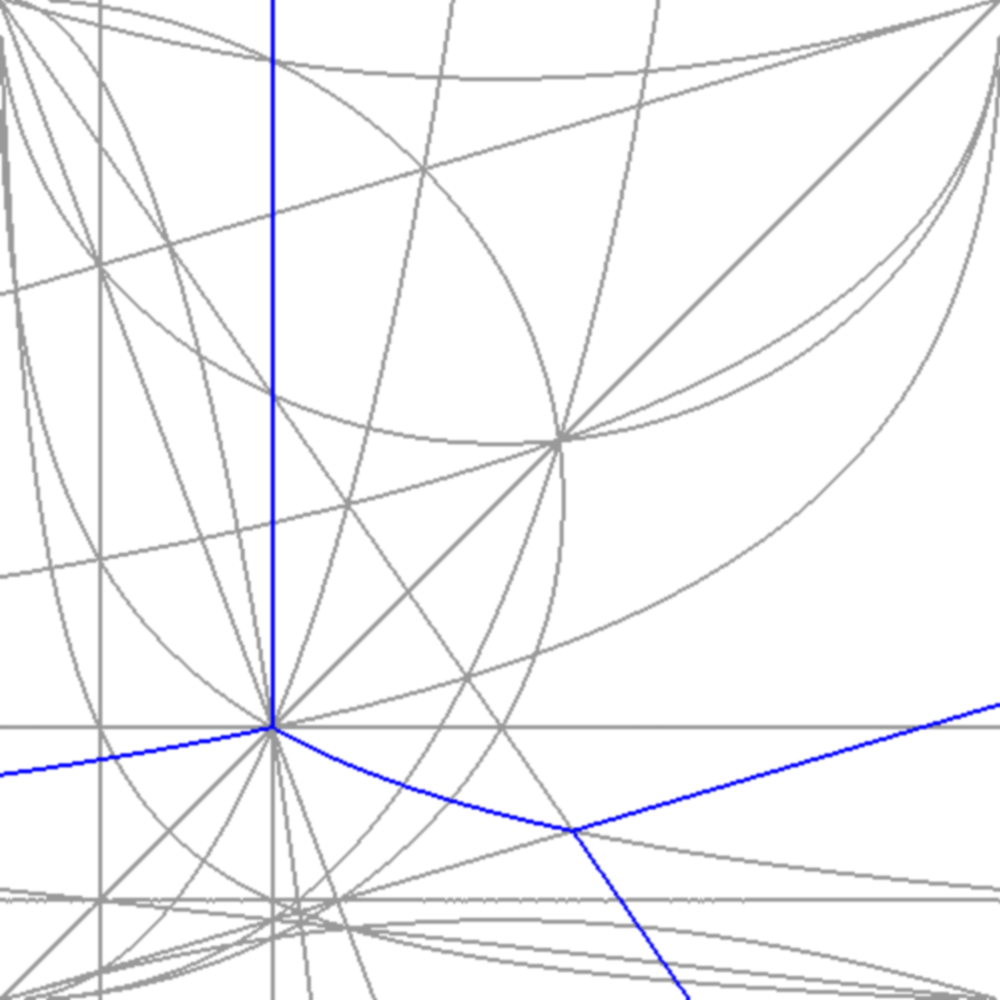}};
    \begin{scope}[x={(plot.south east)},y={(plot.north west)}]
      \draw (0,0) rectangle (1,1) ;
      \node[below] at (0,0) {$0$}; 
      \node[below] at (1,0) {$1$};
      \node[below=5pt] at (0.5,0) {$p$}; 
      \node[left] at (0,0) {$0$}; 
      \node[left] at (0,1) {$1$}; 
      \node[left=5pt] at (0,0.5) {$q$}; 
    \end{scope}
  \end{tikzpicture}
\end{minipage}
\begin{minipage}[r]{180pt}
\footnotesize
  \begin{tikzpicture}
    \node[anchor=south west,inner sep=0] (plot) at (0,0) {\includegraphics[width=33mm]{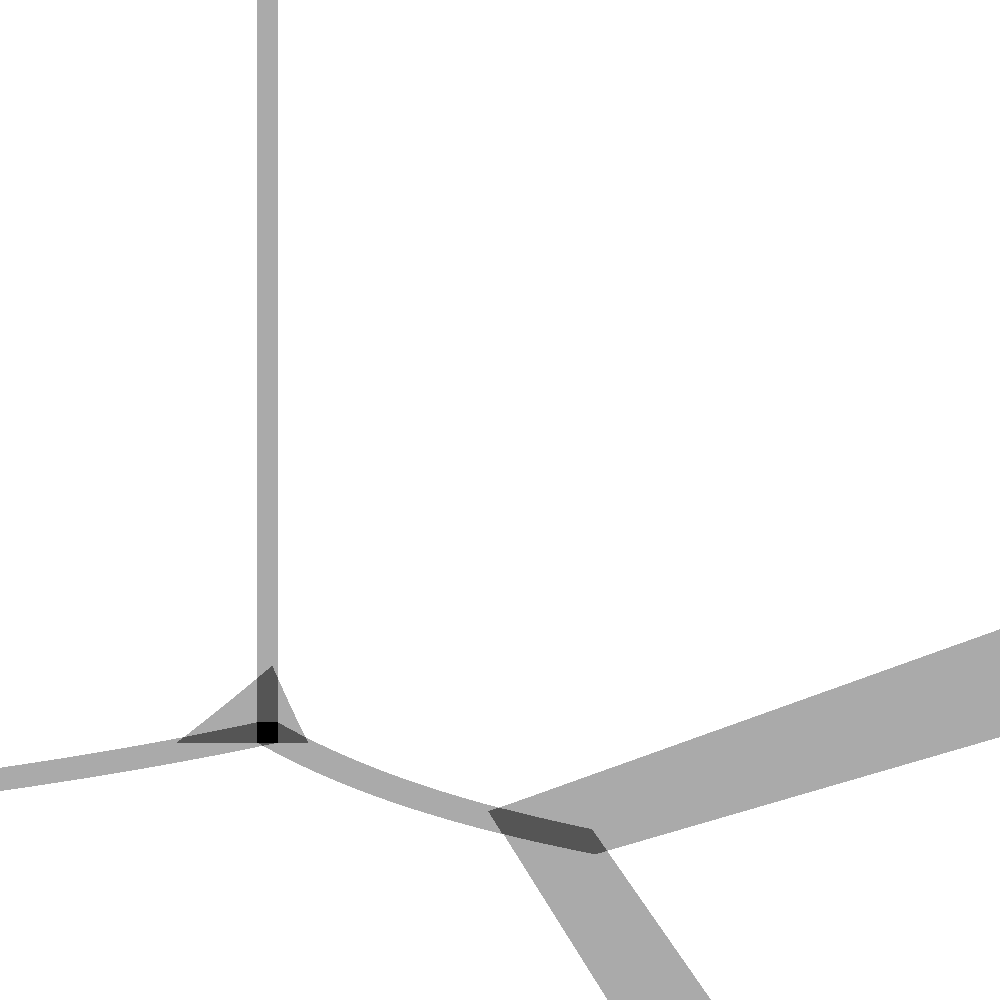}};
    \begin{scope}[x={(plot.south east)},y={(plot.north west)}]
      \draw (0,0) rectangle (1,1) ;
      \node[below] at (0,0) {$0$}; 
      \node[below] at (1,0) {$1$};
      \node[below=5pt] at (0.5,0) {$p$}; 
      \node[left] at (0,0) {$0$}; 
      \node[left] at (0,1) {$1$}; 
      \node[left=5pt] at (0,0.5) {$q$}; 
      \node[comment] at (.60,.85) {$\epsilon=2\%$};
    \end{scope}
  \end{tikzpicture}
  \begin{tikzpicture}
    \node[anchor=south west,inner sep=0] (plot) at (0,0) {\includegraphics[width=33mm]{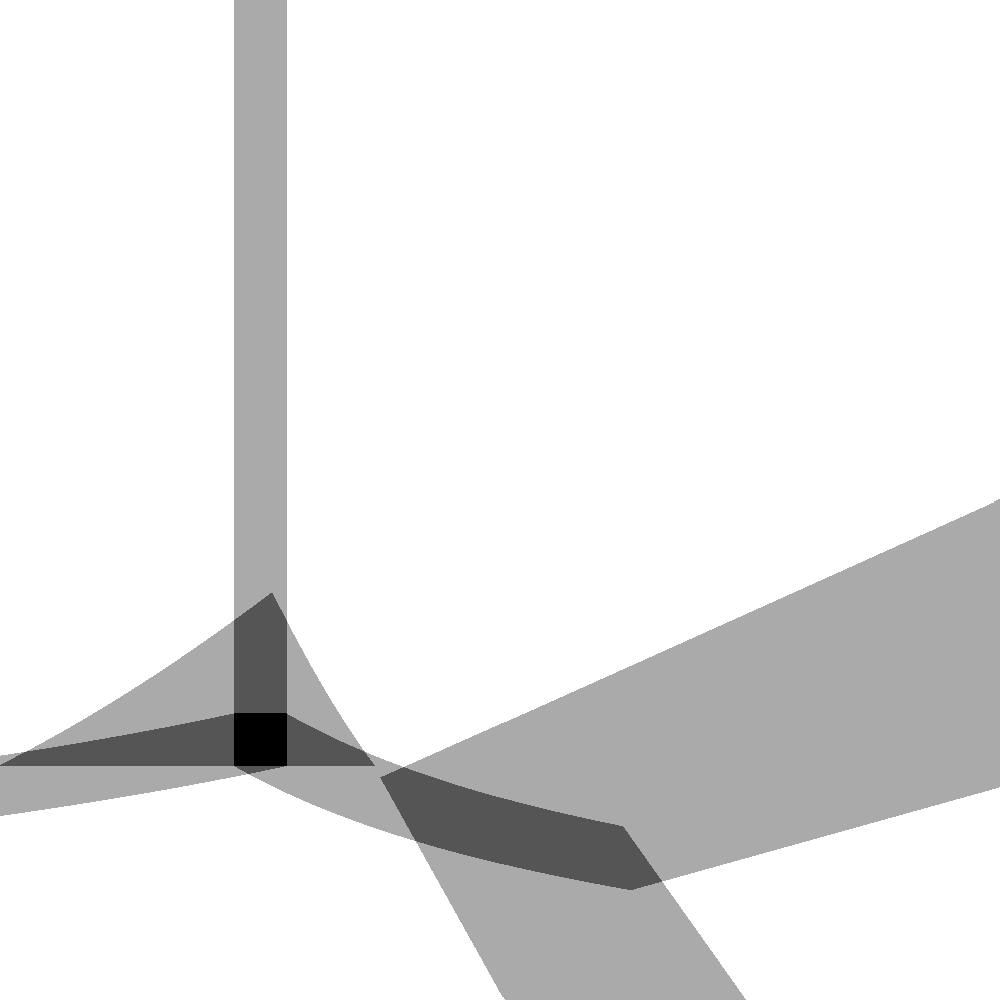}};
    \begin{scope}[x={(plot.south east)},y={(plot.north west)}]
      \draw (0,0) rectangle (1,1) ;
      \node[below] at (0,0) {$0$}; 
      \node[below] at (1,0) {$1$};
      \node[below=5pt] at (0.5,0) {$p$}; 
      \node[left] at (0,0) {$0$}; 
      \node[left] at (0,1) {$1$}; 
      \node[left=5pt] at (0,0.5) {$q$}; 
      \node[comment] at (.60,.85) {$\epsilon=5\%$};
    \end{scope}
  \end{tikzpicture}
\end{minipage}

\begin{wrapfigure}[10]{r}{0pt}
\begin{minipage}[r]{243pt}
\footnotesize
  \begin{tikzpicture}
    \node[anchor=south west,inner sep=0] (plot) at (0,0) {\includegraphics[width=33mm]{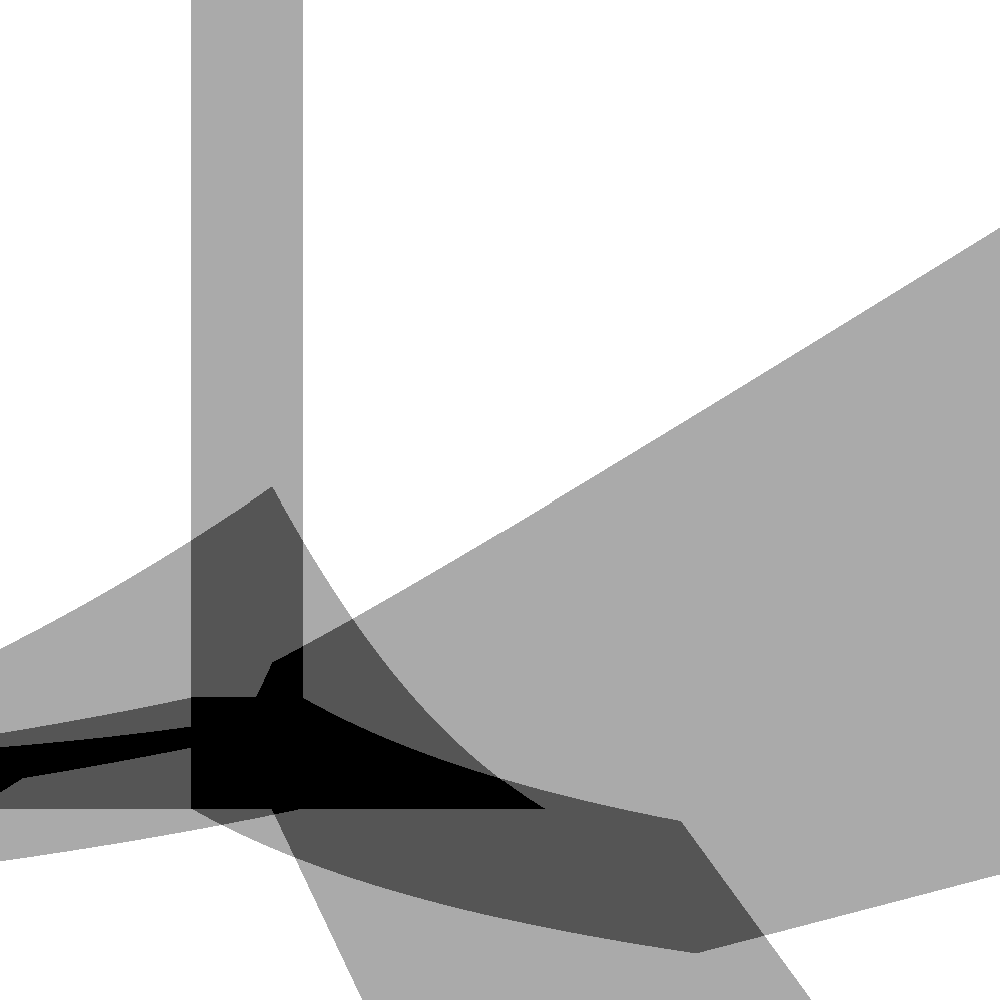}};
    \begin{scope}[x={(plot.south east)},y={(plot.north west)}]
      \draw (0,0) rectangle (1,1) ;
      \node[below] at (0,0) {$0$}; 
      \node[below] at (1,0) {$1$};
      \node[below=5pt] at (0.5,0) {$p$}; 
      \node[left] at (0,0) {$0$}; 
      \node[left] at (0,1) {$1$}; 
      \node[left=5pt] at (0,0.5) {$q$}; 
      \node[comment] at (.60,.85) {$\epsilon=10\%$};
    \end{scope}
  \end{tikzpicture}
\hspace{3mm}
  \begin{tikzpicture}
    \node[anchor=south west,inner sep=0] (plot) at (0,0) {\includegraphics[width=33mm]{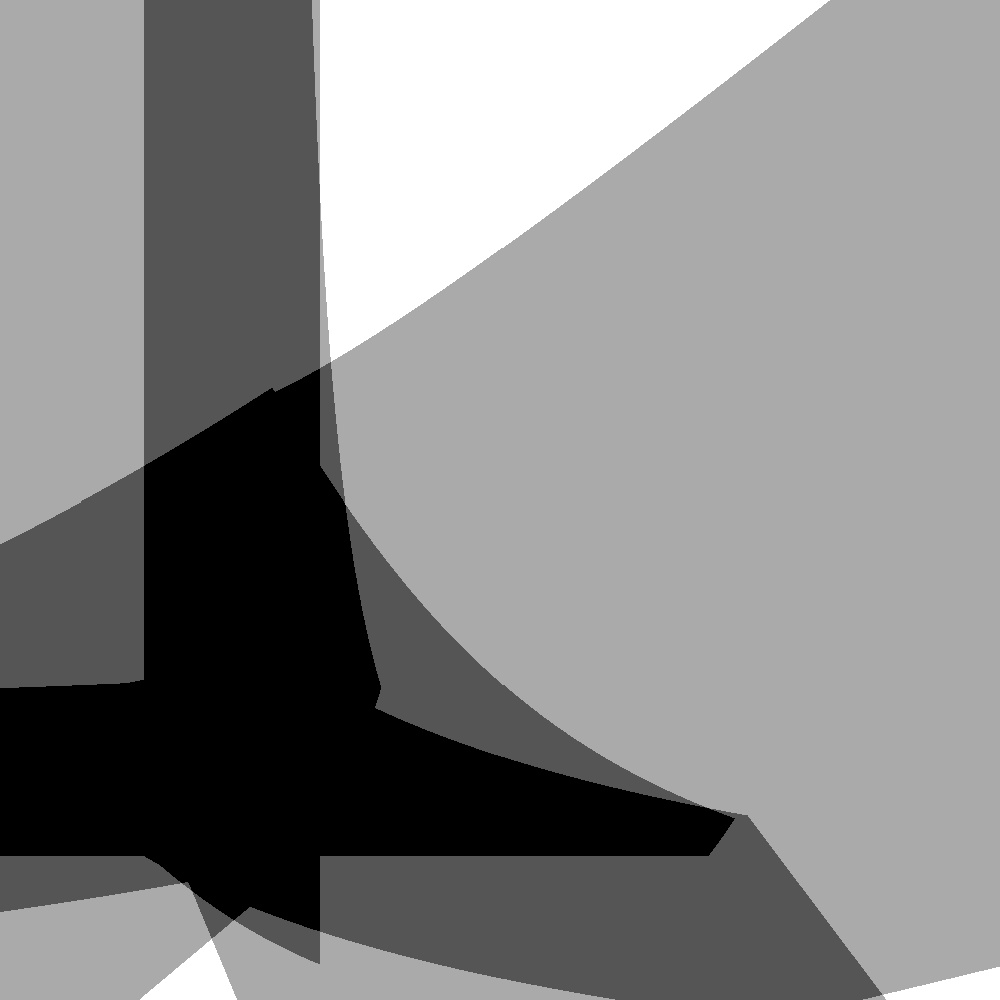}};
    \begin{scope}[x={(plot.south east)},y={(plot.north west)}]
      \draw (0,0) rectangle (1,1) ;
      \node[below] at (0,0) {$0$}; 
      \node[below] at (1,0) {$1$};
      \node[below=5pt] at (0.5,0) {$p$}; 
      \node[left] at (0,0) {$0$}; 
      \node[left] at (0,1) {$1$}; 
      \node[left=5pt] at (0,0.5) {$q$}; 
      \node[comment] at (.60,.85) {$\epsilon=15\%$};
    \end{scope}
  \end{tikzpicture}
\end{minipage}
\end{wrapfigure}
\mbox{}\\
If we compare this with the visualisation of the number of
maximal elements for the same sequence, the resemblance is again quite
striking. Not that in this example, the black areas correspond to a
number of maximal sequences that is at least four.

\section{Showing off the algorithm's power}
\label{sec:example}

In order to demonstrate that our algorithm is indeed quite efficient, we let it determine the maximal sequences for a random output sequence of length $100$. 
\par
We consider the same binary stationary HMM as we presented above, but with the following precise marginal and emission probabilities:
\begin{equation*}
  \text{$m=0.1$, $r=0.98$ and $s=0.01$}.
\end{equation*}
In practical applications, the probability for an output variable to have the same value as the corresponding hidden state variable is usually quite high, which explains why we have chosen $r$ and $s$ to be close to $1$ and to $0$, respectively. 
In contrast with the previous experiments, we do not let the transition probabilities vary, but fix them to the following values:
\begin{equation*}
  p=0.6
  \text{ and }
  q=0.5.
\end{equation*}
The iHMM we use to determine the maximal sequences is then generated by mixing these precise local models with a vacuous one, using the same mixture coefficient $\epsilon$ for the marginal, transition and emission models.
In Figure~\ref{fig:100}, we display the five maximal sequences corresponding to the highlighted output sequence, and $\epsilon=2\%$. 
Since the emission probabilities were chosen to be quite accurate, it is no surprise that the output sequence itself is one of the maximal sequences. 
In addition, we have indicated in bold face the state values that differ from the outputs in the output sequence. 
We see that the model represents more indecision about the values of the state variables as we move further away from the end of the sequence. 
This is a result of a phenomenon called \emph{dilation}, which---as has been
\begin{wrapfigure}[51]{R}{0pt}

\hspace{1mm}
 \begin{tikzpicture}[scale=3]
  \begin{scope}[node distance=10pt]
  \node (k1) [rotate=-90,fill=blue!10] at (0,5)
  {\textls
  {1110000011010000100010000111111111101111010000101101101100001101001001100110101100011011000010111001}};
  \node [rotate=-90,above of=k1,node distance=15pt] (k2)
  {\textls
  {1110000011010000100010000111111111101111010000101101101100001101001001100110101100011011000010111001}};
  \node [rotate=-90,above of=k2] (k3)
  {\textls
  {11{\bf0}0000011010000100010000111111111101111010000101101101100001101001001100110101100011011000010111001}};
  \node [rotate=-90,above of=k3] (k4)
  {\textls
  {11100000{\bf0}1010000100010000111111111101111010000101101101100001101001001100110101100011011000010111001}};
  \node [rotate=-90,above of=k4] (k5)
  {\textls
  {11100000110{\bf0}0000100010000111111111101111010000101101101100001101001001100110101100011011000010111001}};
  \node [rotate=-90,above of=k5] (k6)
  {\textls
  {1110000011010000{\bf0}00010000111111111101111010000101101101100001101001001100110101100011011000010111001}};
  \end{scope}
\end{tikzpicture}
\caption{}
\label{fig:100}
\end{wrapfigure}

\noindent
noted in another paper \cite{cooman2009}---tends to occur when inferences in a credal tree proceed from the leaves towards the root. 
\par
As for the efficiency of our algorithm: it took about $0.2$ seconds to calculate these $5$ maximal sequences.\footnote{Running a Python programme on a 2012 MacBookPro.} 
The reason why this could be done so fast is that the algorithm is linear in the number of solutions, which in this case is only $5$. 
If we let $\epsilon$ grow to for example $5\%$, the number of maximal sequences for the same output sequence is $764$ and these can be determined in about $32$ seconds.
This demonstrates that the complexity is indeed linear in the number of solutions and that the algorithm can efficiently calculate the maximal sequences even for long output sequences.

\section{An application in optical character recognition}
\label{sec:app}

As a first and simple toy application, we use the EstiHMM algorithm to try and detect mistakes in words. 
A written word is regarded as a hidden sequence $\fromxstate{1}$, generating an output sequence $\fromout{1}$ by artificially corrupting the word. 
In this way, we simulate observation processes that are not perfectly reliable, such as the output of an Optical Character Recognition (OCR) device. 
This leads to observed output sequences that may contain errors, which we will try and detect. 
We compare our results with those of the Viterbi algorithm and show that our algorithm offers a more robust solution. 

\subsection{Generating the HMM}
A local uncertainty model must be identified for each original and observed letter: a marginal model $\statelpr{1}$ for the first letter $\statevar{1}$ of the original word, a transition model $\stateclpr{k}{k-1}$ for the subsequent letters $\statevar{k}$, with $k\in\{2,\dots,n\}$, and an emission model $\outclpr{k}$ for the observed letters $\outvar{k}$, with $k\in\{1,\dots,n\}$. 
For the sake of simplicity, we assume stationarity, making the transition and emission models independent of~$k$.

For the identification of the local models of the iHMM, we use the imprecise Dirichlet model (IDM, \cite{walley1996b}). 
For example, for the marginal model $\statelpr{1}$, applying the IDM leads to the following simple identification:
\begin{equation*}
  \statelprone[\sing{x}]=\frac{n_x}{s+\sum_{z\in\states{}}n_z}
  \text{ and }
  \stateuprone[\sing{x}]=\frac{s+n_x}{s+\sum_{z\in\states{}}n_z},\qquad\qquad\qquad
\end{equation*}
where $n_z$ counts the words in the sample text for which the first letter $\statevar{1}=\zstate{}$ and $s$ is a (positive real) hyperparameter that expresses the degree of caution in the inferences. 
In this example, we let $s=2$.
For the transition and emission models, we can proceed similarly, by counting the transitions of one character to another, respectively in the original word or during the observation process. 
In this way we obtain lower and upper transition and emission probabilities for singletons, which, as pointed out in Section~\ref{subsec:comments}, suffice to run the algorithm. 
Note that if $s$ were chosen to be zero, the local models would become precise and the EstiHMM algorithm would reduce to the Viterbi algorithm (or a version of it that does not resolve ties arbitrarily, see Section~\ref{sec:compare-viterbi}).

For the identification of the local models in the precise HMM, we use a similar but now precise Dirichlet model approach, with a Perks's prior that has the same prior strength $s=2$.
As an example, for the precise marginal model $\statepr{1}$, this leads to the following simple identification:
\begin{equation*}
  \stateprone[\sing{x}]=\frac{\nicefrac{s}{\lvert\states{}\rvert}+n_x}{s+\sum_{z\in\states{}}n_z},
\end{equation*}
where $\lvert\states{}\rvert$ is the number of states.

\subsection{Results}
Let us first discuss a specific example of the difference between the actual results we obtained using the Viterbi and the EstiHMM algorithms, in order to illustrate an important advantage of the latter. 
OCR software has mistakenly read the Italian word QUANTO as OUANTO.
Using a precise model, the Viterbi algorithm does not correct this mistake, as it suggests that the original correct word is DUANTO.
The EstiHMM algorithm on the other hand, using an imprecise model, returns CUANTO, DUANTO, FUANTO and QUANTO as maximal (undominated) solutions, including the correct one. 
Of course we would still have to pick the correct solution out of this set of suggestions---for example by using a dictionary or a human opinion---, but by using the EstiHMM algorithm, we have managed to reduce the search space from all possible five letter words to the much smaller set of four words given above. 
Notice that the solution of the Viterbi algorithm is included in the maximal solutions EstiHMM returns. 
One can easily prove that this will always be the case.

To simulate an OCR device, we have artificially corrupted the first $200$ words of the first {\itshape canto} of Dante’s \emph{Divina Commedia}, resulting in $137$ correctly read words and $63$ words containing errors. 
We try and correct these errors using both the EstiHMM and the Viterbi algorithm, and compare both approaches. 
The results are summarised in the following table.
\begin{center}
  \begin{tabular}{llll}
    & \textit{total number} & \textit{correct after OCR} & \textit{wrong after OCR}\\[.5ex]
    \textit{total number} & $200$ ($100\%$) & $137$ ($68.5\%$) & $63$ ($31.5\%$)\\[.5ex]
    \textbf{Viterbi} & & & \\
    \textit{correct solution} & $157$ ($78.5\%$)	& $132$ & $25$\\
    \textit{wrong solution} & $43$ ($21.5\%$) & $5$ & $38$\\[.5ex]
    \textbf{EstiHMM}& & & \\
    \textit{correct solution included} & $172$ ($86\%$) & $137$ & $35$\\
    \textit{correct solution not included} & $28$ ($14\%$) & $0$ & $28$\\
  \end{tabular}
\end{center}
For the Viterbi algorithm, the main conclusion is that applying it to the output of the OCR device results in a decreased number of incorrect words. 
The number of correct words rises from $68.5\%$ to $78.5\%$. 
However, the Viterbi algorithm also introduces new errors for $5$ correctly read words. 

The EstiHMM algorithm manages to suggest the original correct word as one of her solutions in $86\%$ of the cases. 
Assuming we are able to detect this correct word, the percentage of
correct words rises from $68.5\%$ to $86\%$ by applying the EstiHMM
algorithm, thereby outperforming the Viterbi algorithm by almost $10\%$. 
Secondly, we also notice that the EstiHMM algorithm has never introduced new errors in words that were already correct.

Of course, since the EstiHMM algorithm allows for multiple solutions, instead of a single one, it is no surprise that we manage to increase the amount of times we suggest the correct solution. 
This would happen even if we added random extra solutions to the solution of the Viterbi algorithm. 
Giving extra solutions can only be seen as an improvement if this is done smartly. 
To investigate this, we distinguish between the cases where the EstiHMM algorithm returns a single solution, and those where it returns multiple solutions; and look at how the Viterbi and EstiHMM algorithms compare in those two cases.

The EstiHMM algorithm returned a single solution for $155$ of the $200$ words. 
As we have already mentioned above, this single solution will always coincide with the one given by the Viterbi algorithm. 
The results for the EstiHMM (and Viterbi) algorithms are summarised in the following table. 
\begin{center}
  \begin{tabular}{llll}
    \textbf{EstiHMM (single solutions)} & \textit{total number} & \textit{correct after OCR} & \textit{wrong after OCR}\\[.5ex]
    \textit{total number} & $155$ ($100\%$) & $129$ ($83.2\%$) & $26$ ($16.8\%$)\\
    \textit{single correct solution} & $134$ ($86.5\%$) & $129$ & $5$\\
    \textit{single wrong solution} & $21$ ($13.5\%$) & $0$ & $21$\\
  \end{tabular}
\end{center}
The percentage of words correctly read by the OCR software is now $83.2\%$ instead of the global $68.5\%$.
When the result of the EstiHMM algorithm is a single solution, this serves as an indication that the word we are trying to correct has a fairly high probability of already being correct. 
We also see that the eventual percentage of correct words is $86.5\%$,
which is only a slight improvement over the $83.2\%$ that were already correct before applying the algorithms.

Next, we look at the remaining $45$ words, for which the EstiHMM algorithm returns more than one maximal element.
In this case, we do see a significant difference between the results of the Viterbi and the EstiHMM algorithm, since the Viterbi algorithm never returned more than one solution.\footnote{In theory, the Viterbi algorithm can return multiple indifferent solutions, but in practice it almost never does.}
The results for both algorithms are listed in the following table.
\begin{center}
  \begin{tabular}{llll}
    & \textit{total number} & \textit{correct after OCR} &
    \textit{wrong after OCR}\\[.5ex]
    \textit{total number} & $45$ ($100\%$) & $8$ ($17.8\%$) & $37$ ($82.2\%$)\\[.5ex]
    \textbf{EstiHMM (multiple solutions)}&&&\\
    \textit{correct solution included} & $38$ ($84.4\%$) & $8$ & $30$\\
    \textit{correct solution not included} & $7$ ($15.6\%$) & $0$ & $7$\\[.5ex]
    \textbf{Viterbi}&&&\\
    \textit{correct solution} & $23$ ($51.1\%$) & $3$ & $20$\\
    \textit{wrong solution} & $22$ ($48.9\%$) & $5$ & $17$\\
  \end{tabular}
\end{center}
A first and very important conclusion to be drawn from this table, is that EstiHMM's being indecisive serves as a rather strong indication that the word we are applying the algorithm to does indeed contain errors: when the EstiHMM algorithm returns multiple solutions, the original word has been incorrectly read by the OCR software in $82.2\%$ of cases.

A second conclusion, related to the first, is that EstiHMM's being indecisive also serves as an indication that the result returned by the Viterbi algorithm is less reliable: the percentage of correct words after applying the Viterbi algorithm has dropped to $51.1\%$, in contrast with the global percentage of $78.5\%$. 
The EstiHMM algorithm, however, still gives the correct word as one of
its solutions in $84.4\%$ of cases, which is almost as high as its
global percentage of $86\%$.
If the set given by the EstiHMM algorithm contains the correct solution, the Viterbi algorithm manages to
pick this correct solution out of the set in $60.5\%$ of cases.
We see that the EstiHMM algorithm seems to notice that we are dealing with more difficult words and therefore gives us multiple solutions, between which it cannot decide.

We conclude from this experiment that EstiHMM can be usefully applied to make the results of the Viterbi algorithm more robust, and to gain an appreciation of where it is likely to go wrong. 
If the EstiHMM algorithm returns multiple solutions, this serves as an indication for robustness issues that would occur if we solved the same problem with the Viterbi algorithm. 
In that case, EstiHMM returns multiple solutions, between which it cannot decide, whereas the Viterbi algorithm will pick one out of this set in a fairly arbitrary way---depending on the choice of the prior---, thereby increasing the amount of errors made. 
The advantage of our method is that it detects such robustness issues, leaving us with the option of solving them in different ways. 
A first method would be to pick the correct word out of the set of possible solutions in some non-arbitrary way.
For the current application this could be done using a dictionary or a human expert. 
Another method for dealing with robustness issues would be to conclude that we need more data in order to build a better model, less sensitive to the choice of the prior. 
After applying the EstiHMM algorithm again, using the new model, we could check whether the robustness issues have been satisfactorily dealt with.

\section{Conclusions}
\label{sec:conclusion}
Interpreting the graphical structure of an imprecise hidden Markov model as a credal network under epistemic irrelevance leads to an efficient algorithm for finding the maximal (undominated) state sequences for a given output sequence. 
Preliminary simulations show that, even for transition models with non-negligible imprecision, the number of maximal elements seems to be reasonably low in fairly large regions of parameter space, with high numbers of maximal elements concentrated in fairly small regions.
It remains to be seen whether this observation can be corroborated by a deeper theoretical analysis.

A first and simple toy application clearly shows that the EstiHMM algorithm is able to robustify the results of the Viterbi algorithm. 
Not only does it reduce the amount of wrong conclusions by giving extra possible solutions, but it does so in an intelligent manner. 
It adds extra solutions in the specific cases where the Viterbi algorithm has robustness issues, thereby also serving as an indicator of the reliability of the result given by the Viterbi algorithm. 
An interesting further avenue of research would be to compare the EstiHMM algorithm with other methods that also try to robustify the Viterbi algorithm. 
Although most of these methods start from a precise model and introduce safety rather than imprecision by for example trying to find the $k$ most probable solutions, their practical applications are similar. 
A comparison of their results with ours could therefore prove to be interesting. 
We leave this as a topic of future research.

It is not clear to us, at this point, whether ideas similar to the ones we discussed above could be used to derive similarly efficient algorithms for imprecise hidden Markov models whose graphical structure is interpreted as a credal network under strong independence \cite{cozman2000}.
This could be interesting and relevant, as the more stringent independence condition leads to joint models that are less imprecise, and therefore produce fewer maximal state sequences (although they will be contained in our solutions).

\section*{Acknowledgements}
Jasper De Bock is a Ph.D.~Fellow of the Research Foundation - Flanders (FWO) at Ghent University, and has developed the algorithm described here in the
context of his Master's thesis, in close cooperation with Gert de Cooman, who acted as his thesis supervisor. 
The present article describes the main results of this Master's thesis.
\par
Research by De Cooman has been supported by SBO project~060043 of the IWT-Vlaanderen. 
This paper has benefitted from discussions with Marco Zaffalon, Alessandro Antonucci, Alessio Benavoli, Cassio de Campos, Erik Quaeghebeur and Filip Hermans.
We are grateful to Marco Zaffalon for providing travel funds allowing us to visit IDSIA and discuss practical applications. 

\bibliographystyle{plain}

\appendix

\section{Proofs of main results}\label{appendix}
In this appendix, we justify the formulas~\eqref{eq:lower:state:out:mass}, \eqref{eq:upper:state:out:mass}, \eqref{eq:target:equal}, \eqref{eq:target:equal:final}, \eqref{eq:target:different}, \eqref{eq:criterion-at-k} and \eqref{eq:criterion-at-n}; and we give proofs for Proposition~\ref{prop:PenEpos} and Theorems~\ref{theorem:crucial}--\ref{theorem:essentialstep}.
We will frequently use terms such as positive, negative, decreasing and increasing. 
We therefore start by clarifying what we mean by them. 
For $x\in\reals$, we say that $x$ is \emph{positive} if $x>0$, \emph{negative} if $x<0$, \emph{non-negative} if $x\geq0$ and \emph{non-positive} if $x\leq0$.
We call a real-valued function $f$ defined on $\reals$:
\begin{enumerate}[label=\upshape(\roman*),leftmargin=*]
\item \emph{increasing} if $(\forall x,y\in\reals)(x>y\dan f(x)>f(y))$;
\item \emph{decreasing} if $(\forall x,y\in\reals)(x>y\dan f(x)<f(y))$;
\item \emph{non-decreasing} if $(\forall x,y\in\reals)(x>y\dan f(x)\geq f(y))$;
\item \emph{non-increasing} if $(\forall x,y\in\reals)(x>y\dan f(x)\leq f(y))$.
\end{enumerate}

\begin{proof}[Proof of Equation~\eqref{eq:lower:state:out:mass}] 
  For all $k\in\{1,\dots,n\}$, $\zstate{k-1}\in\states{k-1}$, $\fromzstate{k}\in\fromstates{k}$ and $\fromout{k}\in\fromouts{k}$ we infer from Equation~\eqref{eq:jointrecurse} that
  \begin{align*}
    \zinjointclpr[\indfromzstate{k}\indfromout{k}]{k}{k-1}
    &=\zinstateclpr[{\indclpr[\indfromzstate{k}\indfromout{k}]{k}}]{k}{k-1}\\
    &=\biggzinstateclpr[{\sum_{\xstate{k}\in\states{k}}\indsing{\xstate{k}}
      \xinindclpr[{\indzstate{k}(\xstate{k})\indfromzstate{k+1}\indfromout{k}}]{k}}]{k}{k-1}\\
    &=\zinstateclpr[{\indsing{\zstate{k}}
      \zinindclpr[{\indfromzstate{k+1}\indfromout{k}}]{k}}]{k}{k-1}.\\
    \intertext{Since $\zinindclpr[{\indfromzstate{k+1}\indfromout{k}}]{k}\geq0$ by~\ref{C1}, we see that~\ref{C2} transforms the above into}
    &=\zinstateclpr[{\indsing{\zstate{k}}}]{k}{k-1}
    \zinindclpr[{\indfromzstate{k+1}\indfromout{k}}]{k},\\
    \intertext{which can be reformulated as}
    &=\zinstateclpr[{\indsing{\zstate{k}}}]{k}{k-1}
    \zinoutclpr[{\indout{k}}]{k}\zinjointclpr[{\indfromzstate{k+1}\indfromout{k+1}}]{k+1}{k}\\
    &=\zinstateclpr[{\sing{\zstate{k}}}]{k}{k-1}
    \zinoutclpr[{\singout{k}}]{k}\zinjointclpr[{\indfromzstate{k+1}\indfromout{k+1}}]{k+1}{k},
  \end{align*}
  if we take into account Equation~\eqref{eq:factorisation}, since $\zinjointclpr[{\indfromzstate{k+1}\indfromout{k+1}}]{k+1}{k}\geq0$ by~\ref{C1}.
  
  Repeating these steps again and again eventually yields Equation~\eqref{eq:lower:state:out:mass}: 
  \begin{equation*}
    \zinjointclpr[\indfromzstate{k}\indfromout{k}]{k}{k-1}
    =\prod_{i=k}^n\zinstateclpr[\singzstate{i}]{i}{i-1}\zinoutclpr[\singout{i}]{i}.
  \end{equation*}
  In the last step, for $k=n$, we have used the equality $\zinindclpr[\singout{n}]{n}=\zinoutclpr[\singout{n}]{n}$, which follows from Equation~\eqref{eq:indrecurse}.
\end{proof}

\begin{proof}[Proof of Equation~\eqref{eq:upper:state:out:mass}] 
  For all $k\in\{1,\dots,n\}$, $\zstate{k-1}\in\states{k-1}$, $\fromzstate{k}\in\fromstates{k}$ and $\fromout{k}\in\fromouts{k}$ we infer from conjugacy and Equation~\eqref{eq:jointrecurse} that
  \begin{align*}
    \zinjointcupr[\indfromzstate{k}\indfromout{k}]{k}{k-1}
    &=-\zinjointclpr[-\indfromzstate{k}\indfromout{k}]{k}{k-1}\\
    &=-\zinstateclpr[{\indclpr[-\indfromzstate{k}\indfromout{k}]{k}}]{k}{k-1}\\
    &=-\biggzinstateclpr[{\sum_{\xstate{k}\in\states{k}}\indsing{\xstate{k}}
      \xinindclpr[{-\indzstate{k}(\xstate{k})\indfromzstate{k+1}\indfromout{k}}]{k}}]{k}{k-1}\\
    &=-\zinstateclpr[{\indsing{\zstate{k}}
      \zinindclpr[{-\indfromzstate{k+1}\indfromout{k}}]{k}}]{k}{k-1}\\
    &=-\zinstateclpr[{-\indsing{\zstate{k}}
      (-\zinindclpr[{-\indfromzstate{k+1}\indfromout{k}}]{k})})]{k}{k-1}.\\
    \intertext{Since $-\zinindclpr[{-\indfromzstate{k+1}\indfromout{k}}]{k}=\zinindcupr[{\indfromzstate{k+1}\indfromout{k}}]{k}\geq0$ by conjugacy and Lemma~\ref{lemma:basiscoherent}, we see that \ref{C2} and Equation~\eqref{eq:toegevoegdheid} transform the above into}
    &=-\big(-\zinindclpr[{-\indfromzstate{k+1}\indfromout{k}}]{k}\big)
    \zinstateclpr[{-\indsing{\zstate{k}}}]{k}{k-1}\\
    &=-\zinstatecupr[{\indsing{\zstate{k}}}]{k}{k-1}
    \zinindclpr[{-\indfromzstate{k+1}\indfromout{k}}]{k},\\
    \intertext{which can be reformulated as}
    &=-\zinstatecupr[{\indsing{\zstate{k}}}]{k}{k-1}
    \zinoutcupr[{\indout{k}}]{k}\zinjointclpr[{-\indfromzstate{k+1}\indfromout{k+1}}]{k+1}{k}\\
    &=\zinstatecupr[{\indsing{\zstate{k}}}]{k}{k-1}
    \zinoutcupr[{\indout{k}}]{k}\zinjointcupr[{\indfromzstate{k+1}\indfromout{k+1}}]{k+1}{k}\\
    &=\zinstatecupr[{\sing{\zstate{k}}}]{k}{k-1}
    \zinoutcupr[{\singout{k}}]{k}\zinjointcupr[{\indfromzstate{k+1}\indfromout{k+1}}]{k+1}{k},
  \end{align*}
   using conjugacy and Equation~\eqref{eq:factorisation}, since $\zinjointclpr[{-\indfromzstate{k+1}\indfromout{k+1}}]{k+1}{k}\leq0$. 
  This last inequality is true because we know that $\zinjointclpr[{-\indfromzstate{k+1}\indfromout{k+1}}]{k+1}{k}=-\zinjointcupr[{\indfromzstate{k+1}\indfromout{k+1}}]{k+1}{k}$ by conjugacy and that $\zinjointcupr[{\indfromzstate{k+1}\indfromout{k+1}}]{k+1}{k}\geq 0$ by Lemma~\ref{lemma:basiscoherent}.
  
  Repeating the steps above again and again, eventually yields Equation~\eqref{eq:upper:state:out:mass}:
  \begin{equation*}
    \zinjointcupr[\indfromzstate{k}\indfromout{k}]{k}{k-1}
    =\prod_{i=k}^n\zinstatecupr[\singzstate{i}]{i}{i-1}\zinoutcupr[\singout{i}]{i}.
  \end{equation*}
  In the last step, for $k=n$, we have used the equality $\zinindcupr[\singout{n}]{n}=\zinoutcupr[\singout{n}]{n}$, which follows from Equation~\eqref{eq:indrecurse} and conjugacy.
\end{proof}

\begin{lemma}\label{lemma:basiscoherent}
  Consider a coherent lower prevision $\jointlpr{}$ on $\stategambles{}$.
  Then $\min{f}\leq\jointlpr{}(f)\leq\jointupr{}(f)\leq\max{f}$ for all $f\in\stategambles{}$ and $\jointlpr{}(f)=\jointupr{}(\mu)=\mu$ for all $\mu\in\reals$.  
\end{lemma}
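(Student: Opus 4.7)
The plan is to derive the four inequalities from the three coherence axioms \ref{C1}--\ref{C3}, with conjugacy \eqref{eq:toegevoegdheid} used to translate between lower and upper previsions.

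First I would establish the outer inequalities. The leftmost inequality $\min f \leq \jointlpr{}(f)$ is nothing but axiom \ref{C1}. For the rightmost inequality, I would apply \ref{C1} to the gamble $-f$, which gives $\jointlpr{}(-f) \geq \min(-f) = -\max f$, and then invoke conjugacy \eqref{eq:toegevoegdheid} to rewrite this as $\jointupr{}(f) = -\jointlpr{}(-f) \leq \max f$.

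Next I would tackle the middle inequality $\jointlpr{}(f) \leq \jointupr{}(f)$. The main tool here is \ref{C3}, but I first need the auxiliary fact that $\jointlpr{}(0) = 0$, which follows by applying \ref{C2} with $\lambda = 0$ (so $\jointlpr{}(0 \cdot f) = 0 \cdot \jointlpr{}(f) = 0$). Then superadditivity \ref{C3} applied to $f$ and $-f$ yields $\jointlpr{}(f) + \jointlpr{}(-f) \leq \jointlpr{}(f + (-f)) = \jointlpr{}(0) = 0$, so $\jointlpr{}(f) \leq -\jointlpr{}(-f) = \jointupr{}(f)$ by conjugacy. This is the step that uses all three axioms in concert, though each individual step is routine.

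Finally, for the constants $\mu \in \reals$, I would simply combine what has already been proved. Taking $f = \mu$ (the constant gamble), the string of inequalities gives $\mu = \min \mu \leq \jointlpr{}(\mu) \leq \jointupr{}(\mu) \leq \max \mu = \mu$, forcing equality throughout and yielding $\jointlpr{}(\mu) = \jointupr{}(\mu) = \mu$. There is no real obstacle to this lemma; the only subtlety is remembering to establish $\jointlpr{}(0) = 0$ before applying superadditivity, since otherwise the chain $\jointlpr{}(f) \leq \jointupr{}(f)$ does not close up.
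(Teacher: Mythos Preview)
Your proposal is correct and follows essentially the same approach as the paper's own proof: both derive the outer bounds directly from \ref{C1} (applied to $f$ and to $-f$ via conjugacy), obtain $\jointlpr{}(0)=0$ from \ref{C2}, use \ref{C3} on $f$ and $-f$ for the middle inequality, and then specialise to the constant gamble $\mu$. The only cosmetic difference is the order in which the three inequalities are established.
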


\begin{proof}
  We prove the inequalities in $\min{f}\leq\jointlpr{}(f)\leq\jointupr{}(f)\leq\max{f}$ one by one. 
  The first one is the same as \ref{C1}. 
  It follows by~\ref{C3} that $\jointlpr{}(f-f)\geq\jointlpr{}(f)+\jointlpr{}(-f)$ and, since we know by \ref{C2} that $\jointlpr{}(0)=0\jointlpr{}(0)=0$, this implies that $\jointlpr{}(f)\leq-\jointlpr{}(-f)=\jointupr{}(f)$, using conjugacy for the last equality. 
  For the gamble $-f$, \ref{C1} yields that $\min{-f}\leq\jointlpr{}(-f)$ which implies that $\max{f}=-\min{-f}\geq-\jointlpr{}(-f)=\jointupr{}(f)$.

  To conclude, $\jointlpr{}(f)=\jointupr{}(\mu)=\mu$ follows by applying these inequalities for $f=\mu$.
\end{proof}

\begin{proof}[Proof of Proposition~\ref{prop:PenEpos}] 
  Observe that
  \begin{equation*}
    \xinjointcupr[\indfromout{k}]{k}{k-1}
    =\biggxinjointcupr[\indfromout{k}\sum_{\fromzstate{k}\in\states{k:n}}\indfromzstate{k}]{k}{k-1}
    \geq\biggxinjointcupr[\indfromout{k}\indsing{\fromzstate{k}^*}]{k}{k-1}
    >0,
  \end{equation*}
  where $\fromzstate{k}^*$ is any element of $\states{k:n}$. 
  The equality follows from $\sum_{\fromzstate{k}\in\states{k:n}}\indfromzstate{k}=1$, the first inequality from Lemma~\ref{lemma:puntsgewijs}\ref{eig:coh:nietstriktpuntsgewijsdalend}, and the second one from the positivity assumption~\eqref{eq:assumption} and Equation~\eqref{eq:upper:state:out:mass}.

  In the same way, we can easily prove that
  \begin{equation*}
    \xinindcupr[\singfromout{k}]{k}
    =\biggxinindcupr[\indfromout{k}\sum_{\fromzstate{k+1}\in\states{k+1:n}}\indfromzstate{k+1}]{k}
    \geq\biggxinindcupr[\indfromout{k}\indsing{\fromzstate{k+1}^*}]{k}>0.
  \end{equation*}
  This time, we have used the positivity assumption~\eqref{eq:assumption} and Equation~\eqref{eq:Eboven} for the last inequality.
\end{proof}

\begin{proof}[Proof of Theorem~\ref{theorem:crucial}] 
  Consider the function $\rho$ defined by $\rho(\mu)\coloneqq\jointlpr{}(\indfromout{1}[\indfromxstate{1}-\indfromxhatstate{1}-\mu])$ for all real $\mu$.
  It follows from Equation~\eqref{eq:GBR} that $\jointlpr{}(\indfromxstate{1}-\indfromxhatstate{1}\vert\fromout{1})$ is $\rho$'s rightmost zero, and we also know that $\rho(0)=\jointlpr{}(\indfromout{1}[\indfromxstate{1}-\indfromxhatstate{1}])$. 
  $\rho$ is non-increasing and continuous by Lemma \ref{lemma:rho}\ref{eig:nscc}, and has at least one zero by Lemma~\ref{lemma:rho}\ref{eig:zekereennulpunt}. 
  Hence, if $\rho(0)>0$, then $\rho$ has at least one positive zero and $\jointlpr{}(\indfromxstate{1}-\indfromxhatstate{1}\vert\fromout{1})>0$. 
  If $\rho(0)<0$, then $\rho$ has only negative zeroes and $\jointlpr{}(\indfromxstate{1}-\indfromxhatstate{1}\vert\fromout{1})<0$. 
  Hence, proving the theorem comes down to proving that $\rho(0)=0$ implies that $\rho(\epsilon)<0$ for all $\epsilon>0$, since this in turn implies that $\jointlpr{}(\indfromxstate{1}-\indfromxhatstate{1}\vert\fromout{1})=0$. 
  We now prove this implication. 
  We consider two different cases.

  \emph{The case $\xstate{1}=\xhatstate{1}$}.
  For any real $\epsilon>0$:
  \begin{align}\label{eq:bewijsnuleerstegeval}
    \rho(\epsilon)
    &=\jointlpr{}(\indfromout{1}[\indfromxstate{1}-\indfromxhatstate{1}-\epsilon])\notag\\
    &=\statelpr{1}({\indclpr[{\indfromout{1}[\indfromxstate{1}-\indfromxhatstate{1}-\epsilon]}]{1}})\notag\\
    &=\statelpr{1}\bigg(\indxstate{1}\xinindclpr[{\indfromout{1}[\indfromxstate{2}-\indfromxhatstate{2}-\epsilon]}]{1}
    +\sum_{\zstate{1}\neq\xstate{1}}\indzstate{1}\zinindclpr[{-\epsilon\indfromout{1}}]{1}\bigg).
  \end{align}
  The coefficients $\zinindclpr[{-\epsilon\indfromout{1}}]{1}$ can be written as $-\epsilon\zinindcupr[{\singfromout{1}}]{1}$ by conjugacy and \ref{C2}, which makes them negative, decreasing functions of $\epsilon$, since $\zinindcupr[{\singfromout{1}}]{1}>0$ by the positivity assumption~\eqref{eq:assumption} and Proposition~\ref{prop:PenEpos}.
  
  For the coefficient $\xinindclpr[{\indfromout{1}[\indfromxstate{2}-\indfromxhatstate{2}-\epsilon]}]{1}$, we consider two possible cases.

  If $\xinindclpr[{\indfromout{1}[\indfromxstate{2}-\indfromxhatstate{2}]}]{1}>0$, we know that $\xinindclpr[{\indfromout{1}[\indfromxstate{2}-\indfromxhatstate{2}-\epsilon]}]{1}$ is a decreasing function of $\epsilon$ by Lemma \ref{lemma:rho}\ref{eig:dalendalspositief}. 
  Therefore, the argument of $\statelpr{1}$ in Equation~\eqref{eq:bewijsnuleerstegeval} decreases pointwise in $\epsilon$, which by Lemma~\ref{lemma:puntsgewijs}\ref{eig:coh:puntsgewijsdalend} implies that $\rho(\epsilon)$ is a decreasing function of $\epsilon$ and therefore $\rho(\epsilon)<\rho(0)=0$. 
  
  If, on the other hand, $\xinindclpr[{\indfromout{1}[\indfromxstate{2}-\indfromxhatstate{2}]}]{1}\leq0$, we know by Lemma~\ref{lemma:puntsgewijs}\ref{eig:coh:nietstriktpuntsgewijsdalend} that $\xinindclpr[{\indfromout{1}[\indfromxstate{2}-\indfromxhatstate{2}-\epsilon]}]{1}\leq0$, implying that
  \begin{align*}
    \rho(\epsilon)
    &\leq\statelpr{1}\bigg(\sum_{\zstate{1}\neq\xstate{1}}\indzstate{1}\zinindclpr[{-\epsilon\indfromout{1}}]{1}\bigg)\\
    &\leq\statelpr{1}\left(\indzstate{1*}\underline{E}_{1}(-\epsilon\indfromout{1}\vert\zstate{1*})\right)
    =-\epsilon\overline{E}_{1}(\singfromout{1}\vert\zstate{1*})\stateupr{1}\singzstate{1*}<0.
  \end{align*}
  In this expression, $\zstate{1*}$ is an arbitrary $\zstate{1}\neq\xstate{1}$. 
  The first two inequalities are due to Lemma~\ref{lemma:puntsgewijs}\ref{eig:coh:nietstriktpuntsgewijsdalend}. 
  Conjugacy and~\ref{C2} yield the equality and the last inequality is a consequence of the positivity assumption~\eqref{eq:assumption} and Proposition~\ref{prop:PenEpos}. 
  Also in this case, therefore, we find that $\rho(\epsilon)<0$.
  
  \emph{The case $\xstate{1}\neq\xhatstate{1}$}.
  For any real $\epsilon>0$:
  \begin{align}\label{eq:bewijsnultweedegeval}
    \rho(\epsilon)
    &=\jointlpr{}({\indfromout{1}[\indfromxstate{1}-\indfromxhatstate{1}-\epsilon]})\notag\\
    &=\statelpr{1}({\indclpr[{\indfromout{1}[\indfromxstate{1}-\indfromxhatstate{1}-\epsilon]}]{1}})\notag\\
    &=\statelpr{1}\bigg(\indxstate{1}\xinindclpr[{\indfromout{1}[\indfromxstate{2}-\epsilon]}]{1}
    +\indxhatstate{1}\xhatinindclpr[{\indfromout{1}[-\indfromxhatstate{2}-\epsilon]}]{1}\notag\\
    &\qquad\qquad\qquad\qquad\qquad\qquad\qquad\qquad
    +\sum_{\zstate{1}\neq\xstate{1},\/\xhatstate{1}}\indzstate{1}\zinindclpr[{-\epsilon\indfromout{1}}]{1}\bigg)
  \end{align}
  In the proof for the case $\xstate{1}=\xhatstate{1}$, we have already shown that the coefficients $\zinindclpr[{-\epsilon\indfromout{1}}]{1}$ are negative, decreasing functions of $\epsilon$. 
  Together with Lemma~\ref{lemma:puntsgewijs}\ref{eig:coh:nietstriktpuntsgewijsdalend}, this implies that $\xhatinindclpr[{\indfromout{1}[-\indfromxhatstate{2}-\epsilon]}]{1}\leq\xhatinindclpr[{-\epsilon\indfromout{1}}]{1}<0$, which in turn by Lemma~\ref{lemma:rho}\ref{eig:dalendalsnegatief} implies that $\xhatinindclpr[{\indfromout{1}[-\indfromxhatstate{2}-\epsilon]}]{1}$ is a decreasing function of $\epsilon$.
  All that is left to consider is the coefficient $\xinindclpr[{\indfromout{1}[\indfromxstate{2}-\epsilon]}]{1}$. 
  There are two possibilities.
  
  If $\xinindclpr[{\indfromout{1}\indfromxstate{2}}]{1}>0$, then Lemma~\ref{lemma:rho}\ref{eig:dalendalspositief} implies that   $\xinindclpr[{\indfromout{1}[\indfromxstate{2}-\epsilon]}]{1}$ is a decreasing function of~$\epsilon$. 
  Therefore, the argument of $\statelpr{1}$ in Equation~\eqref{eq:bewijsnultweedegeval} decreases pointwise in~$\epsilon$, which by Lemma~\ref{lemma:puntsgewijs}\ref{eig:coh:puntsgewijsdalend} implies that $\rho(\epsilon)$ is a decreasing function of $\epsilon$ and therefore $\rho(\epsilon)<\rho(0)=0$. 
  
  If, on the other hand, $\xinindclpr[{\indfromout{1}\indfromxstate{2}}]{1}=0$, then we know that $\xinindclpr[{\indfromout{1}[\indfromxstate{2}-\epsilon]}]{1}\leq0$ by Lemma~\ref{lemma:puntsgewijs}\ref{eig:coh:nietstriktpuntsgewijsdalend}, implying that
  \begin{align*}
    \rho(\epsilon)
    &\leq\statelpr{1}(\indxhatstate{1}\xhatinindclpr[{\indfromout{1}[-\indfromxhatstate{2}-\epsilon]}]{1})\\
    &\leq\statelpr{1}(\indxhatstate{1}\xhatinindclpr[{-\epsilon\indfromout{1}}]{1})
    =-\epsilon\overline{E}_{1}(\singfromout{1}\vert\xhatstate{1})\stateupr{1}(\singxhatstate{1})<0.
  \end{align*}
  The first two inequalities follow from Lemma~\ref{lemma:puntsgewijs}\ref{eig:coh:nietstriktpuntsgewijsdalend}. 
  Conjugacy and~\ref{C2} yield the equality, and the last inequality is a consequence of the positivity assumption~\eqref{eq:assumption} and Proposition~\ref{prop:PenEpos}. 
  Also in this case, then, we find that $\rho(\epsilon)<0$.
\end{proof}

\begin{lemma}\label{lemma:rho}
  Let $\jointlpr{}$ be a coherent lower prevision on $\stategambles{}$.
  For any $f\in\stategambles{}$ and $y\in\mathcal{Y}$, consider the real-valued map $\rho$ defined on $\reals$ by $\rho(\mu)\coloneqq\jointlpr{}(\indsing{y}[f-\mu])$ for all real $\mu$.
  Then the following statements hold:
  \begin{enumerate}[leftmargin=*,label=\upshape(\roman*)]
  \item\label{eig:nscc} $\rho$ is non-increasing, concave and continuous. 
  \item\label{eig:zekereennulpunt} $\rho$ has at least one zero. 
  \item\label{eig:dalend} If $\jointlpr{}(\sing{y})>0$, then $\rho$ is decreasing and has a unique zero.
  \item\label{eig:identischnul} If $\jointupr{}(\sing{y})=0$, then $\rho$ is identically zero.
  \item\label{eig:nulendalend} If $\jointlpr{}(\sing{y})=0$ and $\jointupr{}(\sing{y})>0$, then $\rho$ is zero on $(-\infty,\jointlpr{}(f\vert y)]$, and negative and decreasing on $(\jointlpr{}(f\vert y),+\infty)$.
  \item\label{eig:dalendalspositief} If $\rho(a)>0$ for some $a$, then $\rho$ is decreasing and has a unique zero. 
  \item\label{eig:dalendalsnegatief} If $\rho$ is negative on an interval $(a,b)$, then it is also decreasing on $(a,b)$.
  \end{enumerate}
\end{lemma}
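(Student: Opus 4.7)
The plan is to prove the seven statements in order, with later parts feeding back into earlier ones where useful. The main tools are the coherence axioms C1--C3, the conjugacy relation~\eqref{eq:toegevoegdheid}, Lemma~\ref{lemma:basiscoherent}, and the credal set representation from Section~\ref{sec:lpr}: on the finite state space underlying $\jointlpr{}$, this prevision arises as the pointwise infimum $\jointlpr{}(g) = \inf_{p \in \credalset} \jointpr{p}(g)$ over a closed convex (hence compact) set of mass functions $\credalset$.

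For~\ref{eig:nscc}, monotonicity of coherent lower previsions (a consequence of C1--C3) applied to the pointwise inequality $\indsing{y}[f-\mu_1] \geq \indsing{y}[f-\mu_2]$ for $\mu_1 < \mu_2$ yields that $\rho$ is non-increasing; writing $\indsing{y}[f - (\lambda\mu_1+(1-\lambda)\mu_2)] = \lambda\indsing{y}[f-\mu_1] + (1-\lambda)\indsing{y}[f-\mu_2]$ and applying C3 together with C2 yields concavity; and continuity on $\reals$ is automatic for a real-valued concave function on all of $\reals$. For~\ref{eig:zekereennulpunt}, Lemma~\ref{lemma:basiscoherent} gives $\rho(\min f) \geq 0$ and $\rho(\max f) \leq 0$, so the intermediate value theorem applied to the continuous $\rho$ produces a zero. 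For~\ref{eig:dalend}, C3 and C2 applied to the decomposition $\indsing{y}[f-\mu_1] = \indsing{y}[f-\mu_2] + (\mu_2-\mu_1)\indsing{y}$ give $\rho(\mu_1) - \rho(\mu_2) \geq (\mu_2-\mu_1)\jointlpr{}(\sing{y})$, strictly positive under the hypothesis; strict monotonicity together with~\ref{eig:zekereennulpunt} forces a unique zero.

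For~\ref{eig:identischnul}, sandwich $\indsing{y}[f-\mu]$ between $\pm M \indsing{y}$ with $M \coloneqq \max|f-\mu|$; monotonicity and conjugacy then yield $-M\jointupr{}(\sing{y}) \leq \rho(\mu) \leq M\jointupr{}(\sing{y})$, both of which vanish by hypothesis. Part~\ref{eig:dalendalspositief} reduces to~\ref{eig:dalend} after noting that $\rho(a) > 0$ forces $\jointlpr{}(\sing{y}) > 0$: otherwise compactness of $\credalset$ would provide $p^* \in \credalset$ with $p^*(\sing{y}) = 0$, and then $\jointpr{p^*}(\indsing{y}[f-\mu]) = p^*(\sing{y})(f(y)-\mu) = 0$ would force $\rho \leq 0$ everywhere.

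The technical core is~\ref{eig:dalendalsnegatief}, which then feeds into~\ref{eig:nulendalend}. For~\ref{eig:dalendalsnegatief}, pick $\mu_1 < \mu_2$ in the interval; by compactness of $\credalset$ the infimum defining $\rho(\mu_1)$ is attained at some $p^* \in \credalset$, and since $\jointpr{p^*}(\indsing{y}[f-\mu_1]) = p^*(\sing{y})(f(y)-\mu_1) = \rho(\mu_1) < 0$ we must have $p^*(\sing{y}) > 0$. Evaluating the same $p^*$ at $\mu_2$ gives $\jointpr{p^*}(\indsing{y}[f-\mu_2]) = \rho(\mu_1) - p^*(\sing{y})(\mu_2-\mu_1) < \rho(\mu_1)$, and since $\rho(\mu_2)$ is an infimum, $\rho(\mu_2) \leq \jointpr{p^*}(\indsing{y}[f-\mu_2]) < \rho(\mu_1)$. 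For~\ref{eig:nulendalend}, the same compactness argument produces $p^* \in \credalset$ with $p^*(\sing{y}) = 0$, so $\rho \leq 0$ everywhere; combined with the regular-extension definition $\jointlpr{}(f\vert y) = \max\{\mu : \rho(\mu) \geq 0\}$ (which exists because $\rho$ is continuous, non-increasing, and, by the converse compactness argument using $\jointupr{}(\sing{y}) > 0$, tends to $-\infty$) and the non-increasing property from~\ref{eig:nscc}, this forces $\rho \equiv 0$ on $(-\infty, \jointlpr{}(f\vert y)]$; for the right half-line, maximality of $\jointlpr{}(f\vert y)$ gives $\rho < 0$, and~\ref{eig:dalendalsnegatief} delivers strict decrease. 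The main obstacle is precisely this mixed case $\jointlpr{}(\sing{y}) = 0 < \jointupr{}(\sing{y})$: the abstract C1--C3 inequalities alone do not yield strict decrease, so one must exploit the concrete credal-set realisation and compactness to extract the witnessing mass function $p^*$.
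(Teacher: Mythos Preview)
Your proof is correct, but for parts~\ref{eig:nulendalend}--\ref{eig:dalendalsnegatief} it takes a genuinely different route from the paper. The paper stays entirely within the axiomatic framework C1--C3: it first proves~\ref{eig:nulendalend} directly, getting $\rho\leq0$ from the sandwich bound $\rho(\mu)\leq b\,\jointlpr{}(\sing{y})=0$ and obtaining strict decrease on the right half-line by a concavity contradiction (if $\rho$ were not decreasing there, three suitably chosen points would violate concavity). Parts~\ref{eig:dalendalspositief} and~\ref{eig:dalendalsnegatief} are then deduced by an exhaustive case split over the three mutually exclusive situations \ref{eig:dalend}, \ref{eig:identischnul}, \ref{eig:nulendalend}. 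You instead invoke the credal-set representation and compactness: for~\ref{eig:dalendalsnegatief} you attain the infimum at some $p^*$ with necessarily $p^*(\sing{y})>0$ and compare values; \ref{eig:dalendalspositief} follows similarly; and~\ref{eig:nulendalend} then cites~\ref{eig:dalendalsnegatief} for the decreasing part, reversing the paper's order of dependency.

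Both approaches are sound. The paper's argument has the virtue of being purely axiomatic---it never appeals to the existence of an attaining $p^*$, so it would go through verbatim for coherent lower previsions on infinite spaces where the infimum need not be attained. Your argument is more direct and transparent for~\ref{eig:dalendalsnegatief} in particular, and it avoids the somewhat delicate concavity contradiction in~\ref{eig:nulendalend}; it does, however, rely on the finiteness of the underlying space (to guarantee compactness of $\credalset$), which is fine here since the paper works throughout with finite state spaces. Your continuity argument in~\ref{eig:nscc} (concave on all of $\reals$ implies continuous) is also slicker than the paper's explicit Lipschitz bound.
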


\begin{proof}
  We start by proving \ref{eig:nscc}. 
  It follows directly from
  Lemma~\ref{lemma:puntsgewijs}\ref{eig:coh:nietstriktpuntsgewijsdalend} that $\rho$ is non-increasing in $\mu$. 
  Now consider $\mu_1$ and $\mu_2$ in $\reals$ and $0\leq\lambda\leq1$. 
  $\rho$ is concave because
  \begin{align*}
    \rho(\lambda\mu_1+(1-\lambda)\mu_2)
    &=\jointlpr{}(\indsing{y}[f-(\lambda\mu_1+(1-\lambda)\mu_2)])\\
    &=\jointlpr{}(\lambda\indsing{y}[f-\mu_1]+(1-\lambda)\indsing{y}[f-\mu_2])\\
    &\geq\jointlpr{}(\lambda\indsing{y}[f-\mu_1])+\jointlpr{}((1-\lambda)\indsing{y}[f-\mu_2])\\
    &=\lambda\jointlpr{}(\indsing{y}[f-\mu_1])+(1-\lambda)\jointlpr{}(\indsing{y}[f-\mu_2])\\
    &=\lambda\rho(\mu_1)+(1-\lambda)\rho(\mu_2),
  \end{align*}
  where the inequality follows from~\ref{C3} and the subsequent step is due to~\ref{C2}.
  To prove that $\rho(\mu)$ is continuous, consider any $\mu_1$ and $\mu_2$ in $\reals$, then we see that
  \begin{align*}
    \rho(\mu_2)
    &=\jointlpr{}(\indsing{y}[f-\mu_2])
    =\jointlpr{}(\indsing{y}[f-\mu_1+(\mu_1-\mu_2)])\\
    &=\jointlpr{}(\indsing{y}[f-\mu_1]+\indsing{y}(\mu_1-\mu_2))
    \geq\jointlpr{}(\indsing{y}[f-\mu_1])+\jointlpr{}(\indsing{y}(\mu_1-\mu_2))\\
    &=\rho(\mu_1)-\jointlupr{}(\sing{y})\odot(\mu_2-\mu_1),
  \end{align*}
  where the inequality follows from~\ref{C3}, and the last equality is due to conjugacy and~\ref{C2}.
  Hence $\lvert\rho(\mu_1)-\rho(\mu_2)\rvert\leq\lvert\mu_2-\mu_1\rvert\jointupr{}(\sing{y})$, which proves that $\rho$ is Lipschitz continuous, and therefore also continuous.
  
  To prove \ref{eig:zekereennulpunt}, notice that $\rho(\min{f})=\jointlpr{}(\indsing{y}[f-\min{f}])\geq\jointlpr{}(\indsing{y}[\min{f}-\min{f}])=0$ and $\rho(\max{f})=\jointlpr{}(\indsing{y}[f-\max{f}])E\leq\jointlpr{}(\indsing{y}[\max{f}-\max{f}])=0$.
  The inequalities are a consequence of Lemma~\ref{lemma:puntsgewijs}\ref{eig:coh:nietstriktpuntsgewijsdalend}, and the last equalities follow from Lemma~\ref{lemma:basiscoherent}. 
  Since $\rho(\mu)$ is continuous, this implies the existence of a zero between $\min{f}$ and $\max{f}$.

  Property \ref{eig:dalend} can be proved by considering $\mu_1$ and $\mu_2$ in $\reals$ with $\mu_2>\mu_1$. 
  If $\jointlpr{}(\sing{y})>0$, we see that $\rho$ is decreasing, since
  \begin{align*}
    \rho(\mu_1)
    &=\jointlpr{}(\indsing{y}[f-\mu_1])
    =\jointlpr{}(\indsing{y}[f-\mu_2+(\mu_2-\mu_1)])\\
    &=\jointlpr{}(\indsing{y}[f-\mu_2]+\indsing{y}(\mu_2-\mu_1))
    \geq\jointlpr{}(\indsing{y}[f-\mu_2])+\jointlpr{}(\indsing{y}(\mu_2-\mu_1))\\
    &=\rho(\mu_2)+(\mu_2-\mu_1)\jointlpr{}(\sing{y})
    >\rho(\mu_2),
  \end{align*}
  where the first inequality follows from~\ref{C3} and the last equality from~\ref{C2}. 
  We know by~\ref{eig:zekereennulpunt} that $\rho$ has at least one zero, which must be unique because $\rho$ is decreasing.
  
  To prove \ref{eig:identischnul}, first note that $\jointupr{}(\sing{y})=0$ also implies $\jointlpr{}(\sing{y})=0$, because of Lemma~\ref{lemma:basiscoherent}. 
  Now fix $\mu$ in $\reals$ and choose $a$ and $b$ in $\reals$ such that
  \begin{equation*}
    a<\min\{0,\min\{f-\mu\}\}\leq\max\{0,\max\{f-\mu\}\}<b.
  \end{equation*}
  Then at the same time $\rho(\mu)=\jointlpr{}(\indsing{y}[f-\mu])\geq\jointlpr{}(\indsing{y}a)=a\jointupr{}(\sing{y})=0$ and $\rho(\mu)=\jointlpr{}(\indsing{y}[f-\mu])\leq\jointlpr{}(\indsing{y}b)=b\jointlpr{}(\sing{y})=0$,
  using Lemma~\ref{lemma:puntsgewijs}\ref{eig:coh:nietstriktpuntsgewijsdalend}, \ref{C2} and conjugacy. 
  We conclude that $\rho(\mu)=0$ for any $\mu$ in $\reals$.
  
  The proof of \ref{eig:nulendalend} starts by noticing that $\rho(\mu)\geq0$ for $\mu\in(-\infty,\jointlpr{}(f\vert y)]$ and $\rho(\mu)<0$ for $\mu\in(\jointlpr{}(f\vert y),+\infty)$, due to the definition of $\jointlpr{}(f\vert y)$ (see Equation~\eqref{eq:GBR}), and the fact that $\rho$ is non-increasing by~\ref{eig:nscc}.
  In the proof of~\ref{eig:identischnul}, we have already shown that $\rho$ is non-positive if $\jointlpr{}(\sing{y})=0$, which allows us to conclude that $\rho(\mu)=0$ for $\mu\in(-\infty,\jointlpr{}(f\vert y)]$. 
  We are left to prove that $\rho$ is decreasing on the interval $(\jointlpr{}(f\vert y),+\infty)$. 
  We will do so by contradiction. 
  Suppose that $\rho$ is not decreasing on that interval, then there are $\mu_1$ and $\mu_2$ in this interval, such that $\mu_2>\mu_1$ and $0>\rho(\mu_2)\geq\rho(\mu_1)$. 
  Since $\rho$ is zero on $(-\infty,\jointlpr{}(f\vert y))$, we can also choose $\mu_0<\mu_1$ such that $\rho(\mu_0)=0$. 
  The existence of such $\mu_0$, $\mu_1$ and $\mu_2$ contradicts the concavity of $\rho$, established by~\ref{eig:nscc}. 

  To prove \ref{eig:dalendalspositief}, observe that $\jointupr{}(\sing{y})\geq\jointlpr{}(\sing{y})\geq0$ by Lemma~\ref{lemma:basiscoherent}.
  This implies that the three cases considered in~\ref{eig:dalend}, \ref{eig:identischnul} and~\ref{eig:nulendalend} are exhaustive and mutually exclusive.
  If there is an $a$ for which $\rho(a)>0$, we can only have the case considered in~\ref{eig:dalend}, which implies that $\rho$ is decreasing and has a unique zero.

  It now only remains to prove~\ref{eig:dalendalsnegatief}. 
  By repeating the argument in the proof of~\ref{eig:dalendalspositief}, we see that $\rho$ is negative on an interval $(a,b)$, only the cases considered in~\ref{eig:dalend} and~\ref{eig:nulendalend} can obtain. \
  For~\ref{eig:dalend}, $\rho$ is decreasing on its entire domain. 
  For~\ref{eig:nulendalend}, $\rho$ is definitely decreasing on $(a,b)$.
\end{proof}

\begin{lemma}\label{lemma:puntsgewijs}
  Consider a coherent lower prevision $\jointlpr{}$ on $\stategambles{}$ and two gambles $f,g\in\stategambles{}$.
  \begin{enumerate}[label=\upshape(\roman*),leftmargin=*] 
  \item\label{eig:coh:puntsgewijsdalend} If $f(x)>g(x)$ for all $\xstate{}\in\states{}$, then $\jointlpr{}(f)>\jointlpr{}(g)$. 
  \item\label{eig:coh:nietstriktpuntsgewijsdalend} If $f(x)\geq g(x)$ for all $\xstate{}\in\states{}$, then $\jointlpr{}(f)\geq\jointlpr{}(g)$.  
  \end{enumerate} 
\end{lemma}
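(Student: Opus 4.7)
The plan is to derive both statements from the coherence axioms \ref{C1}--\ref{C3} via a single decomposition, handling (ii) first and then extracting (i) from the same computation.

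First I would establish (ii). Decompose $f = (f-g) + g$ and apply superadditivity \ref{C3} to obtain
\begin{equation*}
  \jointlpr{}(f) = \jointlpr{}((f-g) + g) \geq \jointlpr{}(f-g) + \jointlpr{}(g).
\end{equation*}
Since $f(x) \geq g(x)$ for all $x \in \states{}$, the gamble $f-g$ is non-negative pointwise, so $\min(f-g) \geq 0$. By \ref{C1}, $\jointlpr{}(f-g) \geq \min(f-g) \geq 0$, and substituting into the inequality above yields $\jointlpr{}(f) \geq \jointlpr{}(g)$.

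For (i), the same decomposition gives $\jointlpr{}(f) \geq \jointlpr{}(f-g) + \jointlpr{}(g) \geq \min(f-g) + \jointlpr{}(g)$, where the second inequality is \ref{C1}. Since $\states{}$ is finite (as stipulated throughout the paper), the strict pointwise inequality $f(x) > g(x)$ for every $x \in \states{}$ implies that $\min(f-g) > 0$, whence $\jointlpr{}(f) > \jointlpr{}(g)$.

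I do not anticipate any real obstacles: both parts are immediate consequences of combining \ref{C1} with \ref{C3}. The only subtle point worth flagging in the write-up is the appeal to finiteness of $\states{}$ in (i), which is what turns a strict pointwise inequality into a strictly positive minimum; without finiteness one would need to argue via the infimum and could only conclude the non-strict inequality. Since the paper works throughout with finite state and output spaces, this presents no issue.
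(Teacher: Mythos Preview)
Your proof is correct and follows essentially the same route as the paper's own proof: both decompose $f=(f-g)+g$, apply superadditivity \ref{C3}, and then bound $\jointlpr{}(f-g)$ from below via \ref{C1}. The only cosmetic differences are that the paper treats (i) before (ii) and does not explicitly flag the role of finiteness of $\states{}$ in making $\min(f-g)>0$; your observation on that point is a nice clarification.
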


\begin{proof}
  We start with \ref{eig:coh:puntsgewijsdalend}.
  If $f-g$ is pointwise positive, then $\min(f-g)>0$ and therefore $\jointlpr{}(f-g)\geq\min{(f-g)}>0$, using \ref{C1} for the first inequality. 
  It follows from~\ref{C3} that $\jointlpr{}(f)=\jointlpr{}((f-g)+g)\geq\jointlpr{}(f-g)+\jointlpr{}(g)$, and therefore that $\jointlpr{}(f)-\jointlpr{}(g)\geq\jointlpr{}(f-g)>0$, whence indeed $\jointlpr{}(f)>\jointlpr{}(g)$.

  The proof for \ref{eig:coh:nietstriktpuntsgewijsdalend} is analogous, but now we only have that $\min(f-g)\geq0$, implying that $\jointlpr{}(f)-\jointlpr{}(g)\geq\jointlpr{}(f-g)\geq\min{(f-g)}\geq0$.
\end{proof}

\begin{proof}[Proof of Equation~\eqref{eq:target:equal}] 
  Let $\fromtarget{k}\coloneqq\indfromout{k}[\indfromxstate{k}-\indfromxhatstate{k}]$.
  Since $k\in\{1,\dots,n-1\}$ and $\xhatstate{k}=\xstate{k}$, this implies that
  \begin{align*}
    \fromtarget{k}
    &=\indfromout{k}[\indfromxstate{k}-\indfromxhatstate{k}]\\
    &=\indout{k}\indxstate{k}\indfromout{k+1}[\indfromxstate{k+1}-\indfromxhatstate{k+1}]\\
    &=\indout{k}\indxstate{k}\fromtarget{k+1},
  \end{align*}
  which in turn implies that
  \begin{align*}
    \zinjointclpr[\fromtarget{k}]{k}{k-1}
    &=\zinstateclpr[{\indclpr[{\indout{k}\indxstate{k}\fromtarget{k+1}}]{k}}]{k}{k-1}\notag\\
    &=\zinstateclpr[{\indxstate{k}\xinindclpr[{\indout{k}\fromtarget{k+1}}]{k}}]{k}{k-1}\notag\\
    &=\zinstateclupr[\singxstate{k}]{k}{k-1}\odot\xinindclpr[{\indout{k}\fromtarget{k+1}}]{k}\notag\\
    &=\zinstateclupr[\singxstate{k}]{k}{k-1}\xinoutclupr[\singout{k}]{k}\odot\xinjointclpr[\fromtarget{k+1}]{k+1}{k},
  \end{align*}
  proving Equation~\eqref{eq:target:equal}.
  The first equality follows from Equation~\eqref{eq:jointrecurse}. 
  The second equality holds because $\indxstate{k}(\zstate{k})=0$ for all $\zstate{k}\neq\xstate{k}$, implying that $\indclpr[{\indout{k}\indxstate{k}\fromtarget{k+1}}]{k}=\indxstate{k}\xinindclpr[{\indout{k}\fromtarget{k+1}}]{k}$. 
  The third equality is follows from conjugacy and~\ref{C2}, and the last one follows from Equation~\eqref{eq:factorisation}.
\end{proof}

\begin{proof}[Proof of Equation~\eqref{eq:target:equal:final}] 
  Since $\xhatstate{n}=\xstate{n}$, Lemma~\ref{lemma:basiscoherent} yields:
  \begin{equation*}
    \zinjointclpr[\indout{n}[\indxstate{n}-\indxhatstate{n}]{n}{n-1}
    =\zinjointclpr[\indout{n}[\indxstate{n}-\indxstate{n}]{n}{n-1}
    =\zinjointclpr[0]{n}{n-1}=0.
    \qedhere
  \end{equation*}
\end{proof}

\begin{proof}[Proof of Equation~\eqref{eq:target:different}] 
  If $k\in\{1,\dots,n\}$ and $\xhatstate{k}\neq\xstate{k}$, then
  \begin{multline*}
    \zinjointclpr[\indfromout{k}[\indfromxstate{k}-\indfromxhatstate{k}]{k}{k-1}\\
    \begin{aligned}
      &=\zinstateclpr[{\indclpr[{\indfromout{k}[\indfromxstate{k}-\indfromxhatstate{k}]}]{k}}]{k}{k-1}\\
      &=\zinstateclpr[{
        \indxstate{k}{\xinindclpr[{\indfromout{k}\indfromxstate{k+1}}]{k}}
        +\indxhatstate{k}{\xhatinindclpr[{-\indfromout{k}\indfromxhatstate{k+1}}]{k}}
      }]{k}{k-1}\\
      &=\zinstateclpr[{
        \indxstate{k}{\xinindclpr[{\indfromout{k}\indfromxstate{k+1}}]{k}}
        -\indxhatstate{k}{\xhatinindcupr[{\indfromout{k}\indfromxhatstate{k+1}}]{k}}
      }]{k}{k-1}\\
      &=\zinstateclpr[{\indxstate{k}\fromxlmem{k}-\indxhatstate{k}\fromxhatumem{k}}]{k}{k-1},
    \end{aligned}
  \end{multline*}
  proving Equation~\eqref{eq:target:different}. 
  The reasons why all these equalities hold, are analogous to the ones given in the proof of Equation~\eqref{eq:target:equal}.
\end{proof}

\begin{proof}[Proof of Theorem~\ref{theorem:optimality}] 
  Fix $k\in\{1,\dots,n-1\}$, $\zstate{k-1}\in\states{k-1}$ and $\fromxhatstate{k}\in\fromstates{k}$.
  We assume that $\fromxhatstate{k+1}\notin\explfromopt[\fromstates{k+1}]{\hat{x}}{k}{k+1}$ and then show that $\fromxhatstate{k}\notin\fromopt[\fromstates{k}]{z}{k}$.
  It follows from the assumption that $\xhatinjointclpr[\indfromout{k+1}[\indfromxstate{k+1}-\indfromxhatstate{k+1}]{k+1}{k}>0$ for some $\fromxstate{k+1}\in\states{k+1}$.
  Now prefix this state sequence $\fromxstate{k+1}$ with the state $\xhatstate{k}$ to form the state sequence $\fromxstate{k}$, implying that $\xstate{k}=\xhatstate{k}$.
  We then infer from Equation~\eqref{eq:target:equal} that
  \begin{multline*}
    \zinjointclpr[\indfromout{k}[\indfromxstate{k}-\indfromxhatstate{k}]{k}{k-1}\\
    =\zinstateclpr[\singxhatstate{k}]{k}{k-1}\xhatinoutclpr[\singout{k}]{k}
    \xhatinjointclpr[\indfromout{k+1}[\indfromxstate{k+1}-\indfromxhatstate{k+1}]{k+1}{k}
    >0,
  \end{multline*}
  which tells us that indeed $\fromxhatstate{k}\notin\fromopt[\fromstates{k}]{z}{k}$.
\end{proof}

\begin{proof}[Proof of Equations~\eqref{eq:criterion-at-k} and \eqref{eq:criterion-at-n}] 
  First, we consider $k=n$. 
  For every $\zstate{n-1}\in\states{n-1}$, we determine $\opt[\states{n}]{z}{n}$ as the set of those elements $\xhatstate{n}$ of $\states{n}$ for which
  \begin{equation*}\label{eq:criterion-at-n tussenresultaat1}
    (\forall\xstate{n}\in\states{n}\setminus\singxhatstate{n})
    \zinstateclpr[{\indxstate{n}\xlmemmax{n}-\indxhatstate{n}\xhatumem{n}}]{n}{n-1}\leq0,
  \end{equation*}
  as this condition is equivalent to the optimality condition~\eqref{eq:optimals} for $k=n$, taking into account Equations~\eqref{eq:target:equal:final}, \eqref{eq:target:different} and~\eqref{eq:alphabetamaxn}.
  We now show that this condition is also equivalent to 
  \begin{equation}\label{eq:criterion-at-n tussenresultaat2}
    (\forall\xstate{n}\in\states{n}\setminus\singxhatstate{n})
    \xhatumem{n}\geq\xlmemmax{n}\zinthreshold{n},
  \end{equation}
  To see this, we consider two different cases. 
  For those $\xstate{n}$ for which $\xlmemmax{n}=0$, the inequalities $\zinstateclpr[{\indxstate{n}\xlmemmax{n}-\indxhatstate{n}\xhatumem{n}}]{n}{n-1}\leq0$ and $\xhatumem{n}\geq\xlmemmax{n}\zinthreshold{n}$ are both trivially satisfied since $\xhatumem{n}=\xhatinoutcupr[\singout{n}]{n}>0$ by the positivity assumption~\eqref{eq:assumption}. 
  If $\xlmemmax{n}>0$, both inequalities are equivalent because of~\ref{C2} and Equation~\eqref{eq:threshold}:
  \begin{align*}
    \zinstateclpr[{\indxstate{n}\xlmemmax{n}-\indxhatstate{n}\xhatumem{n}}]{n}{n-1}\leq0
    &\asa\biggzinstateclpr[{\indxstate{n}-\indxhatstate{n}\frac{\xhatumem{n}}{\xlmemmax{n}}}]{n}{n-1}\leq0\\
    &\asa\frac{\xhatumem{n}}{\xlmemmax{n}}\geq\zinthreshold{n}\\
    &\asa\xhatumem{n}\geq\xlmemmax{n}\zinthreshold{n}.
  \end{align*}
  Using Equation~\eqref{eq:alphaopt}, Equation~\eqref{eq:criterion-at-n tussenresultaat2} can now be reformulated as $\xhatumem{n}\geq\umemopt{n}$, which completes the proof of the equivalence.
  \par
  Next, we consider any $k\in\{1,\dots,n-1\}$.  
  Fix $\zstate{k-1}\in\states{k-1}$, then we must determine $\fromopt[\fromstates{k}]{z}{k}$. 
  We know from the Principle of Optimality~\eqref{eq:pop1imprecies} that we can limit the candidate optimal sequences $\fromxhatstate{k}$ to the set $\frommog[\fromstates{k}]{z}{k}$. 
  Consider any such $\fromxhatstate{k}$, then we must check for any $\fromxstate{k}\in\fromstates{k}$ whether $\zinjointclpr[{\indfromout{k}[\indfromxstate{k}-\indfromxhatstate{k}]}]{k}{k-1}\leq0$; see Equation~\eqref{eq:optimals}.

  If $\fromxstate{k}$ is such that $\xstate{k}=\xhatstate{k}$, this inequality is automatically satisfied.
  Indeed, if $\xhatstate{k}\notin\pos_k(\zstate{k-1})$, then we infer from Equation~\eqref{eq:pop3imprecies} that $\zinstateclpr[\singxhatstate{k}]{k}{k-1}=0$ or $\xhatinoutclpr[\singout{k}]{k}=0$, and then Equation~\eqref{eq:target:equal} tells us that $\zinjointclpr[{\indfromout{k}[\indfromxstate{k}-\indfromxhatstate{k}]}]{k}{k-1}=0$.
  If $\xhatstate{k}\in\pos_k(\zstate{k-1})$, we know from Equation~\eqref{eq:pop2imprecies} that $\fromxhatstate{k+1}\in\explfromopt[\fromstates{k+1}]{\hat{x}}{k}{k+1}$, which implies that $\xhatinjointclpr[{\indfromout{k+1}[\indfromxstate{k+1}-\indfromxhatstate{k+1}]}]{k+1}{k}\leq0$.
  Hence $\zinjointclpr[{\indfromout{k}[\indfromxstate{k}-\indfromxhatstate{k}]}]{k}{k-1}\leq0$, again by Equation~\eqref{eq:target:equal}.

  This means we can limit ourselves to checking the inequality for those $\fromxstate{k}$ for which $\xstate{k}\neq\xhatstate{k}$.
  So fix any $\xstate{k}\neq\xhatstate{k}$, then we must check whether
  \begin{equation*}
    (\forall\fromxstate{k+1}\in\fromstates{k+1})\\
    \zinstateclpr[{\indxstate{k}\fromxlmem{k}-\indxhatstate{k}\fromxhatumem{k}}]{k}{k-1}\leq0;
  \end{equation*}
  see Equation~\eqref{eq:target:different}.
  By Equation~\eqref{eq:alphabetamax} and Lemma~\ref{lemma:puntsgewijs}, this is equivalent to
  \begin{equation*}
    \zinstateclpr[{\indxstate{k}\xlmemmax{k}-\indxhatstate{k}\fromxhatumem{k}}]{k}{k-1}\leq0,
  \end{equation*}
  which can in turn be seen to be equivalent to $\fromxhatumem{k}\geq\xlmemmax{k}\zinthreshold{k}$, using a course of reasoning completely analogous to the one used above for the case $k=n$.
  Since this inequality must hold for every $\xstate{k}\neq\xhatstate{k}$, we infer from Equation~\eqref{eq:alphaopt} that we must have that $\fromxhatumem{k}\geq\umemopt{k}$.
  So we must check this condition for all the candidate sequences $\fromxhatstate{k}$ in $\frommog[\fromstates{k}]{z}{k}$, which proves Equation~\eqref{eq:criterion-at-k}.
\end{proof}

\begin{proof}[Proof of Theorem~\ref{theorem:construction}]
  This proof consists of two parts. 
  We will first prove that every sequence $\xhatstate{k:n}$ obtained by the optimal tree construction is an element of $\fromopt[\fromstates{k}]{z}{k}$. 
  Secondly we will prove that a sequence $\zstate{k:n}$ that is not part of the set of sequences obtained by the optimal tree construction cannot be an element of $\fromopt[\fromstates{k}]{z}{k}$.

  Let us start by proving that every sequence $\xhatstate{k:n}$ obtained by the optimal tree construction is an element of $\fromopt[\fromstates{k}]{z}{k}$. 
  It follows from the last step of the optimal tree construction that every $\xhatstate{k:n}$ of the constructed set is an element of $\frommogDoor[\fromstates{k}]{z}{k}{\xhatstate{k:n}}$, and therefore by Equation~\eqref{def:doorsnede} also of $\frommog[\fromstates{k}]{z}{k}$. 
  This last step also implies that $\xhatumemmax{n}\geq\specialumemopt{k}{\xhatstate{k:n}}$, which can be seen to be equivalent with $\fromxhatumem{k}\geq\xhatumemopt{k}$, by Equation~\eqref{eq:alphabetamaxn} and the repeated use of Equations~\eqref{eq:optalgemeen} and~\eqref{eq:alpharecurs}. 
  It then follows from Equation~\eqref{eq:criterion-at-k} that $\xhatstate{k:n}$ is an element of $\fromopt[\fromstates{k}]{z}{k}$.

  To conclude, we show that a sequence $\zstate{k:n}$ that is not part of the set of sequences obtained by the optimal tree construction cannot be an element of $\fromopt[\fromstates{k}]{z}{k}$.
  If a sequence $\zstate{k:n}$ is not part of the set of sequences obtained by the optimal tree construction, this either implies that it is not an element of $\frommog[\fromstates{k}]{z}{k}$, or that there is some $s\in\{k,\dots,n\}$ for which $\zumemmax{s}<\zumemoptfromto{k}{s}$.
  In the first case, it follows directly from Equation~\eqref{eq:criterion-at-k} that $\zstate{k:n}$ cannot be an element of $\fromopt[\fromstates{k}]{z}{k}$.
  In the second case, we see that $\zumemmax{s}<\zumemoptfromto{k}{s}$ implies that $\fromzumem{k}<\zumemopt{k}$, which can be seen to be equivalent with $\fromzumem{k}<\zumemopt{k}$ by the repeated use of Equations~\eqref{eq:optalgemeen} and~\eqref{eq:alpharecurs}. 
  It then follows from Equation~\eqref{eq:criterion-at-k} that $\zstate{k:n}$ cannot be an element of $\fromopt[\fromstates{k}]{z}{k}$.
\end{proof}

\begin{proof}[Proof of Theorem~\ref{theorem:essentialstep}]
  If $s=k$, this can be proved by contradiction.
  If for all $\xstate{s}\in\states{s}$ both conditions would not be fulfilled, the optimal tree construction would stop and the set $\fromopt[\fromstates{k}]{z}{k}$ would be empty. 
  This is a contradiction since every finite partially ordered set has at least one maximal element.

  Now consider any $s\in\{k+1,\dots,n\}$. 
  Equation~\eqref{eq:alphabetamax} implies that there is at least one sequence $\xstate{s:n}^*\in\states{s:n}$ for which $\specialumem{s-1}{\xhatstate{s-1}\oplus\xstate{s:n}^*}=\xhatumemmax{s-1}$. 
  We prove that the first state $\xstate{s}^*$ of this sequence meets both criteria of the theorem.

  We know that $\xhatstate{k:s-1}$ is found using the optimal tree construction, which implies that $\frommogDoor[\fromstates{k}]{z}{k}{\xhatstate{k:s-1}}$ is a non-empty set and $\xhatumemmax{s-1}\geq\specialumemopt{k}{\xhatstate{k:s-1}}$.
  It follows from this inequality that $\specialumem{s-1}{\xhatstate{s-1}\oplus\xstate{s:n}^*}\geq\specialumemopt{k}{\xhatstate{k:s-1}}$, which can be seen to be equivalent with $\fromxumemstar{s}\geq\specialumemopt{k}{\xhatstate{k:s-1}\oplus\xstate{s}^*}$ by Equations~\eqref{eq:alpharecurs} and~\eqref{eq:optalgemeen}. 
  Since we know that $\fromxumemstar{s}=\xumemmaxstar{s}$ by Equation~\eqref{eq:alphamaxrecurs}, we find that $\xumemmaxstar{s}\geq\specialumemopt{k}{\xhatstate{k:s-1}\oplus\xstate{s}^*}$, meaning that $\xstate{s}^*$ satisfies the first criterium.

  To prove that the state $\xstate{s}^*$ also satisfies the second criterium, which means that the set $\frommogDoor[\fromstates{k}]{z}{k}{\xhatstate{k:s-1}\oplus\xstate{s}^*}$ is non-empty, it suffices by Equation~\eqref{def:doorsnede} to prove that $\xhatstate{k:s-1}\oplus\xstate{s:n}^*$ is an element of $\frommog[\fromstates{k}]{z}{k}$.

  Since $\frommogDoor[\fromstates{k}]{z}{k}{\xhatstate{k:s-1}}$ is non-empty, there is at least one \mbox{$\zstate{s:n}\in\states{s:n}$} for which $\xhatstate{k:s-1}\oplus\zstate{s:n}$ is an element of $\frommog[\fromstates{k}]{z}{k}$. 
  Furthermore, we have chosen $\fromxstate{s}^*$ such that $\specialumem{s-1}{\xhatstate{s-1}\oplus\xstate{s:n}^*}=\xhatumemmax{s-1}$.
  Lemma~\ref{stel:complexiteitbegrensd} now implies that $\xhatstate{k:s-1}\oplus\fromxstate{s}^*$ is an element of $\frommog[\fromstates{k}]{z}{k}$.
\end{proof}

\begin{lemma}\label{stel:complexiteitbegrensd}
  Fix $k\in\{1,\dots,n-1\}$, $s\in\{k+1,\dots,n\}$, $\zstate{k-1}\in\states{k-1}$ and $\xhatstate{k:s-1}\in\states{k:s-1}$. 
  Choose an arbitrary $\fromxstate{s}^*\in\states{s:n}$ for which $\specialumem{s-1}{\xhatstate{s-1}\oplus\xstate{s:n}^*}=\xhatumemmax{s-1}$. 
  If there is some $\zstate{s:n}\in\states{s:n}$ for which $\xhatstate{k:s-1}\oplus\zstate{s:n}$ belongs to $\frommog[\fromstates{k}]{z}{k}$, then $\xhatstate{k:s-1}\oplus\fromxstate{s}^*$ belongs to $\frommog[\fromstates{k}]{z}{k}$.  
\end{lemma}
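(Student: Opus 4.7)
The plan is to prove the lemma in two steps: an auxiliary claim that every $\umem$-maximiser automatically sits in the corresponding $\opt$-set, followed by a main argument that transfers the hypothesis from $\zstate{s:n}$ to $\fromxstate{s}^{*}$ by means of the multiplicative scaling of $\umem$.

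Step~1 (auxiliary claim): for every $k\in\{1,\dots,n\}$ and $\xstate{k-1}\in\states{k-1}$, if $\fromzstate{k}^{\#}\in\states{k:n}$ maximises $\specialumem{k-1}{\xstate{k-1}\oplus\cdot}$, then $\fromzstate{k}^{\#}\in\explfromopt[\fromstates{k}]{x}{k-1}{k}$. By Equation~\eqref{eq:criterion-at-k} this splits into a $\frommog$-membership part and a $\umem$-inequality part. I will establish the $\umem$-inequality $\umem(\fromzstate{k}^{\#})\geq\explspecialumemopt{k}{\zstate{k}^{\#}}{\xstate{k-1}}$ unconditionally: multiplying through by the positive factor $\xinstatecupr[\sing{\zstate{k}^{\#}}]{k}{k-1}$ and expanding $\umem^{\mathrm{opt}}_k$ via Equations~\eqref{eq:alphaopt} and~\eqref{eq:minimal-threshold} (the latter valid thanks to the 2-monotonicity of the coherent lower previsions determined by bounds on singletons) reduces it to
\begin{equation*}
  \explumemmax{k}{\zstate{k}^{\#}}\cdot\xinstatecupr[\sing{\zstate{k}^{\#}}]{k}{k-1}\geq\max_{\xstate{k}\neq\zstate{k}^{\#}}\xlmemmax{k}\cdot\xinstateclpr[\singxstate{k}]{k}{k-1};
\end{equation*}
the left-hand side equals $\max_{\xstate{k}}\xumemmax{k}\cdot\xinstatecupr[\singxstate{k}]{k}{k-1}$ by the maximiser property, and the term-by-term bound $\xumemmax{k}\cdot\xinstatecupr\geq\xlmemmax{k}\cdot\xinstateclpr$ (upper dominating lower on both factors) closes the gap. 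For the $\frommog$-membership I will induct backwards on $k$: the base case $k=n$ is vacuous, and in the inductive step Equation~\eqref{eq:alpharecurs} together with a greedy decomposition of the maximisation reveals that the tail $(\zstate{k+1}^{\#},\dots,\zstate{n}^{\#})$ is itself the $\umem$-maximiser conditional on $\zstate{k}^{\#}$, so the induction hypothesis places it in $\explfromopt[\fromstates{k+1}]{z^{\#}}{k}{k+1}$, which via Equation~\eqref{eq:pop2imprecies} places $\fromzstate{k}^{\#}$ in the required $\frommog$-set.

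Step~2 (main argument): recursively applying Equations~\eqref{eq:pop2imprecies} and~\eqref{eq:criterion-at-k}, the hypothesis $\xhatstate{k:s-1}\oplus\zstate{s:n}\in\frommog[\fromstates{k}]{z}{k}$ unfolds level by level. At each prefix level $j\in\{k,\dots,s-1\}$ (with $\xhatstate{k-1}\coloneqq\zstate{k-1}$), either $\xhatstate{j}\notin\pos_j(\xhatstate{j-1})$---in which case the second branch of Equation~\eqref{eq:pop2imprecies} terminates the recursion and imposes no further constraint on the suffix---or $\xhatstate{j}\in\pos_j(\xhatstate{j-1})$ and criterion~\eqref{eq:criterion-at-k} supplies the $\umem$-inequality $\umem(\xhatstate{j+1:s-1}\oplus\zstate{s:n})\geq\umem^{\mathrm{opt}}_{j+1}(\xhatstate{j+1}|\xhatstate{j})$ before descending to level $j+1$. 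If the recursion survives through level $s-1$, its terminal condition is $\zstate{s:n}\in\explfromopt[\fromstates{s}]{\hat{x}}{s-1}{s}$. Each intermediate $\umem$-inequality transfers to the new suffix via the scaling identity
\begin{equation*}
  \umem(\xhatstate{j+1:s-1}\oplus\fromzstate{s}) = C_j\cdot\umem(\xhatstate{s-1}\oplus\fromzstate{s}),
\end{equation*}
where $C_j\geq0$ is read off from Equation~\eqref{eq:alpharecurs} and does not depend on $\fromzstate{s}\in\states{s:n}$; combined with the maximiser property $\umem(\xhatstate{s-1}\oplus\fromxstate{s}^{*})\geq\umem(\xhatstate{s-1}\oplus\zstate{s:n})$ this yields the new-suffix version. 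The terminal condition is exactly Step~1 applied at level~$s$ with $\xstate{s-1}\coloneqq\xhatstate{s-1}$. Re-assembling the conditions via the same iterated criterion then places $\xhatstate{k:s-1}\oplus\fromxstate{s}^{*}$ in $\frommog[\fromstates{k}]{z}{k}$, as required.

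The main obstacle is Step~1; once settled, everything else reduces to careful accounting of multiplicative prefix factors, whose cancellation is legitimate because the positivity assumption~\eqref{eq:assumption} forces all upper transition and emission probabilities appearing as denominators to be strictly positive.
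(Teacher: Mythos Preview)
Your Step~2 is essentially the paper's argument: the same case split on $\pos_j(\xhatstate{j-1})$, the same transfer of the $\umem$-inequalities via the multiplicative scaling from Equation~\eqref{eq:alpharecurs}, and the same re-assembly through Equations~\eqref{eq:criterion-at-k} and~\eqref{eq:pop2imprecies}. That part is fine.

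The gap is in Step~1, specifically in your proof of the $\umem$-inequality~(b). You invoke Equation~\eqref{eq:minimal-threshold}, justifying it by ``2-monotonicity of the coherent lower previsions determined by bounds on singletons''. But the lemma is stated for \emph{arbitrary} separately coherent local models $\stateclpr{k}{k-1}$, not just for the 2-monotone natural extensions of singleton bounds. As the paper itself notes in Section~\ref{subsec:comments}, the right-hand side of~\eqref{eq:minimal-threshold} is merely ``the smallest possible value of the threshold~$\zinthreshold{k}$'' compatible with given singleton bounds; for a general coherent $\underline{Q}_k$ the threshold defined by~\eqref{eq:threshold} can be strictly larger. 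When $\theta_k$ exceeds $\underline{Q}_k(\{x_k\})/\overline{Q}_k(\{\hat{x}_k\})$, your term-by-term comparison $\xumemmax{k}\cdot\overline{Q}_k\geq\xlmemmax{k}\cdot\underline{Q}_k$ no longer suffices to bound $\xlmemmax{k}\cdot\theta_k$, and the argument collapses. Since your backward induction for part~(a) needs the full $\opt$-membership (hence also~(b)) at the next level, the flaw propagates through the entire Step~1.

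The paper sidesteps this entirely by proving the auxiliary claim (its Lemma~\ref{stel:argmaxalpha}) directly from the \emph{definition}~\eqref{eq:optimals} of $\opt$ rather than from the reformulation~\eqref{eq:criterion-at-k}: the maximiser property gives $\xhatinjointcupr[\indsing{\fromxstate{s}^*}\indfromout{s}]{s}{s-1}\geq\xhatinjointcupr[\indfromzstate{s}\indfromout{s}]{s}{s-1}$ for all $\fromzstate{s}$, and then a single application of superadditivity~\ref{C3} and conjugacy yields $\xhatinjointclpr[\indfromout{s}(\indfromzstate{s}-\indsing{\fromxstate{s}^*})]{s}{s-1}\leq 0$. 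This uses only coherence and the positivity assumption~\eqref{eq:assumption}, needs no threshold formula, and requires no induction. You should replace your Step~1 with this direct argument.
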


\begin{proof}
  To simplify the notations in this proof, it is convenient to use $\xhatstate{k-1}$ as an alternative notation for $\zstate{k-1}$.
  So from now on $\xhatstate{k-1}=\zstate{k-1}$.

  It follows by Lemma~\ref{stel:argmaxalpha} that $\fromxstate{s}^*\in\explfromopt[\fromstates{s}]{\hat{x}}{s-1}{s}$. 
  Together with Equation~\eqref{eq:pop2imprecies}, this implies that $\xhatstate{s-1}\oplus\xstate{s:n}^*\in\explfrommog[\fromstates{s-1}]{\hat{x}}{s-2}{s-1}$.
  If $s=k+1$, this concludes the proof.
  If $s\in\{k+2,\dots,n\}$, consider all $q\in\{k,\dots,s-1\}$ and check af there is some $q$ for which $\xhatstate{q}\notin\pos_q(\xhatstate{q-1})$ (see definition~\eqref{eq:pop3imprecies} ).
  If such a $q$ exists, denote the lowest $q\in\{k,\dots,s-2\}$ for which this is the case as~$q^*$. 
  By Equation~\eqref{eq:pop2imprecies} we know that $\xhatstate{q^*:s-1}\oplus\xstate{s:n}^*$ and $\xhatstate{q^*:s-1}\oplus\zstate{s:n}$ are both elements of $\explfrommog[\fromstates{q^*}]{\hat{x}}{q^*-1}{q^*}$, since all sequences in the set $\xhatstate{q^*}\oplus\states{q^*+1:n}$ belong to $\explfrommog[\fromstates{q^*}]{\hat{x}}{q^*-1}{q^*}$. 

  If no $q\in\{k,\dots,s-2\}$ exists for which $\xhatstate{q}\notin\pos_q(\xhatstate{q-1})$, we choose $q^*\coloneqq s-1$.
  It then follows by the repeated use of Equations~\eqref{eq:pop1imprecies} and~ \eqref{eq:pop2imprecies} that $\xhatstate{s-1}\oplus\zstate{s:n}$ belongs to $\explfrommog[\fromstates{s-1}]{\hat{x}}{s-2}{s-1}$ and we already know that $\xhatstate{s-1}\oplus\xstate{s:n}^*\in\explfrommog[\fromstates{s-1}]{\hat{x}}{s-2}{s-1}$.

  We now have a $q^*\in\{k,\dots,s-1\}$ for which both $\xhatstate{q^*:s-1}\oplus\xstate{s:n}^*$ and $\xhatstate{q^*:s-1}\oplus\zstate{s:n}$ belong to $\explfrommog[\fromstates{q^*}]{\hat{x}}{q^*-1}{q^*}$ and for which it holds that $\xhatstate{q}\in\pos_q(\xhatstate{q-1})$ for all $q\in\{k,\dots,q^*-1\}$.
  If $q^*=k$, this concludes the proof.

  If $q^*\in\{k+1,\dots,s-1\}$, notice that $\frommog[\fromstates{k}]{z}{k}$ is built up by repeatedly using Equations~\eqref{eq:criterion-at-k} and \eqref{eq:pop2imprecies}.
  We also know that $\xhatstate{q^*:s-1}\oplus\zstate{s:n}\in\explfrommog[\fromstates{q^*}]{\hat{x}}{q^*-1}{q^*}$ and it is given that $\xhatstate{k:s-1}\oplus\zstate{s:n}$ belongs to $\frommog[\fromstates{k}]{z}{k}$, which implies that
  \begin{equation*}
    \specialumem{q}{\xhatstate{q:s-1}\oplus\zstate{s:n}}\geq\xhatinxhatumemopt{q}
    \text{ for all $q\in\{k,\dots,q^*-1\}$.}
  \end{equation*}
  Furthermore, $\specialumem{s-1}{\xhatstate{s-1}\oplus\xstate{s:n}^*}=\xhatumemmax{s-1}$, so $\specialumem{s-1}{\xhatstate{s-1}\oplus\xstate{s:n}^*}\geq\specialumem{s-1}{\xhatstate{s-1}\oplus\zstate{s:n}}$ by Equation~\eqref{eq:alphabetamax}. Equation~\eqref{eq:alpharecurs} then tells us that
  \begin{equation*}
    \specialumem{t}{\xhatstate{t:s-1}\oplus\xstate{s:n}^*}\geq\specialumem{t}{\xhatstate{t:s-1}\oplus\zstate{s:n}}
    \text{  for all $t\in\{k,\dots,s-1\}$},
  \end{equation*}
  so we know that
  \begin{equation*}
    \specialumem{q}{\xhatstate{q:s-1}\oplus\xstate{s:n}^*}\geq\xhatinxhatumemopt{q}
    \text{ for all $q\in\{k,\dots,q^*-1\}$}.
  \end{equation*}
  This implies (since $\frommog[\fromstates{k}]{z}{k}$ is built up by repeatedly using Equations~\eqref{eq:criterion-at-k} and~\eqref{eq:pop2imprecies} and because $\xhatstate{q^*:s-1}\oplus\xstate{s:n}^*$ is an element of $\explfrommog[\fromstates{q^*}]{\hat{x}}{q^*-1}{q^*}$) that the sequence $\xhatstate{k:s-1}\oplus\fromxstate{s}^*$ belongs to $\frommog[\fromstates{k}]{z}{k}$, which concludes the proof.
\end{proof}

\begin{lemma}\label{stel:argmaxalpha}
  Consider any $s\in\{1,\dots,n\}$, $\xhatstate{s-1}\in\states{s-1}$ and $\fromxstate{s}^*\in\fromstates{s}$.
  If $\specialumem{s-1}{\xhatstate{s-1}\oplus\xstate{s:n}^*}=\xhatumemmax{s-1}$, then $\fromxstate{s}^*\in\explfromopt[\fromstates{s}]{\hat{x}}{s-1}{s}$.
\end{lemma}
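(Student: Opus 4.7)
My plan is to prove the statement by backward induction on $s$, the inductive ingredient being a self-strengthening \emph{argmax-propagation} observation.

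The key setup step runs as follows. Unfolding Eq.~\eqref{eq:alpharecurs} on the left and Eq.~\eqref{eq:alphamaxrecurs} on the right, and dividing through by the strictly positive factors $\xhatinoutcupr[\singout{s-1}]{s-1}$ and $\xhatinstatecupr[\singxstate{s}^*]{s}{s-1}$ (both positive by the assumption~\eqref{eq:assumption}), the hypothesis $\specialumem{s-1}{\xhatstate{s-1}\oplus\xstate{s:n}^*}=\xhatumemmax{s-1}$ forces both $\fromxumemstar{s}=\xumemmaxstar{s}$ and $\specialumem{s}{\xstate{s}^*\oplus\xstate{s+1:n}^*}=\xumemmaxstar{s}$. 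The latter equality is exactly the hypothesis of the lemma applied to the shorter tail $\xstate{s+1:n}^*$ starting from $\xstate{s}^*$, so the induction hypothesis immediately yields that $\xstate{s+1:n}^*$ belongs to the set $\optim(\fromstates{s+1}\vert\xstate{s}^*,\fromout{s+1})$ of maximal subsequences launched from $\xstate{s}^*$.

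With that in hand, I would invoke criterion~\eqref{eq:criterion-at-k} and split the goal $\fromxstate{s}^*\in\explfromopt[\fromstates{s}]{\hat{x}}{s-1}{s}$ into two parts: \textbf{(a)} $\fromxstate{s}^*\in\explfrommog[\fromstates{s}]{\hat{x}}{s-1}{s}$, and \textbf{(b)} $\fromxumemstar{s}\geq\explspecialumemopt{s}{\xstate{s}^*}{\xhatstate{s-1}}$. Part (a) is immediate from the case split in~\eqref{eq:pop2imprecies}: if $\xstate{s}^*\notin\pos_s(\xhatstate{s-1})$, then $\fromxstate{s}^*\in\xstate{s}^*\oplus\states{s+1:n}\subseteq\explfrommog[\fromstates{s}]{\hat{x}}{s-1}{s}$; otherwise the propagated-argmax/induction hypothesis places $\fromxstate{s}^*$ in $\xstate{s}^*\oplus\optim(\fromstates{s+1}\vert\xstate{s}^*,\fromout{s+1})$. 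The base case $s=n$ reduces directly to part (b) through criterion~\eqref{eq:criterion-at-n}.

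The real obstacle is part (b). Combining the definition~\eqref{eq:alphaopt} of $\umem^{\mathrm{opt}}_s$, the definition~\eqref{eq:threshold} of $\theta_s$, and positive homogeneity~\ref{C2}, the inequality to be shown is equivalent to $\xumemmaxstar{s}\geq\xlmemmax{s}\theta_s(\xstate{s}^*,\xstate{s}\vert\xhatstate{s-1})$ for every $\xstate{s}\neq\xstate{s}^*$. Argmax-propagation supplies the chain $\xumemmaxstar{s}\xhatinstatecupr[\singxstate{s}^*]{s}{s-1}\geq\xumemmax{s}\xhatinstatecupr[\singxstate{s}]{s}{s-1}\geq\xlmemmax{s}\xhatinstateclpr[\singxstate{s}]{s}{s-1}$, and coupling this with the explicit formula~\eqref{eq:minimal-threshold} for $\theta_s$---valid thanks to the $2$-monotonicity of the natural extensions of the local assessments noted in Section~\ref{subsec:comments}---closes the argument by rearrangement.
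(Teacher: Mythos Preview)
Your argmax-propagation step and the inductive handling of part~(a) are fine, and there is no circularity in invoking criterion~\eqref{eq:criterion-at-k} here. The problem is part~(b). Your closing inequality uses Equation~\eqref{eq:minimal-threshold}, i.e.\ $\theta_s(\xhatstate{s},\xstate{s}\vert\xhatstate{s-1})=\xhatinstateclpr[\singxstate{s}]{s}{s-1}/\xhatinstatecupr[\singxhatstate{s}]{s}{s-1}$, but that identity is \emph{not} part of the standing assumptions: Section~\ref{subsec:comments} derives it only for the $2$-monotone natural extensions that arise when the local models are specified \emph{solely} through lower and upper probabilities. For a general separately coherent $\stateclpr{s}{s-1}$, the threshold in~\eqref{eq:threshold} can be strictly larger than that ratio (superadditivity~\ref{C3} only gives the ratio as a lower bound for $\theta_s$), and then your chain $\xumemmaxstar{s}\xhatinstatecupr[\singxstate{s}^*]{s}{s-1}\geq\xlmemmax{s}\xhatinstateclpr[\singxstate{s}]{s}{s-1}$ no longer delivers $\xumemmaxstar{s}\geq\xlmemmax{s}\,\theta_s(\xstate{s}^*,\xstate{s}\vert\xhatstate{s-1})$. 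So as written, your proof establishes the lemma only under an additional $2$-monotonicity hypothesis that the statement does not make.

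By contrast, the paper's proof avoids thresholds, induction, and criterion~\eqref{eq:criterion-at-k} altogether: from the maximality hypothesis it gets $\xhatinjointcupr[\indsing{\fromxstate{s}^*}\indfromout{s}]{s}{s-1}\geq\xhatinjointcupr[\indfromzstate{s}\indfromout{s}]{s}{s-1}$ for all $\fromzstate{s}$, and then a single application of superadditivity~\ref{C3} and conjugacy yields $\xhatinjointclpr[\indfromout{s}(\indfromzstate{s}-\indsing{\fromxstate{s}^*})]{s}{s-1}\leq0$, which is exactly the defining condition~\eqref{eq:optimals}. That route works for arbitrary separately coherent local models and is considerably shorter.
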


\begin{proof}
  If $\specialumem{s-1}{\xhatstate{s-1}\oplus\xstate{s:n}^*}=\xhatumemmax{s-1}$, then we know by Equation~\eqref{eq:alphabetamax} that
  \begin{equation*}
    \specialumem{s-1}{\xhatstate{s-1}\oplus\xstate{s:n}^*}\geq\specialumem{s-1}{\xhatstate{s-1}\oplus\zstate{s:n}}
    \text{  for all $\fromzstate{s}\in\fromstates{s}$},
  \end{equation*}
  and therefore by Equations~\eqref{eq:alpha} and \eqref{eq:upper:state:out:mass} that
  \begin{equation*}
    \xhatinoutcupr[\singout{s-1}]{s-1}\xhatinjointcupr[\indsing{\fromxstate{s}^*}\indfromout{s}]{s}{s-1}
    \geq\xhatinoutcupr[\singout{s-1}]{s-1}\xhatinjointcupr[\indfromzstate{s}\indfromout{s}]{s}{s-1}.
  \end{equation*}
  Together with the positivity assumption~\eqref{eq:assumption}, this implies that
  \begin{equation}\label{eq:tussenresultaatje}
    \xhatinjointcupr[\indsing{\fromxstate{s}^*}\indfromout{s}]{s}{s-1}
    \geq\xhatinjointcupr[\indfromzstate{s}\indfromout{s}]{s}{s-1}
    \text{  for all $\fromzstate{s}\in\fromstates{s}$}.
  \end{equation}
  We now by \ref{C3} that 
  \begin{equation*}
    \xhatinjointclpr[-\indsing{\fromxstate{s}^*}\indfromout{s}]{s}{s-1}
    \geq
    \xhatinjointclpr[\indfromout{s}(\indfromzstate{s}-\indsing{\fromxstate{s}^*})]{s}{s-1}+
    \xhatinjointclpr[-\indfromzstate{s}\indfromout{s}]{s}{s-1}
  \end{equation*}
  which by conjugacy implies that
  \begin{equation*}
    \xhatinjointclpr[\indfromout{s}(\indfromzstate{s}-\indsing{\fromxstate{s}^*})]{s}{s-1}
    \leq
    \xhatinjointcupr[\indfromzstate{s}\indfromout{s}]{s}{s-1}-
    \xinjointcupr[\indsing{\fromxstate{s}^*}\indfromout{s}]{s}{s-1}.
  \end{equation*}
  Using Equation~\eqref{eq:tussenresultaatje}, we see that $\xhatinjointclpr[\indfromout{s}(\indfromzstate{s}-\indsing{\fromxstate{s}^*})]{s}{s}\leq0$ for all $\fromzstate{s}\in\fromstates{s}$, which concludes the proof, since $\fromxstate{s}^*\in\explfromopt[\fromstates{s}]{\hat{x}}{s-1}{s}$ by Equation~\eqref{eq:optimals}.
\end{proof}

\end{document}